\def\eqref#1{equation~\ref{#1}}
\def\1{\bm{1}}
\DeclareMathAlphabet{\mathsfit}{\encodingdefault}{\sfdefault}{m}{sl}
\SetMathAlphabet{\mathsfit}{bold}{\encodingdefault}{\sfdefault}{bx}{n}
\newcommand{\R}{\mathbb{R}}
\def\btheta{\boldsymbol{\theta}}
\def\desc{\mathrm{desc}}
\newtheorem{theorem}{Theorem}
\newtheorem{lemma}{Lemma}
\newcommand{\shnote}[1]{{\xxnote{SH}{blue}{#1}}}
\newcommand{\sonote}[1]{{\xxnote{SO}{green}{#1}}}
\newcommand{\vbnote}[1]{{\xxnote{VB}{cyan}{#1}}}
\newcommand{\btnote}[1]{{\xxnote{BT}{purple}{#1}}}
\newcommand{\sznote}[1]{{\xxnote{SZ}{magenta}{#1}}}
\newcommand{\snnote}[1]{{\xxnote{SN}{orange}{#1}}}
\newcommand{\xxnote}[3]{}
  \renewcommand{\xxnote}[3]{\color{#2}{}}
\title{Soft Quality-Diversity Optimization}
\author{Saeed Hedayatian\textsuperscript{1} \& Stefanos Nikolaidis\textsuperscript{1,2} \\
\textsuperscript{1}University of Southern California \\
  \textsuperscript{2}Archimedes AI \\
\texttt{\{saeedhed,nikolaid\}@usc.edu} \\
}
\begin{document}

\maketitle

\begin{abstract}
Quality-Diversity (QD) algorithms constitute a branch of optimization that is concerned with discovering a diverse and high-quality set of solutions to an optimization problem.
Current QD methods commonly maintain diversity by dividing the behavior space into discrete regions, ensuring that solutions are distributed across different parts of the space.
The QD problem is then solved by searching for the best solution in each region.
This approach to QD optimization poses challenges in large solution spaces, where storing many solutions is impractical, and in high-dimensional behavior spaces, where discretization becomes ineffective due to the curse of dimensionality.
We present an alternative framing of the QD problem, called \emph{Soft QD}, that sidesteps the need for discretizations.
We validate this formulation by demonstrating its desirable properties, such as monotonicity, and by relating its limiting behavior to the widely used QD Score metric.
Furthermore, we leverage it to derive a novel differentiable QD algorithm, \emph{Soft QD Using Approximated Diversity (SQUAD)}, and demonstrate empirically that it is competitive with current state of the art methods on standard benchmarks while offering better scalability to higher dimensional problems. Source code is available at \texttt{\href{https://github.com/conflictednerd/soft-qd}{https://github.com/conflictednerd/soft-qd}}

\end{abstract}

\section{Introduction}
Optimization in machine learning is typically cast as the search for a single solution that maximizes performance with respect to some objective.
Quality-diversity (QD) optimization \citep{qdscore,chatzilygeroudis2021quality} challenges this paradigm
by instead discovering a collection of solutions that are both high-performing and behaviorally diverse.
This perspective is especially powerful in domains with multiple useful optima, or where robustness and user choice matter as much as raw performance.
To illustrate, consider the task of painting a portrait.
A traditional optimizer might aim for the single image that is most similar to the subject.
Conversely, a QD optimizer not only aims for high fidelity, but also explores a \emph{behavior space} that captures stylistic dimensions like color palette, brushstroke texture, and degree of abstraction, yielding a set of portraits that all resemble the subject while spanning a spectrum of artistic expressions.
Such diversity is valuable for human selection and for escaping the limitations of optimizers~\citep{qd_theory} and imperfect objectives.

In recent years, QD optimization has grown from its roots in evolutionary computation into a broadly applicable machine learning paradigm.
In reinforcement learning, QD has generated diverse policies that facilitate exploration and improve robustness, in both single \citep{qdrl,dcgme,ppga} and multi-agent \citep{mixme} settings.
In the context of large foundation models, QD has been adopted for red-teaming and safety analysis \citep{rainbow_teaming,red_teaming} as well as diverse content generation \citep{qdaif,qdhf}.
Beyond these, QD has found applications in scenario generation \citep{dsage,hri_qd,ca_env_generation_qd}, creative design \citep{qd_art,creative_design}, engineering \citep{design_space_qd,fluid_dynamics}, robotics \citep{qd_grasping, pose_estimation_qd}, and scientific discovery \citep{protein_design,crystal_structure}.

QD algorithms typically operate by partitioning the behavior space into discrete cells and seeking the best solution in each cell (a tessellation, together with the stored solutions, is often referred to as the \emph{archive}).
Progress on this objective is commonly measured by the QD Score \citep{qdscore}, which sums the performance of the best solutions across all occupied cells, thus capturing both quality and coverage.
This approach has fueled much of the progress in QD, including recent advances that incorporate surrogate models \citep{surrogate_2, surrogate}, gradient information \citep{pga_me,dqd,cma_mae}, and alternative archive structures \citep{cvt, cma_mae, centroids_gen}.
Yet, the formulation also presents two fundamental limitations.
First, the non-differentiable nature of tessellations precludes direct optimization using gradient-based optimizers that dominate modern machine learning, except through heuristics \citep{dqd,pga_me}  \snnote{I am not sure what you mean by heuristics here; you mean approximation? In a sense CMA-ES approximaates the natural gradient of the objective}\shnote{I was thinking about gradient arborescence. I might be wrong here, but to my knowledge CMA-MEGA uses CMA-ES in the objective-measure space, so I'm not sure if --strictly speaking-- it is still approximating the natural gradient of the QD Score (which is the case with CMA-ME).}
Second, discretizing the behavior space suffers from the curse of dimensionality in high-dimensional setting, as either the number or the volume of cells will grow exponentially, forcing methods to rely on dimensionality reduction techniques such as PCA or autoencoders \citep{taxons,aurora,autoqd}.
As a result, QD optimization remains hindered in high-dimensional, gradient-rich machine learning domains, despite its promise.

We address these challenges by rethinking the formulation of QD optimization itself.
We introduce \emph{Soft QD Score} as a new objective for QD optimization that measures how well a collection of solutions cover the behavior space with high-quality solutions, without discretizing the behavior space.
Building on it, we derive \emph{SQUAD}, a novel algorithm that leverages a tractable lower bound of the Soft QD score to enable end-to-end differentiable optimization. This approach has an intuitive interpretation as finding an equilibrium between attractive forces, which drive solutions toward higher quality, and repulsive forces, which spread them across the behavior space.
Through experiments on both established and newly designed QD benchmarks, we demonstrate that SQUAD broadens the applicability of QD and provide further insights into its properties.

Our contributions are threefold: 1. We introduce Soft QD, a new formulation of QD optimization and analyze its theoretical properties. 2. We develop SQUAD, a differentiable QD algorithm derived from the aforementioned formulation. 3. We evaluate SQUAD on multiple benchmarks, showing its effectiveness in high-dimensional and complex optimization settings.

\section{Background}
The quality diversity (QD) problem aims to find a collection of high-quality solutions that are diverse in their behavior. 
A QD problem is defined by a solution space $\Theta$, an \emph{objective or quality function} $f: \Theta \to \R$ which measures a solution's quality, and a \emph{behavior descriptor function} $\desc: \Theta \to \mathcal{B}$ that maps each solution to a point in the \emph{behavior space} $\mathcal{B}$.
The goal is to discover for each point in $\mathcal{B}$ a high-quality solution that exhibits that specific behavior.
We can formalize this objective as finding a set of solutions $\btheta = \{\theta_b\}_{b\in \mathcal{B}}$ that maximizes the integral of their quality over the behavior space, $\int_{\mathcal{B}} f(\theta_b) \,\mathrm{d}b$.
The problem is referred to as Differentiable Quality-Diversity (DQD) \citep{dqd} when both the objective and descriptor functions are differentiable. 

Since the behavior space $\mathcal{B}$ is continuous, QD algorithms often partition it into $n$ cells, \mbox{$\mathcal{A} =\{c_1, \dots, c_n\}$}, known as an \emph{archive} or \emph{tessellation}.
The QD objective is then framed as finding a high-quality solution for each cell.
This is captured by the \emph{QD Score}, which is the sum of the maximum quality found within each cell:
\begin{equation}
\max_{\btheta} \mathrm{QD\,Score}_{\mathcal{A}} (\btheta) = \sum_{c\in \mathcal{A}} \max \{f(\theta): \theta \in \btheta,\, \desc (\theta) \in c\}.
\end{equation}
Discretizing the behavior space introduces a fundamental challenge in high dimensions due to the curse of dimensionality.
Grid archives (e.g.,~\citet{map_elites}) divide the space evenly along each dimension, which makes the number of cells grow exponentially with the dimensionality of $\mathcal{B}$. This makes it infeasible to maintain a fine-grained grid when $\mathcal{B}$ is high-dimensional.
Centroidal Voronoi Tessellation (CVT) archives \citep{cvt} address this by fixing the number of cells and defining them around a set of centroids.
Each cell contains all points that are closer to its assigned centroid than to any other centroid, creating an almost uniform partitioning of the space.
While this avoids the exponential increase of the number of cells, the volume of each CVT cell still grows exponentially with dimensionality.
Large cells make it difficult to explore new regions by building on existing solutions (which is commonly done in QD methods), since reaching a different cell would likely require substantial changes to a current solution.
Consequently, both discretization strategies face practical limitations in high-dimensional behavior spaces, either requiring an infeasibly large number of cells or forcing exploration across cells that are too large to navigate effectively.

\section{Soft Quality-Diversity}
\label{sec:softqd}
To overcome the challenges of discretizing the behavior space, we introduce \emph{Soft QD Score}, an objective for quality-diversity that forgoes tessellations. \sonote{I may suggest switching the order of this sentence to: To overcome..., we proposed a novel objective formulation, the Soft QD Score, that forgoes tessellations. My reasoning is the discretizing and the term "Score" is not that intuitive of a connection. But just a personal preference.}
Conceptually, our approach builds on the view of QD algorithms as a form of ``illumination'' \citep{map_elites}.
We treat each solution as a light source that illuminates the behavior space, where its brightness is proportional to its quality.
A set of solutions is then evaluated based on how well they illuminate the entire behavior space.
This contrasts with traditional approaches, where a discrete cell is considered fully illuminated by its single best occupant.
In Soft QD, solutions contribute to illuminating multiple regions, with their influence decaying smoothly as a function of distance.
Figure~\ref{fig:softqd_vs_archive} illustrates this difference.

\begin{figure}[ht]
    \centering
    \begin{subfigure}[t]{0.35\linewidth}
        \centering
        \includegraphics[width=\linewidth]{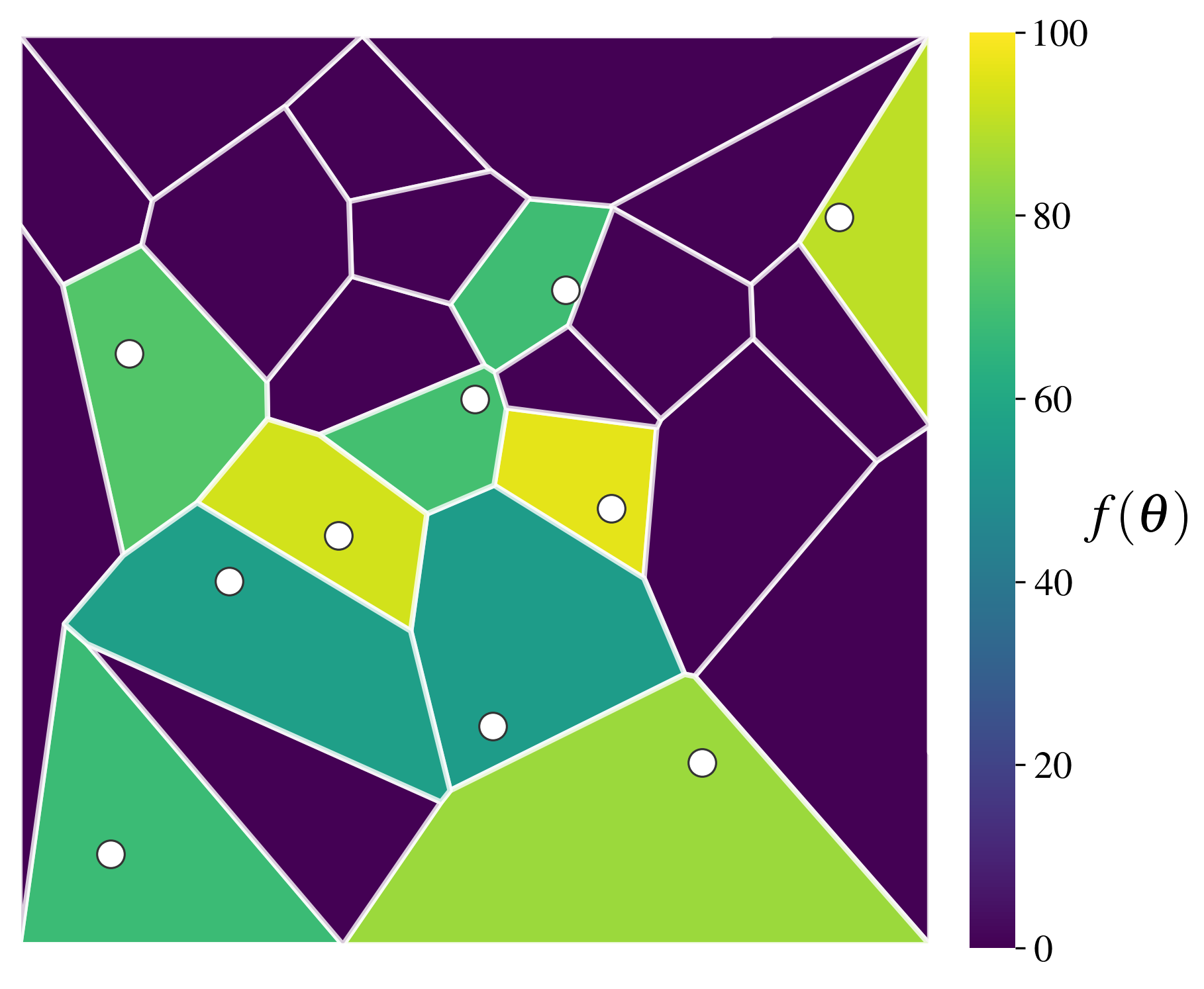}
    \end{subfigure}%
    \hspace{3em} %
    \begin{subfigure}[t]{0.35\linewidth}
        \centering
        \includegraphics[width=\linewidth]{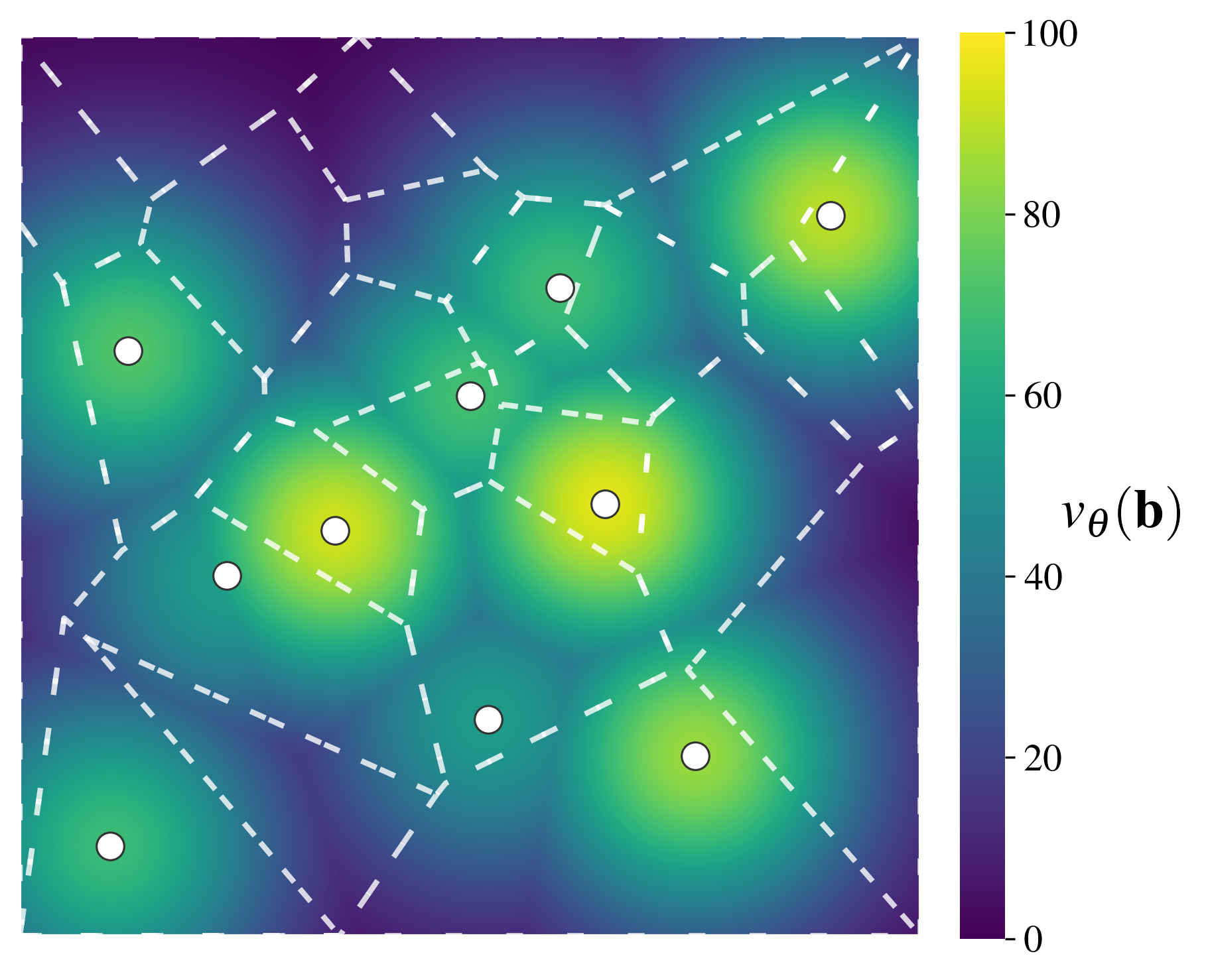}
    \end{subfigure}
\caption{\textbf{Left:} In a discrete archive, each cell is fully illuminated by its highest-quality occupant. \textbf{Right:} In Soft QD, each solution illuminates the area around with an intensity proportional to its quality. The smooth scalar field defined by the behavior value $v_{\btheta}(\mathbf{b})$ is independent of discretization.}
\label{fig:softqd_vs_archive}
\end{figure}

Formally, to assess a population of solutions $\btheta = \{\theta_1, \dots, \theta_N\}$, we first define the \textbf{behavior value $v_{\btheta} (\mathbf{b})$} that it induces at any point $\mathbf{b}$ in the behavior space as
\begin{equation}
v_{\btheta}(\mathbf{b}) = \max_{1 \leqslant n \leqslant N} f_n \exp \left(-\frac{\|\mathbf{b} - \mathbf{b}_n\|^2}{2\sigma^2}\right),
\end{equation}
where $f_n = f(\theta_n)$ and $\mathbf{b}_n=\desc (\theta_n)$ are the quality and behavior descriptors of solution $\theta_n$, respectively, and $\sigma$ is a kernel width parameter.
We use the Gaussian kernel, in line with its standard application in methods like density estimation~\citep{bishop}, to model a smooth, localized field of influence for each solution.
Intuitively, $v_{\btheta}(\mathbf{b})$ measures the quality of the best available solution for a target behavior $\mathbf{b}$, discounted exponentially by its distance in behavior space.
If the population contains a solution whose behavior $\mathbf{b}_n$ perfectly matches $\mathbf{b}$, the behavior value $v_{\btheta}(\mathbf{b})$ will be at least as large as its quality $f_n$.
Conversely, $v_{\btheta}(\mathbf{b})$ approaches zero when there are no high-quality solutions near $\mathbf{b}$.
Figure~\ref{fig:main_fig} illustrates how the scalar field of behavior values changes as the solutions move around in the behavior space.

The total behavior value that a population $\btheta$ induces over the entire behavior space measures its combined quality and diversity.
We call this quantity \textbf{Soft QD Score} and formally define it as:
\begin{equation}
\label{eq:sqd_def}
S(\btheta) = \int_{\mathcal{B}} v_{\btheta} (\mathbf{b}) \, \mathrm{d} \mathbf{b}.
\end{equation}
The term ``Soft'' highlights a key difference from the traditional QD Score.
Instead of a hard assignment of solutions to discrete cells, here each solution continuously contributes to the illumination of the behavior space.
Soft QD Score captures our expectations of a QD solution.
To obtain a high value, the population must contain high-quality solutions spread across the behavior space.
Moreover, adding new solutions to a population will only ever increase the value of the population, and so does increasing the quality of existing solutions.
The following theorem, formally stated and proven in Appendix~\ref{app:sqd_properties}, establishes some of these properties.
Furthermore, this theorem provides additional grounding for Soft QD Score by connecting its limiting behavior to the conventional QD Score. 

\begin{theorem}[Informally stated]
    \label{thm:properties_informal}
    The Soft QD Score, as defined in Eq.~\ref{eq:sqd_def}, satisfies the following properties:
    \begin{description}[style=unboxed, leftmargin=0cm, nosep]
        \item[Monotonicity.] The value is non-decreasing with respect to the addition of new solutions and the improvement of existing solution qualities.
        \item[Submodularity.] The value is a submodular set function.
        \item[Limiting Equivalence.] In the limit as $\sigma \to 0$, the Soft QD Score converges (up to a constant factor) to the traditional QD score on a fine-grained archive.
    \end{description}
\end{theorem}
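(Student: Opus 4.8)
The plan is to prove all three properties by first analyzing the integrand $v_{\btheta}(\mathbf{b})$ pointwise in $\mathbf{b}$ and then lifting each conclusion to $S(\btheta)$ through the integral, which acts as a monotone, linear, non-negative functional. Throughout I would assume $f_n \geq 0$, as is standard for the QD Score.

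For \textbf{monotonicity} I would argue entirely pointwise. Writing $g_n(\mathbf{b}) = \exp(-\|\mathbf{b}-\mathbf{b}_n\|^2/2\sigma^2)$, each kernel is strictly positive, so every term $f_n g_n(\mathbf{b})$ is non-decreasing in $f_n$; hence raising the quality of any solution cannot decrease the pointwise maximum $v_{\btheta}(\mathbf{b})$. Adjoining a new solution enlarges the index set over which the maximum is taken, which can only increase $v_{\btheta}(\mathbf{b})$. Integrating these pointwise inequalities over $\mathcal{B}$ gives monotonicity of $S$.

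For \textbf{submodularity} the key observation is the elementary fact that for fixed reals the set function $A \mapsto \max_{n \in A} c_n$ is submodular: the marginal gain of adding $e$ to $A$ is $(c_e - \max_{n\in A} c_n)^+$, which is non-increasing as $A$ grows since $\max_{n\in A} c_n \leq \max_{n\in B} c_n$ whenever $A \subseteq B$. Applying this with $c_n = f_n g_n(\mathbf{b})$ shows that $A \mapsto v_A(\mathbf{b})$ is submodular for every fixed $\mathbf{b}$. Because submodularity is preserved under non-negative linear combinations, and hence under integration against the non-negative measure $d\mathbf{b}$, the diminishing-returns inequality integrates to submodularity of $S$.

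The \textbf{limiting equivalence} is where the real work lies, and I expect it to be the main obstacle. After normalizing by the single-bump mass $(2\pi\sigma^2)^{d/2}$, the plan is to partition $\mathcal{B}$ into dominance regions $R_n = \{\mathbf{b}: f_n g_n(\mathbf{b}) \geq f_m g_m(\mathbf{b}) \ \forall m\}$, so that $S(\btheta) = \sum_n f_n \int_{R_n} g_n(\mathbf{b})\, d\mathbf{b}$. Taking logarithms, the boundary $\{f_n g_n = f_m g_m\}$ is the locus $\|\mathbf{b}-\mathbf{b}_n\|^2 - \|\mathbf{b}-\mathbf{b}_m\|^2 = 2\sigma^2 \log(f_n/f_m)$, which converges as $\sigma \to 0$ to the perpendicular bisector of $\mathbf{b}_n$ and $\mathbf{b}_m$; hence $R_n$ converges to the Voronoi cell of $\mathbf{b}_n$. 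For distinct descriptors each $\mathbf{b}_n$ lies in the interior of its cell, so for small enough $\sigma$ the region $R_n$ contains a fixed ball $B(\mathbf{b}_n, \rho)$ with $\rho > 0$ independent of $\sigma$.

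The final step is a Laplace-type concentration estimate: the mass of $g_n$ outside $B(\mathbf{b}_n,\rho)$ equals $(2\pi\sigma^2)^{d/2}\,\Pr[\|Z\| \geq \rho/\sigma]$ for a standard Gaussian $Z$, which is exponentially small relative to $(2\pi\sigma^2)^{d/2}$. Thus $\int_{R_n} g_n\, d\mathbf{b} = (2\pi\sigma^2)^{d/2}(1 + o(1))$, and summing gives $S(\btheta)/(2\pi\sigma^2)^{d/2} \to \sum_n f_n$. I would then identify $\sum_n f_n$ with the QD Score on a sufficiently fine archive: once the cell diameter is smaller than $\min_{m\neq n}\|\mathbf{b}_m - \mathbf{b}_n\|$, each occupied cell holds exactly one descriptor, so the cell-wise maxima sum to $\sum_n f_n$. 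The subtleties to handle carefully are boundary effects when $\mathcal{B}$ is bounded rather than all of $\mathbb{R}^d$, and the degenerate case of coincident descriptors (where the corresponding bumps share a center and the integral concentrates on $\max$ of their heights, matching the cell-wise maximum); both can be absorbed into the same concentration bound.
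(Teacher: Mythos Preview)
Your proofs of monotonicity and submodularity match the paper's essentially line-for-line: both establish the inequalities pointwise for $v_{\btheta}(\mathbf{b})$ (the marginal gain $(c_e-\max_{n\in A}c_n)^+$ is exactly what the paper computes) and then integrate.

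For the limiting equivalence your argument is correct but takes a more geometric route than the paper. The paper avoids the dominance-region decomposition entirely and uses a plain sandwich: the upper bound follows from $\max_n f_n g_n(\mathbf{b}) \leq \sum_n f_n g_n(\mathbf{b})$, which already gives $L(\sigma)\leq\sum_n f_n$ for every $\sigma$; the lower bound restricts the integral to the union of disjoint balls $B_n=B(\mathbf{b}_n,r)$ with $r=\tfrac12\min_{i\neq j}\|\mathbf{b}_i-\mathbf{b}_j\|$, keeps only the $n$-th term of the max on each $B_n$, and obtains $S(\btheta)\geq\sum_n f_n\int_{B_n}g_n$, whose scaled limit is $\sum_n f_n$ by the same Gaussian-concentration fact you invoke. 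Your approach, writing $S(\btheta)=\sum_n f_n\int_{R_n}g_n$ exactly and showing each $R_n$ eventually contains a fixed ball, bundles both directions into one step but requires tracking how the shifted bisectors $\|\mathbf{b}-\mathbf{b}_n\|^2-\|\mathbf{b}-\mathbf{b}_m\|^2=2\sigma^2\log(f_n/f_m)$ move with $\sigma$, a piece of geometry the paper simply sidesteps. The paper also assumes $\mathcal{B}=\mathbb{R}^d$ and distinct descriptors outright, so the bounded-domain and coincident-descriptor subtleties you flag are not treated there; in that sense your plan is slightly more careful.
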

\sonote{In general, I think I am confused about how b is selected as it seems like it is an important factor for computing QD score. After reading, I got the impression that b is the target behavior we care about and we use that to assess the quality of the existing solutions. If that is correct, then it is only clear later on in the text and was not the main storyline.}

\section{SQUAD: Soft QD Using Approximated Diversity}
\label{sec:squad}
\begin{figure}[t]
    \centering
    \includegraphics[width=0.9\linewidth]{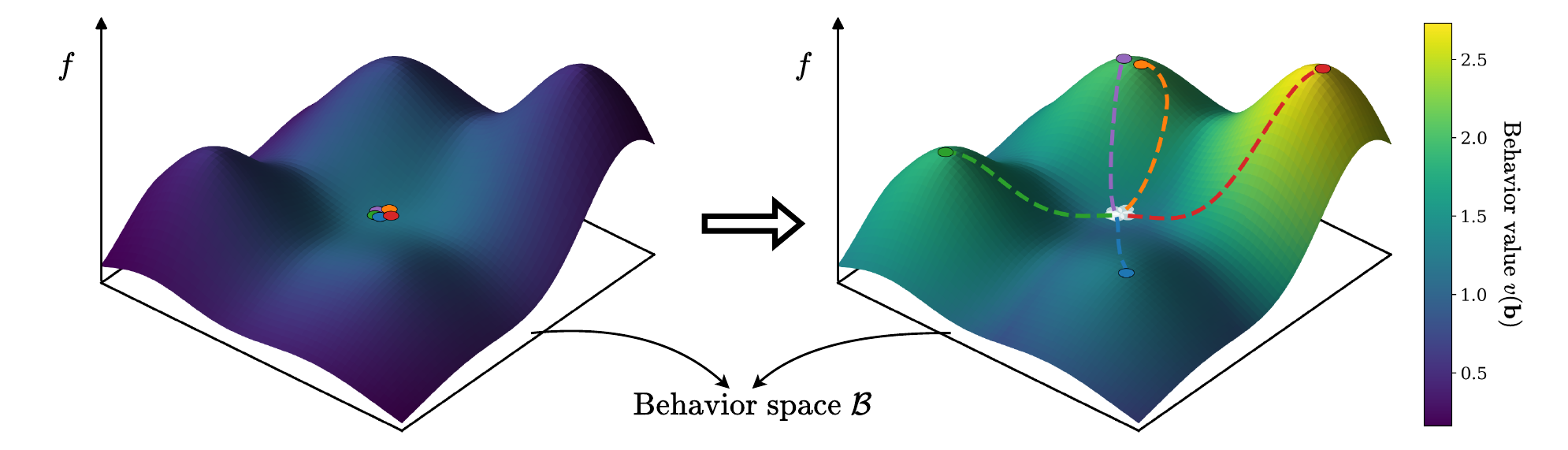}
    \caption{\textbf{Optimizing Soft QD Score with SQUAD.}
    The plots visualize the behavior value function, $v_{\btheta}(\mathbf{b})$, induced by a population of five solutions.
    The bottom plane represents the behavior space $\mathcal{B}$, the height corresponding to solution quality, $f$, and the colored surface shows the induced behavior value $v_{\btheta}$.
    Initially (left), a cluster of low-quality solutions induces a low behavior value.
    As SQUAD improves the population (right), the induced behavior value increases in both magnitude and coverage, leading to a higher Soft QD Score.
    }
    \label{fig:main_fig}
\end{figure}
Directly maximizing the Soft QD Score of a population, $S(\btheta)$, is challenging as it involves an integral over the behavior space.
We can, however, maximize a tractable lower bound.
Theorem~\ref{thm:1} establishes such a bound, which forms the basis of our algorithm. 
\begin{theorem}
    \label{thm:1}
    Given a population $\btheta = \{\theta_n\}_{n=1}^{N}$ with qualities $\{f_n\}_{n=1}^{N}$ and behavior descriptor vectors $\{\mathbf{b}_n\}_{n=1}^{N}$ in behavior space $\mathcal{B} = \mathbb{R}^d$, its Soft QD Score $S(\btheta)$ can be approximated by a lower bound $\Tilde{S}(\btheta)$ defined as:
    \begin{equation}
        \Tilde {S} (\btheta) =  (2\pi \sigma^2)^{\frac{d}{2}}\left[\sum_{n=1}^{N} f_n - \sum_{1 \leqslant i < j \leqslant N} \sqrt{f_i f_j} \exp \left(- \frac{\|\mathbf{b}_i - \mathbf{b}_j\|^2}{8\sigma^2} \right) \right]
    \end{equation}
    A proof is provided in Appendix~\ref{app:approximation_thm}.
\end{theorem}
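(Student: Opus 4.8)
The plan is to bound the integrand $v_{\btheta}(\mathbf{b})$ pointwise from below by a quantity whose integral over $\mathcal{B}=\R^d$ is available in closed form, and then integrate the inequality. Write $g_n(\mathbf{b}) = f_n\exp(-\|\mathbf{b}-\mathbf{b}_n\|^2/(2\sigma^2))$, so that $v_{\btheta}(\mathbf{b}) = \max_{1\leqslant n\leqslant N} g_n(\mathbf{b})$, and assume throughout that the qualities are nonnegative, $f_n\geqslant 0$ (otherwise $\sqrt{f_if_j}$ is not even real); this is the standard normalization in QD. The first step is a purely algebraic lower bound on a maximum of nonnegative numbers: for any $a_1,\dots,a_N\geqslant 0$,
\begin{equation}
\max_n a_n \;\geqslant\; \sum_{n=1}^N a_n - \sum_{1\leqslant i<j\leqslant N}\min(a_i,a_j).
\end{equation}
I would prove this by relabeling so that $a_1\geqslant a_2\geqslant\dots\geqslant a_N$, noting that then $\min(a_i,a_j)=a_j$ whenever $i<j$, so the right-hand side collapses to $a_1 + \sum_{j\geqslant 2}(2-j)\,a_j$, which is at most $a_1=\max_n a_n$ because every coefficient $(2-j)$ is nonpositive for $j\geqslant 2$.

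Applying this with $a_n = g_n(\mathbf{b})$ and then using the elementary inequality $\min(x,y)\leqslant\sqrt{xy}$ for $x,y\geqslant 0$ yields the pointwise bound
\begin{equation}
v_{\btheta}(\mathbf{b}) \;\geqslant\; \sum_{n=1}^N g_n(\mathbf{b}) - \sum_{1\leqslant i<j\leqslant N}\sqrt{g_i(\mathbf{b})\,g_j(\mathbf{b})}.
\end{equation}
Integrating over $\R^d$ and exchanging the (finite) sums with the integral reduces the theorem to two Gaussian integrals. The first is immediate, $\int_{\R^d} g_n\,\mathrm{d}\mathbf{b} = f_n(2\pi\sigma^2)^{d/2}$, which produces the leading term $(2\pi\sigma^2)^{d/2}\sum_n f_n$.

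For the cross terms I would complete the square in the exponent. Since $\sqrt{g_i(\mathbf{b})g_j(\mathbf{b})} = \sqrt{f_if_j}\,\exp\!\big(-(\|\mathbf{b}-\mathbf{b}_i\|^2 + \|\mathbf{b}-\mathbf{b}_j\|^2)/(4\sigma^2)\big)$ and
\begin{equation}
\|\mathbf{b}-\mathbf{b}_i\|^2 + \|\mathbf{b}-\mathbf{b}_j\|^2 = 2\Big\|\mathbf{b} - \tfrac{\mathbf{b}_i+\mathbf{b}_j}{2}\Big\|^2 + \tfrac{1}{2}\|\mathbf{b}_i-\mathbf{b}_j\|^2,
\end{equation}
the integral factors into a constant depending only on $\|\mathbf{b}_i-\mathbf{b}_j\|$ times a centered Gaussian integral, giving $\int_{\R^d}\sqrt{g_ig_j}\,\mathrm{d}\mathbf{b} = (2\pi\sigma^2)^{d/2}\sqrt{f_if_j}\,\exp(-\|\mathbf{b}_i-\mathbf{b}_j\|^2/(8\sigma^2))$. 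Assembling the two contributions reproduces exactly $\Tilde{S}(\btheta)$, and since every step was either an equality or an inequality in the correct direction, we conclude $S(\btheta)\geqslant\Tilde{S}(\btheta)$.

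The only genuinely creative step is the pointwise lower bound on the maximum; everything downstream is routine Gaussian calculus. I expect the main subtlety to be choosing a bound that is simultaneously useful and integrable: a sharper Bonferroni-type bound retaining higher-order overlap terms would not integrate in closed form, while discarding the overlaps entirely would be too loose to serve as a meaningful objective. I would also make sure the nonnegativity assumption $f_n\geqslant 0$ is stated explicitly, as it is precisely what makes both $\sqrt{g_ig_j}$ real and the $\min\leqslant\sqrt{\,\cdot\,}$ step valid.
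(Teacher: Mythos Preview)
Your proposal is correct and follows essentially the same route as the paper: a second-order Bonferroni-type lower bound on the maximum, then the geometric-mean upper bound $\min(x,y)\leqslant\sqrt{xy}$ on the pairwise overlap terms, followed by the same completion of the square to evaluate the Gaussian integrals. The only cosmetic difference is that the paper first writes out the full maximum--minimums (inclusion--exclusion) identity and then truncates it, whereas you prove the $K=2$ Bonferroni inequality directly and apply both bounds pointwise before integrating; the mathematical content is identical.
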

\sonote{Can you add a one-line intuition of this lower-bound?}

Our algorithm, \textbf{S}oft \textbf{Q}D \textbf{U}sing \textbf{A}pproximated \textbf{D}iversity (SQUAD), iteratively improves a randomly initialized population of $N$ solutions by updating its constituent solutions to maximize this lower bound.
For brevity, we drop the leading constant $(2\pi \sigma^2)^{\frac{d}{2}}$ as it does not affect the optima. Furthermore, we rename $8\sigma^2$ as $\gamma^2$ which, as we shall see in Section~\ref{sec:qd_tradeoff}, controls the quality-diversity trade-off.
Therefore, with some slight abuse of notation, we define the SQUAD objective $\Tilde{S} (\btheta)$ as:
\begin{equation}
        \label{eq:squad}
        \Tilde {S} (\btheta) = \sum_{n=1}^{N} f_n - \sum_{1 \leqslant i < j \leqslant N} \sqrt{f_i f_j} \exp \left(- \frac{\|\mathbf{b}_i - \mathbf{b}_j\|^2}{\gamma^2} \right)
\end{equation}
Assuming that the quality and behavior descriptor functions ($f$ and $\desc$) are differentiable, this objective will also be fully differentiable with respect to the solutions' parameters.
Hence, we can use a modern optimizer like Adam~\citep{adam} to iteratively update a population to improve its combined quality and diversity.
This objective also has a remarkably simple interpretation.
Essentially, it is composed of two summation terms:
\begin{itemize}[leftmargin=*, nosep]
    \item A \textbf{quality term} $\sum f_n$ which encourages all solutions to have higher qualities.
    \item A \textbf{diversity term} that acts as a pairwise repulsion, penalizing solutions that are behaviorally close.
\end{itemize}
The diversity term penalizes behavioral similarity through a sum over solution pairs.
Each pair's penalty is the product of two components: the geometric mean of their qualities, $\sqrt{f_i f_j}$, and an exponential term that increases with their proximity.
The combination of these two terms heavily penalizes high-quality solutions that are behaviorally similar.
Notably, the geometric mean term discounts the similarity penalty for low-quality solutions,
allowing them to first prioritize quality optimization before gradually shifting to optimize for behavioral diversity as qualities increase.

The presence of pairwise interactions in the diversity term is a direct consequence of the second-order approximation we used to derive the bound in Theorem~\ref{thm:1}.
Although higher-order interactions between triplets and larger groups of solutions also exist in the full integral, our approximation considers only the most significant pairwise terms.
As our analysis in Appendix~\ref{app:approximation_error} shows, the component of approximation error originating from ignoring the higher-order interactions decreases as the solutions spread out.
Therefore, as the solutions spread out during optimization, ignoring higher-order interactions becomes less detrimental to the accuracy of the approximation.

Building on this objective, we next describe two additional components needed for an efficient algorithm.

\paragraph{Efficient computation with batches and nearest neighbors.}
\label{par:batch_and_nn}
A naive implementation of the SQUAD objective in Eq.~\ref{eq:squad} requires computing and applying $\mathcal{O}(N^2)$ pairwise repulsions, which is computationally prohibitive for large populations.\snnote{there was a similar reasoning in novelty search with local competition}
To overcome this, we only compute the repulsion for each solution from its $k$-nearest neighbors in the behavior space.
\btnote{How do you find the $k$-nearest neighbors? If you assume a high-dimensional behavior space then exact $k$-nearest neighbors takes $O(N \log k)$ time, which keeps the complexity at $O(N^2 \log k)$. I think you'd need an approximate algorithm for large populations in high-dimensional behavior spaces.}\shnote{I changed the wording slightly. The computationally expensive part is computing gradients and applying the pairwise forces, and not the nearest neighbor search. So even if nearest neighbor takes $\mathcal{O}(N \log k)$, the fact that the number of interactions that we need to consider is $\mathcal{O}(Nk)$ instead of $\mathcal{O}(N^2)$ is the what makes a difference.} 
This reduces the number of gradient updates that needs to be calculated at each iteration to $\mathcal{O}(Nk)$.
The omission of farther solutions from the calculation is justified by the exponential decay of the repulsive force with distance, which quickly makes the contribution from distant solutions negligible.
Additionally, to manage the memory and computational cost per gradient step, we update the population in mini-batches rather than all at once.
In Appendix~\ref{app:nn_ablation} and \ref{app:bsz_ablation} we report the results of ablation studies on the choice of $k$ and the batch size, which show the robustness of SQUAD to these hyperparameters.

\paragraph{Handling bounded behavior spaces.}
Our derivation of the SQUAD objective assumes an unbounded behavior space $\mathcal{B} = \R^d$.
However, many problems, including our experiments, have intrinsically bounded behavior descriptors.
While extending the derivation to bounded domains is possible, it leads to a significantly more complex and potentially less stable final objective.
We instead adopt a simpler approach: we transform the bounded space into an unbounded one using the logit function.
Specifically, we map each point in the bounded behavior space $\mathbf{b} \in [0, 1]^d$ to $\mathbf{b}' = \log\frac{\mathbf{b}}{\mathbf{1}-\mathbf{b}} \in \mathbb{R}^d$.
We found this choice to be critical for the success of the algorithm, and ablated it in Appendix~\ref{app:transformation_ablation} to confirm its importance.

Putting these pieces together, Algorithm~\ref{alg:squad} presents a complete pseudocode for SQUAD.

\begin{algorithm}[t]
\caption{Soft QD Using Approximated Diversity (SQUAD)}
\label{alg:squad}
\begin{algorithmic}[1]
\Require Optimizer $O$, learning rate $\eta$, population size $N$, batch size $M$, neighbors $K$, epochs $T_{\text{max}}$, kernel bandwidth $\gamma^2$.
\Require Differentiable evaluation function $\texttt{Eval}(\theta)$ returning quality $f$ and descriptor $\mathbf{b}$.

\State \textbf{Initialize:}
\State Population $\btheta = \{\theta_i\}_{i=1}^N$
\State Evaluations $(F, B) \leftarrow \texttt{Eval}(\btheta)$
\State Optimizer state $\mathcal{S} \leftarrow O.\text{init}(\btheta)$

\For{$t = 1$ to $T_{\text{max}}$}
    \For{each batch of indices $\mathcal{I} \subseteq \{1, \dots, N\}$}
        \State For each $i \in \mathcal{I}$, find neighbor indices $\mathcal{N}_i \leftarrow K\text{-Nearest-Neighbors}(\mathbf{b}_i, B)$
        \State Compute objective function for batch:
        \Statex \hspace{\algorithmicindent} \hspace{\algorithmicindent} $\tilde S_{\mathcal{I}}(\btheta) \triangleq \sum_{i \in \mathcal{I}} f_i - \frac{1}{2} \sum_{i \in \mathcal{I}, j \in \mathcal{N}_i} \sqrt{f_i f_j} \exp\left(-\frac{\|\mathbf{b}_i - \mathbf{b}_j\|^2}{\gamma^2}\right)$
        \State $G_{\mathcal{I}} \leftarrow \nabla_{\btheta_{\mathcal{I}}} 
        \tilde S_{\mathcal{I}}(\btheta)$
        \State Update parameters: $(\btheta_{\mathcal{I}}, \mathcal{S}_{\mathcal{I}}) \leftarrow O.\text{update}(\btheta_{\mathcal{I}}, G_{\mathcal{I}}, \mathcal{S}_{\mathcal{I}}, \eta)$
        \State Update evaluations for the batch: $(F_{\mathcal{I}}, B_{\mathcal{I}}) \leftarrow \texttt{Eval}(\btheta_{\mathcal{I}})$
    \EndFor
\EndFor
\State \Return Final population $\btheta$
\end{algorithmic}
\end{algorithm}
\sonote{Is $L_{I}$ actually $S_{I}$? Also bold descriptor b in the second require? Why t=1? not t=0? You return final population, but you don's show population update step in the for loop.}

\section{Experiments}
Our experiments are designed to comprehensively evaluate SQUAD and answer three key questions\footnote{The source code for all of our experiments is available \href{https://github.com/conflictednerd/soft-qd}{here}.}.
First, how does SQUAD scale with the dimensionality of the behavior space?
Second, how does it navigate the fundamental trade-off between solution quality and population diversity?
Lastly, how does SQUAD compare with state-of-the-art methods on complex, high-dimensional optimization problems?
To answer these questions, we evaluate SQUAD and several baselines on three benchmark domains, each selected to probe one of these specific aspects.

\subsection{Experimental Setup}
\paragraph{Benchmark domains}
We evaluate different facets of QD optimization on three benchmarks, described below and in detail in Appendix~\ref{app:domains}.\\
\textbf{Linear Projection (LP):}
Following \citet{cma_me}, an algorithm must maximize an objective while maintaining diversity in a $d$-dimensional behavior space defined by a linear projection of the solution vector.
We use the multi-modal Rastrigin function \citep{rastrigin}, making this a simple yet challenging testbed for analyzing scalability with respect to $d$.
\\
\textbf{Image Composition (IC):}
Inspired by computational creativity tasks \citep{triangle_art, dqd_art}, this benchmark adjusts the parameters of a set of circles (position, radius, color, transparency) to reconstruct a target image.
A solution's quality is its similarity to the target image, while a $5$-d behavior space encodes properties such as color harmony.
The moderately sized behavior space and challenging optimization make it ideal for analyzing the quality-diversity trade-off.
\\
\textbf{Latent Space Illumination (LSI):}
Based on \citet{dqd}, algorithms search the latent space of StyleGAN2 \citep{stylegan2} to generate images matching a target text prompt.
Following \citet{cma_mae}, we target images of ``Tom Cruise'' while diversifying in age and hair length.
We also propose a harder version with a $7$-d behavior space in which we target images of ``a detective from a noir film''.
Both objective and behavior descriptors use CLIP \citep{clip} embeddings to evaluate the similarity between generated images and given texts.
This serves as our most difficult domain for testing QD algorithms.
\paragraph{Baselines}
We compare SQUAD against state-of-the-art algorithms for high-dimensional and differentiable QD, using the open source pyribs \citep{ribs} implementations: CMA-MAEGA \citep{cma_mae}, CMA-MEGA \citep{dqd}, and Sep-CMA-MAE \citep{sep_cma_mae}.
We also include Gradient-Assisted MAP-Elites (GA-ME), which adapts the policy-gradient-based algorithm PGA-ME \cite{pga_me} to the DQD setting by using direct gradients from the objective function (GA-ME is also similar to the OG-MAP-Elites (line) baseline in \citet{dqd}).
To ensure a fair comparison in high-dimensional behavior spaces, all baselines use Centroidal Voronoi Tesselation (CVT) to discretize the behavior space into a fixed-size archive \cite{cvt}.
In addition to these methods, we also include DNS~\citep{dns} which is a modern variant of novelty search \citep{nslc} used for QD optimization in more complex domains. We also include an improved variant of it that uses gradient-based updates (similar to GA-ME) to complement its regular mutation operator. We denote this variant as DNS-G.
Additional details about the hyperparameters of the algorithms are presented in Appendix~\ref{app:hparams}.

\paragraph{Evaluation metrics}
To provide a thorough analysis of the trade-offs offered by each algorithm, we use several metrics to measure the quality and diversity of the generated populations.
\\
For quantifying diversity, we primarily use the \emph{Vendi Score (VS)} \citep{vendi}, which quantifies the effective number of distinct clusters in a population.
We supplement this with \emph{Coverage} which is the percentage of occupied cells in a fixed CVT archive.
To understand the quality of the generated populations, we report the \emph{Maximum Quality} to assess pure optimization performance and the \emph{Mean Quality} to evaluate the overall quality of solutions across the entire population.
Finally, to compare the overall performances, we use \emph{QD Score} \citep{qdscore} which measures the sum of qualities in a fixed CVT archive and \emph{Quality-weighted Vendi Score (QVS)} \citep{quality_vendi} which extends Vendi Score to account for the quality of a population as well.
A full definition of all of the metrics and further discussion is provided in Appendix~\ref{app:metrics}.

\subsection{Scalability to high-dimensional behavior spaces}
To evaluate SQUAD's scalability to high-dimensional behavior spaces, we conducted three sets of experiments on the LP benchmark, using $4$, $8$, and $16$-dimensional behavior spaces.
As shown in Figure~\ref{fig:rastrigin_results}, methods that leverage gradient information from descriptors (SQUAD, CMA-M(A)EGA) perform much better than methods that do not (Sep-CMA-MAE, GA-ME, DNS) which highlights the difficulty of exploring high-dimensional behavior spaces without gradients.
To assess statistical significance of the results in Figure~\ref{fig:rastrigin_results}, we evaluated algorithm performance per task and metric using Kruskal-Wallis tests (all $p<0.001$) followed by Holm-Bonferroni-corrected Mann-Whitney U tests to compare the best algorithm against the others.
All differences were significant ($p<0.001$) except for CMA-MAEGA on medium ($d=8$) and hard  ($d=16$) tasks for QD Scores, where it was not significantly different from the top-performing algorithms; in the hard domain, this is primarily due to the high variance of CMA-MAEGA.

While CMA-MEGA and CMA-MAEGA have a slight edge in the $4$-dimensional behavior space, SQUAD closes this gap and noticeably outperforms them in the more challenging versions of the task.
We attribute the initial success of CMA-M(A)EGA to their large archives ($10^4$ cells), which are dense enough to effectively cover the low-dimensional space.
However, as the dimensionality increases, the density of their archives drops exponentially, making the feedback less informative.
This limitation is a key reason for their performance decline in higher-dimensional spaces.
SQUAD, on the other hand, does not discretize the behavior space which enables it to maintain strong performance as dimensionality increases.
Furthermore, SQUAD demonstrates greater stability across all three tasks, with the lowest variance across different evaluations.
More detailed results of these experiments are presented in Appendix~\ref{app:additional_results}.\sznote{I wonder if the LP domain favors SoftQD, since it requires algorithms to ``slow down'' near behavior space edges in order to explore those near-edge regions. While other algorithms need to learn to slow down, SoftQD can simply ``speed into the horizon (in its unbounded space)'' and the logit function automatically slows down its bounded space mapping.}

\begin{figure}[h]
    \centering
    \includegraphics[width=0.95\linewidth]{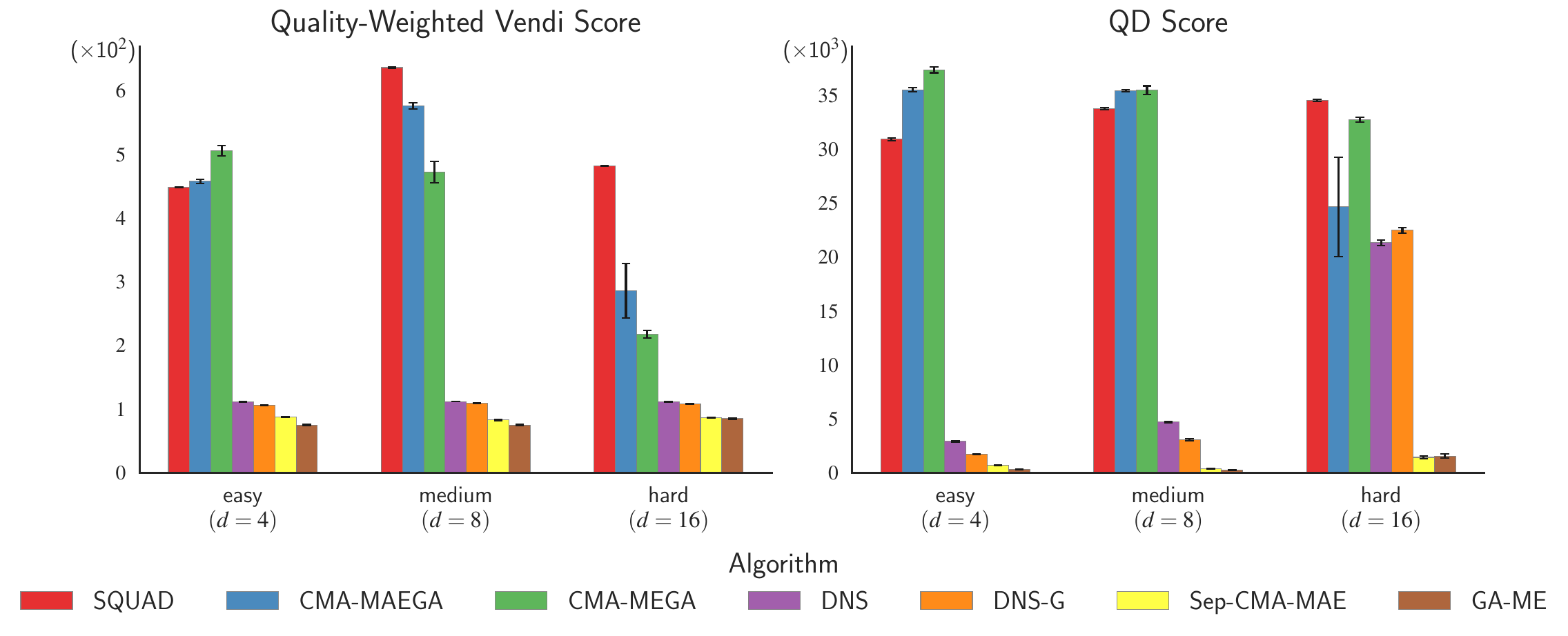}
    \caption{
    \textbf{QVS (left) and QD Score (right) on LP tasks with increasing behavior descriptor dimensionality ($4$, $8$, $16$).}
    All results are averaged over $10$ runs, with error bars depicting the standard errors.
    SQUAD's performance relative to the baselines improves with task complexity, with it outperforming all other methods on the most challenging $16$-d task for both metrics.
    \vbnote{MEGA being better than MAEGA might be a result of bad archive learning rate. Reviewers might bring it up, so you would need to keep a justification ready}\shnote{I suspect that this might be related to the ranking strategies being different. We do a hyperparameter sweep which does include archive learning rate.}
    \sonote{why is the vendi score lower for all methods in the d=4 compared to d=8? Why would sep-cma-mae and ga-me's qd score becomes higher given higher dimentions (d=16)?}\shnote{The performance across tasks are not comparable, which is why I said ``relative to the baselines''. These are basically separate tasks with different objective functions, so we can't directly compare them.}}
    \label{fig:rastrigin_results}
\end{figure}

\subsection{Analysis of the Quality-Diversity trade-offs}
\label{sec:qd_tradeoff}
\begin{table}[h!]
    \caption{\textbf{Performance in the IC domain.}
    Comparing SQUAD ($\gamma^2=1$) with baselines in terms of the quality (Best Objective, Mean Objective) and diversity (Vendi Score, Coverage).
    Results are mean $\pm$ standard error averaged over $10$ runs, with the best score for each metric shown in \textbf{bold}.}
    \centering
    \label{tab:ic_results}
    \begin{tabularx}{\linewidth}{l *{4}{>{\centering\arraybackslash}X}}
        \toprule
        \multirow{2}{*}{Algorithm} & \multicolumn{2}{c}{Quality} & \multicolumn{2}{c}{Diversity} \\
        \cmidrule(lr){2-3} \cmidrule(lr){4-5}
        & Mean Objective & Max Objective & Vendi Score & Coverage \\
        \midrule
        SQUAD & $\mathbf{83.37\pm0.02}$ & $\mathbf{93.58\pm0.10}$ & $\mathbf{5.49\pm0.00}$ & $5.68\pm0.06$ \\
        CMA-MAEGA & $74.83\pm0.20$ & $88.79\pm0.92$ & $3.93\pm0.03$ & $\mathbf{5.85\pm0.05}$ \\
        CMA-MEGA & $75.98\pm0.26$ & $86.18\pm1.58$ & $3.25\pm0.24$ & $4.54\pm0.49$ \\
        DNS & $71.30\pm0.15$ & $74.54\pm0.18$ & $1.62\pm0.01$ & $1.52\pm0.02$ \\
        DNS-G & $74.49\pm0.03$ & $76.48\pm0.11$ & $1.67\pm0.00$ & $1.49\pm0.03$ \\
        Sep-CMA-MAE & $72.15\pm0.21$ & $74.57\pm0.04$ & $1.32\pm0.01$ & $0.46\pm0.02$ \\
        GA-ME & $73.44\pm0.41$ & $74.53\pm0.45$ & $1.14\pm0.04$ & $0.19\pm0.03$ \\
        \bottomrule
    \end{tabularx}
\end{table}

Obtaining diversity often comes at the expense of quality.
In this section, we shed some light on the quality-diversity tradeoff that SQUAD offers in comparison to baseline algorithms using the IC domain, which provides a realistic optimization challenge with a moderately sized, $5$-d behavior space.
We also examined the bandwidth parameter, $\gamma^2$, and showed that it acts a as a knob, allowing SQUAD to effectively trade off quality for diversity.

As the results in Table~\ref{tab:ic_results} indicate, SQUAD outperforms the baselines on most metrics.
The average quality of the solutions found by SQUAD is significantly higher than those of the baselines.
Furthermore, the high quality of the best solution in SQUAD's population shows that it is highly capable in pure quality optimization.
On diversity metrics, SQUAD maintains its noticeable lead on Vendi Score but falls short of CMA-MAEGA on Coverage by a small margin.
The discrepancy between the two diversity metrics can be explained by the fact that Vendi Score takes the shape of the archive into account.
Therefore, while the larger number of solutions found by CMA-MAEGA helps it cover the behavior space more densely, this coverage is concentrated in a smaller region, leading to its lower Vendi Score.
This is further supported by the archive visualizations in Appendix~\ref{app:additional_results}.

\begin{wrapfigure}[14]{r}{0.52\linewidth}
  \vspace{-\baselineskip}
  \centering
  \includegraphics[width=\linewidth]{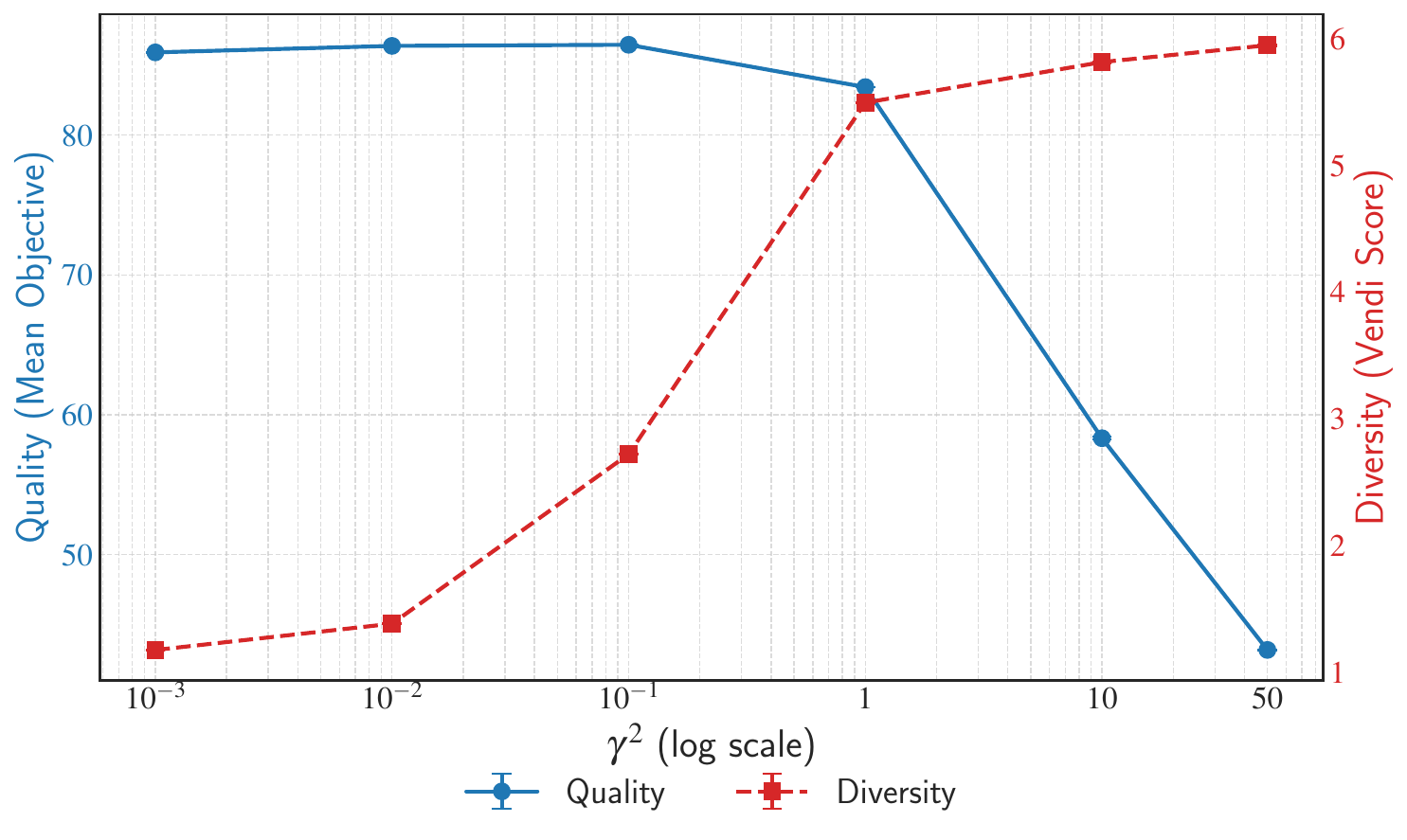}
  \caption{\textbf{Controlling the quality-diversity tradeoff with $\gamma^2$.} The plot shows how varying $\gamma^2$ impacts solution quality, measured by the mean objective (blue line), and diversity, measured by the Vendi Score (dashed red line).}
  \label{fig:qd_tradeoff}
\end{wrapfigure}

Having compared SQUAD with the baselines, we examined how to control its quality-diversity tradeoff via the bandwidth parameter, $\gamma^2$.
As Equation~\ref{eq:squad} suggests, increasing $\gamma^2$ boosts the contribution of the diversity term by increasing its effective range and intensity.
Empirically, we confirmed this by evaluating SQUAD with different values of $\gamma^2$ ranging from $10^{-3}$ to $50$ in the IC domain.
As Figure~\ref{fig:qd_tradeoff} shows, increasing $\gamma^2$ indeed improves the diversity of the solutions, albeit at the price of their quality.
These results show how SQUAD lets the users trade off quality for diversity (and vice versa) by changing the value of a single hyperparameter.

\subsection{Performance on challenging DQD problems}
In our last set of experiments, we compared SQUAD with baselines on the challenging LSI domain.
In line with prior work \citep{cma_mae} we searched the latent space of StyleGAN2 for latents that would generate images of Tom Cruise and are diverse with respect to age and hair length.
Additionally, we also set up a more challenging version of the LSI benchmark with a $7$-d behavior space.
In this more difficult task the goal is to generate ``photos of a detective from a noir film'' and the images have to be diverse with respect to different attributes including facial expression, pose, and hair color.
Additional details about both versions of LSI is presented in Appendix~\ref{app:lsi}.
\begin{table}[h]
    \caption{\textbf{Performance in the Latent Space Illumination (LSI) domain.} Results are averaged over $5$ runs and reported as mean $\pm$ standard error, with the best scores highlighted in \textbf{bold}.
    Algorithms that failed to achieve a positive mean objective (hence, have a zero QVS) are shown with $^*$.}
    \centering
    \label{tab:lsi_results}
    \begin{tabularx}{\linewidth}{l *{4}{>{\centering\arraybackslash}X}}
        \toprule
        \multirow{2}{*}{Algorithm} & \multicolumn{2}{c}{LSI} & \multicolumn{2}{c}{LSI (Hard)} \\
        \cmidrule(lr){2-3} \cmidrule(lr){4-5}
        & QD Score \tiny{($\times10^3$)} & QVS & QD Score \tiny{($\times10^3$)} & QVS \\
        \midrule
        SQUAD & $\mathbf{13.41\pm0.19}$ & $\mathbf{177.0\pm2.8}$ & $\mathbf{2.55\pm0.08}$ & $\mathbf{151.3\pm0.1}$ \\
        CMA-MAEGA & $\phantom{0}6.82\pm0.10$ & $121.6\pm9.7$ & $0.39\pm0.07$ & $\phantom{0}99.3\pm1.0$ \\
        CMA-MEGA & $\phantom{0}8.70\pm0.07$ & $140.1\pm1.7$ & $0.27\pm0.05$ & $\phantom{0}92.8\pm1.5$ \\
        DNS & $-9.31\pm2.52$ & $\phantom{000}0.0\pm0.0^*$ & $-11.27\pm1.46$ & $\phantom{000}0.0\pm0.0^*$ \\
        DNS-G & $-8.53\pm1.50$ & $\phantom{000}0.0\pm0.0^*$ & $-6.81\pm0.47$ & $\phantom{000}0.0\pm0.0^*$ \\
        Sep-CMA-MAE & $-0.59\pm0.45$ & $\phantom{000}0.0\pm0.0^*$ & $0.02\pm0.04$ & $\phantom{000}0.0\pm0.0^*$ \\
        GA-ME & $-14.90\pm1.67\phantom{0}$ & $\phantom{000}0.0\pm0.0^*$ & $0.08\pm0.00$ & $\phantom{000}0.0\pm0.0^*$ \\
        \bottomrule
    \end{tabularx}
\end{table}

The performance of different algorithms are compared in terms of QD Score and Quality-weighted Vendi Score in Table~\ref{tab:lsi_results}.
In both LSI tasks, SQUAD outperforms the baselines by a significant margin, showcasing its capability in challenging domains.
A closer inspection of the data (available in Appendix~\ref{app:additional_results}) reveals that SQUAD is particularly adept in exploring the behavior space.
Thus, despite performing similar to the baselines in terms of mean quality, SQUAD's ability to maintain a better coverage of the behavior space differentiates it from the baselines.
Similar to our prior results, Sep-CMA-MAE and GA-ME perform poorly on both versions of the LSI task.
Here, we attribute their poor performance to the high-dimensional and non-linear nature of the optimization landscape, which makes it difficult to navigate the behavior space.
Interestingly, GA-ME, despite using gradient ascent updates, is unable to effectively escape local optima, highlighting the important role of modern optimizers in more challenging domains.

\section{Related work}
Quality-diversity (QD) optimization has recently become a topic of interest in machine learning, following the introduction of the NSLC \citep{nslc} and MAP-Elites \citep{map_elites} algorithms.
MAP-Elites, as a canonical QD algorithm, uses random mutations to find solutions that occupy different behavioral \emph{niches} in a tessellated behavior space.
This general recipe is improved by methods that have proposed better discretization schemes \citep{cvt,centroids_gen}, incorporated modern evolutionary optimizers \citep{cma_me,sep_cma_mae,ppga, diffme}, and improved the search by using gradient-aware mutations \citep{pga_me, qdrl, dcgme} and crossovers \citep{mixme}.
It is worth noting that these advancements rely on discrete archives, which are known to be sensitive to the archive resolution \citep{cma_mae}.
To the best of our knowledge, the only other work that proposes a discretization-free formulation of QD is the continuous QD Score of \citet{cqd_score}.
Continuous QD Score is similar to SoftQD Score in that both compute an integral over the behavior space.
However, continuous QD Score employs a non-smooth kernel, which makes it difficult to approximate analytically and restricts estimation to Monte Carlo sampling methods.
Consequently, it is used exclusively as an evaluation metric rather than as an optimization objective.
Another major contribution was the formulation of differentiable QD \citep{dqd} where the quality and behavior descriptor functions are assumed to be differentiable.
This, along with the introduction of gradient arborescence algorithms \citep{cma_mae} paved the way for QD algorithms to tackle large scale optimization problems \citep{ilqd, ilqd2}.
Our work is a continuation of this trend that frames the whole QD problem as a unified optimization problem, enabling seamless integration with modern gradient-based optimizers.

The pairwise repulsive term in SQUAD’s objective is reminiscent of the kernel-based repulsion used in particle variational inference methods such as Stein Variational Gradient Descent (SVGD) \citep{svgd}, where particles are simultaneously attracted toward regions of high target density and repelled from one another to prevent collapse.
Unlike QD, these methods aim to approximate a probability distribution rather than optimize quality while diversifying in a behavior space, which does not exist in SVGD or its reinforcement learning variant, SVPG \citep{svpg}.
Nonetheless, recent SVGD advances for high-dimensional inference, including message passing \citep{message_passing_svgd}, matrix-valued kernels \citep{svgd_kernel}, and Newton-like updates \citep{svgd_newton}, offer practical ideas for stabilizing and scaling repulsive forces that could be adapted to SQUAD's objective.
Lastly, a related example outside variational inference is DOMiNO \citep{domino}, which enforces diversity in reinforcement learning via repulsive forces and a Lagrangian formulation, showing the broader utility of such mechanisms for balancing quality with diversity.

\section{Conclusion}

In this work, we introduced Soft QD as a new formulation of QD optimization that does not require discretizing the behavior space.
Building on it, we proposed SQUAD, a novel QD algorithm tailored for differentiable domains.
Our experiments across multiple benchmarks demonstrated that SQUAD achieves competitive performance with state-of-the-art methods, and exhibits promising scalability to higher-dimensional behavior spaces.
In addition, we highlighted how it enables a convenient trade-off between quality and diversity, offering a fresh perspective on the design of QD algorithms. %

Despite the encouraging results, several limitations remain.
First, the current formulation of SQUAD assumes differentiable objectives and behavior descriptors, which may be costly to obtain or unavailable in certain domains. 
Extending Soft QD to reinforcement learning settings with estimated gradients, or to fully non-differentiable domains via evolutionary strategies, represents a promising direction, where issues such as navigating deceptive behavior landscapes may pose unique challenges.
Second, our analysis in Section~\ref{sec:qd_tradeoff} showed the critical role of the kernel bandwidth, $\gamma$ in shaping the quality-diversity trade-off of SQUAD.
Future work could investigate adaptive or per-solution schedules for $\gamma$, for example by adjusting it based on the distribution of solutions or by annealing it during training.
Moreover, while SQUAD's second-order approximation of the Soft QD Score offers tractability, it also discards higher-order interactions.
Alternatives such as sparsification of the interactions~\citep{g_sparse}, message-passing~\citep{message_passing_svgd}, or Monte Carlo approaches may provide richer modeling of interactions, though potentially at higher computational cost.
Lastly, while we use a logit transformation to handle bounded behavior spaces, exploring alternative transformations could be an avenue for future work to further improve performance.
Taken together, we introduce Soft QD as a powerful conceptual tool and SQUAD as a practical algorithm, highlighting a new path for quality-diversity in complex, differentiable domains.
We hope this work inspires the community to build upon this foundation, realizing the full promise of scalable and general purpose QD.

\section{Acknowledgments and Disclosure of Funding}
We would like to thank Varun Bhatt, Sophie Hsu, Bryon Tjanaka, and Shihan Zhao for their feedback on a preliminary version of this work.
This work has been partially supported by the NSF CAREER \#2145077, NSF NRI \#2024949 and the DARPA EMHAT project.

\bibliography{bibliography}
\bibliographystyle{iclr2026_conference}

\newpage
\appendix
\section{Lower Bounding Soft QD Score}

\subsection{Approximate Soft QD Score}
\label{app:approximation_thm}
Here, we will provide a formal proof of Theorem~\ref{thm:1}.

Similar to the main paper, let $\Theta$ be a parameter space and $\btheta = \{\theta_1, \dots, \theta_N\}$ be a set of $N$ solutions where $\theta_n \in \Theta$.
Furthermore, let $f: \Theta \to [0, \infty)$ be an objective function that assigns higher values to better solutions, and let $\desc: \Theta \to \R^d$ be the behavior descriptor function that quantifies the behavior of each solution with a $d$-dimensional vector.
Throughout the proofs, we use $\mathbf{b}_i$ and $f_i$ as shorthands for $\desc (\theta_i)$ and $f(\theta_i)$, respectively. The Soft QD Score of the population $\btheta$ is defined as
\begin{equation}
S(\btheta) = \int_{\mathcal{B}} v_{\btheta}(\mathbf{b})\, \mathrm{d}\mathbf{b} = \int_{\mathcal{B}} \left[\max_{1 \leqslant n \leqslant N} f_n \exp \left(-\frac{\|\mathbf{b} - \mathbf{b}_n\|^2}{2\sigma^2}\right)\right] \, \mathrm{d}\mathbf{b}.
\end{equation}

\setcounter{theorem}{1}
\begin{theorem}
    \label{thm_app:1}
    Given a population $\btheta = \{\theta_n\}_{n=1}^{N}$ with qualities $\{f_n\}_{n=1}^{N}$ and behavior descriptor vectors $\{\mathbf{b}_n\}_{n=1}^{N}$ in behavior space $\mathcal{B} = \mathbb{R}^d$, its Soft QD Score $S(\btheta)$ can be approximated by a lower bound $\Tilde{S}(\btheta)$ defined as:
    \begin{equation}
        \Tilde {S} (\btheta) =  (2\pi \sigma^2)^{\frac{d}{2}}\left[\sum_{n=1}^{N} f_n - \sum_{1 \leqslant i < j \leqslant N} \sqrt{f_i f_j} \exp \left(- \frac{\|\mathbf{b}_i - \mathbf{b}_j\|^2}{8\sigma^2} \right) \right]
    \end{equation}
\end{theorem}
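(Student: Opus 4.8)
The plan is to exploit an exact inclusion--exclusion expansion of the pointwise maximum inside the integral, retain only its second-order truncation (which is a valid lower bound by the Bonferroni inequalities), and then evaluate the resulting Gaussian integrals in closed form. Writing $g_n(\mathbf{b}) = f_n \exp(-\|\mathbf{b}-\mathbf{b}_n\|^2/2\sigma^2)$ so that $v_{\btheta}(\mathbf{b}) = \max_n g_n(\mathbf{b})$, the first step is the key pointwise inequality
\begin{equation}
\max_{1\leqslant n\leqslant N} g_n(\mathbf{b}) \;\geqslant\; \sum_{n=1}^N g_n(\mathbf{b}) - \sum_{1\leqslant i<j\leqslant N}\min\bigl(g_i(\mathbf{b}),g_j(\mathbf{b})\bigr),
\end{equation}
valid for any nonnegative values. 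I would derive this from the layer-cake identity $\max_n a_n = \int_0^\infty \mathbf{1}[\,\exists n: a_n>t\,]\,\mathrm{d}t$ together with inclusion--exclusion on the event $\{\exists n: a_n>t\}$; integrating $\mathbf{1}[a_i>t,\,a_j>t]=\mathbf{1}[\min(a_i,a_j)>t]$ over $t$ produces $\min(a_i,a_j)$, and truncating the alternating expansion after the pairwise terms yields a lower bound. This also makes precise the paper's remark that the pairwise repulsion is exactly the second-order term, with the triplet-and-higher $\min$'s being the discarded higher-order interactions.

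The second step is to integrate the two retained terms. For the linear part, each Gaussian integrates to its normalization constant,
\begin{equation}
\int_{\R^d} g_n(\mathbf{b})\,\mathrm{d}\mathbf{b} = f_n\,(2\pi\sigma^2)^{d/2},
\end{equation}
giving the $\sum_n f_n$ term. For each pairwise term I would first bound $\min(g_i,g_j)\leqslant\sqrt{g_i g_j}$ (the geometric mean lies between the min and the max), and then evaluate the Gaussian product integral by completing the square about the midpoint $\mathbf{m}=(\mathbf{b}_i+\mathbf{b}_j)/2$, using the identity $\|\mathbf{b}-\mathbf{b}_i\|^2+\|\mathbf{b}-\mathbf{b}_j\|^2 = 2\|\mathbf{b}-\mathbf{m}\|^2 + \tfrac12\|\mathbf{b}_i-\mathbf{b}_j\|^2$. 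This splits the exponent into a factor independent of $\mathbf{b}$ and a Gaussian in $\mathbf{b}-\mathbf{m}$, yielding
\begin{equation}
\int_{\R^d}\sqrt{g_i(\mathbf{b})\,g_j(\mathbf{b})}\,\mathrm{d}\mathbf{b} = (2\pi\sigma^2)^{d/2}\sqrt{f_i f_j}\,\exp\!\left(-\frac{\|\mathbf{b}_i-\mathbf{b}_j\|^2}{8\sigma^2}\right).
\end{equation}

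Combining the two steps gives the claim. The one point requiring care is the direction of the inequalities: since the pairwise term enters with a minus sign, replacing $\min(g_i,g_j)$ by the larger quantity $\sqrt{g_i g_j}$ only decreases the right-hand side, so the chain $S(\btheta)\geqslant (2\pi\sigma^2)^{d/2}\sum_n f_n - \sum_{i<j}\int\sqrt{g_i g_j}\,\mathrm{d}\mathbf{b} = \tilde S(\btheta)$ is preserved and still lower-bounds $S(\btheta)$. I expect the main obstacle to be the justification of the pointwise Bonferroni bound rather than the integrals, which are routine Gaussian computations; in particular one must confirm that the even-order truncation of the inclusion--exclusion for the max is genuinely a lower bound, and, implicitly, that $f_n\geqslant 0$ so that all the $g_n$ are nonnegative and the layer-cake argument applies.
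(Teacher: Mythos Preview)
Your proposal is correct and follows essentially the same approach as the paper: second-order truncation of the max--min inclusion--exclusion expansion, the bound $\min(g_i,g_j)\leqslant\sqrt{g_i g_j}$, and completion of the square about the midpoint to evaluate the overlap integral. The only cosmetic difference is that you justify the Bonferroni inequality via the layer-cake representation, whereas the paper invokes the maximum--minimums identity directly and defers the proof that the even-order truncation is a genuine lower bound to a separate lemma in its error-analysis section; your treatment of the inequality directions is in fact slightly more explicit than the paper's main proof.
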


\begin{proof}
To make the notation simpler, let us define the \emph{contribution of a single solution $\theta_n$ at a point $\mathbf{b}\in \mathcal{B}$} as 
\begin{equation}
g_n(\mathbf{b}) = f_n \exp \left(-\frac{\|\mathbf{b}-\mathbf{b}_n\|^2}{2\sigma^2}\right).
\end{equation}
The behavior value $v_{\btheta}(\mathbf{b})$ is the maximum of these individual contributions:
\begin{equation}
v_{\btheta} (\mathbf{b}) = \max_{n} g_n(\mathbf{b})
\end{equation}
Using Maximum-minimums identity, we can rewrite this as 
\begin{align}
v_{\btheta} (\mathbf{b}) &= \max_{n} g_n(\mathbf{b}) \\
&= \sum_{i} g_i(\mathbf{b}) - \sum_{i<j} \min \left(g_i(\mathbf{b}), g_j(\mathbf{b})\right) + \sum_{i<j<k} \min \left(g_i(\mathbf{b}), g_j(\mathbf{b}), g_k(\mathbf{b})\right) \, - \dots
\end{align}
Therefore,
\begin{align}
S(\btheta) &= \int \max_{n} g_n(\mathbf{b}) \, \mathrm{d}\mathbf{b} \\
&= \sum_{i} \int g_i(\mathbf{b}) \, \mathrm{d}\mathbf{b} \\
&- \sum_{i<j} \int \min \left(g_i(\mathbf{b}), g_j(\mathbf{b})\right) \, \mathrm{d}\mathbf{b} \\
&+ \sum_{i<j<k} \int \min \left(g_i(\mathbf{b}), g_j(\mathbf{b}), g_k(\mathbf{b})\right) \, \mathrm{d}\mathbf{b} \, \\
&- \cdots
\end{align}
To get a tractable lower bound on this, we truncate the series and only consider the first two sums.
This discards the higher-order interactions involving three or more solutions.
Intuitively, if the solutions are well-spread in the behavior space (i.e., $\|\mathbf{b}_i - \mathbf{b}_j\|^2 \gg 2\sigma^2$) this approximation is acceptable (detailed error analysis is provided in the next section).
The second order-approximation, $\tilde S(\btheta)$, is therefore obtained by only keeping the individual and pairwise effects:
\begin{equation}
\label{eq:truncation}
S(\btheta) \approx \tilde S (\btheta) = \sum_{i} \int g_i(\mathbf{b}) \, \mathrm{d}\mathbf{b} - \sum_{i<j} \int \min \left(g_i(\mathbf{b}), g_j(\mathbf{b})\right) \, \mathrm{d}\mathbf{b}
\end{equation}
Next, we will derive a closed-form solution for each of the two terms.

\textbf{The Quality Term (Individual Contributions):}
The integrals in the first sum are standard Gaussian integrals and represent the contribution of each solution to the Score, irrespective of other solutions.
We can evaluate them analytically:
\begin{equation}
\int_{\mathbb{R}^d} g_i(\mathbf{b}) \, \mathrm{d}\mathbf{b} = \int_{\mathbb{R}^d} f_i \exp \left(-\frac{\|\mathbf{b}-\mathbf{b}_i\|^2}{2\sigma^2} \right) \, \mathrm{d}\mathbf{b} = f_i(2\pi\sigma^2)^{\frac{d}{2}}
\end{equation}
Hence, the first sum in $\tilde S (\boldsymbol{\theta})$ evaluates to
\begin{equation}
\sum_{i=1}^{N} \int_{\mathbb{R}^d} g_i(\mathbf{b}) \, \mathrm{d}\mathbf{b} = (2\pi\sigma^2)^{\frac{d}{2}}\sum_{i=1}^{N}f_i.
\end{equation}

\textbf{The Diversity Term (Pairwise Overlaps):}
The integrals in the second term are more difficult to compute and require yet another approximation.
Note that for any pair of non-negative numbers $x, y$ we have $\min(x, y) \leqslant \sqrt{xy}$ (geometric mean).
Using this, we have the following upper bound approximation:
\begin{equation}
\label{eq:min_approx}
\int_{\mathbb{R}^d} \min(g_i (\mathbf{b}), g_j(\mathbf{b})) \, \mathrm{d}\mathbf{b} \approx \int_{\mathbb{R}^d} \sqrt{g_i (\mathbf{b}) g_j(\mathbf{b})} \, \mathrm{d}\mathbf{b}.
\end{equation}
Now note that
\begin{align}
\sqrt{g_i (\mathbf{b}) g_j(\mathbf{b})} &= \sqrt{f_i \exp\left(-\frac{\|\mathbf{b}-\mathbf{b}_i\|^2}{2\sigma^2}\right) f_j \exp\left(-\frac{\|\mathbf{b}-\mathbf{b}_j\|^2}{2\sigma^2}\right)} \\
&= \sqrt{f_i f_j} \, \exp \left(-\frac{\|\mathbf{b}-\mathbf{b}_i\|^2 + \|\mathbf{b}-\mathbf{b}_j\|^2}{4\sigma^2}\right).
\end{align}
The exponent is a quadratic in $\mathbf{b}$ and can be simplified as
\begin{align}
\|\mathbf{b} - \mathbf{b}_i\|^2 + \|\mathbf{b} - \mathbf{b}_j\|^2
&= \|\mathbf{b}\|^2 - 2\mathbf{b} \cdot \mathbf{b}_i + \|\mathbf{b}_i\|^2 + \|\mathbf{b}\|^2 - 2\mathbf{b} \cdot \mathbf{b}_j + \|\mathbf{b}_j\|^2 \\
&= 2\|\mathbf{b}\|^2 - 2\mathbf{b} \cdot (\mathbf{b}_i + \mathbf{b}_j) + \|\mathbf{b}_i\|^2 + \|\mathbf{b}_j\|^2 \\
&= 2\left\|\mathbf{b} - \frac{\mathbf{b}_i + \mathbf{b}_j}{2}\right\|^2 - 2\left\|\frac{\mathbf{b}_i + \mathbf{b}_j}{2}\right\|^2 + \|\mathbf{b}_i\|^2 + \|\mathbf{b}_j\|^2 \\
&= 2\left\|\mathbf{b} - \frac{\mathbf{b}_i + \mathbf{b}_j}{2}\right\|^2 + \frac{1}{2}\left(-\|\mathbf{b}_i\|^2 - 2\mathbf{b}_i \cdot \mathbf{b}_j - \|\mathbf{b}_j\|^2 + 2\|\mathbf{b}_i\|^2 + 2\|\mathbf{b}_j\|^2\right) \\
&= 2\left\|\mathbf{b} - \frac{\mathbf{b}_i + \mathbf{b}_j}{2}\right\|^2 + \frac{1}{2}(\|\mathbf{b}_i\|^2 - 2\mathbf{b}_i \cdot \mathbf{b}_j + \|\mathbf{b}_j\|^2) \\
&= 2\left\|\mathbf{b} - \frac{\mathbf{b}_i + \mathbf{b}_j}{2}\right\|^2 + \frac{1}{2}\|\mathbf{b}_i - \mathbf{b}_j\|^2.
\end{align}
Substituting this back into the exponential, we get
\begin{equation}
\exp\left(-\frac{1}{4\sigma^2}\left[2\left\|\mathbf{b} - \frac{\mathbf{b}_i + \mathbf{b}_j}{2}\right\|^2 + \frac{1}{2}\|\mathbf{b}_i - \mathbf{b}_j\|^2\right]\right) = \exp\left(-\frac{\|\mathbf{b}_i - \mathbf{b}_j\|^2}{8\sigma^2}\right) \exp\left(-\frac{\|\mathbf{b} - \frac{\mathbf{b}_i+\mathbf{b}_j}{2}\|^2}{2\sigma^2}\right).
\end{equation}
The overlap integral is therefore
\begin{equation}
\int_{\mathbb{R}^d} \sqrt{g_i g_j} \, \mathrm{d}\mathbf{b} = \sqrt{f_i f_j} \exp \left(-\frac{\|\mathbf{b}_i - \mathbf{b}_j\|^2}{8\sigma^2}\right) \int_{\mathbb{R}^d} \exp\left(-\frac{\|\mathbf{b} - \frac{\mathbf{b}_i+\mathbf{b}_j}{2}\|^2}{2\sigma^2}\right) \, \mathrm{d}\mathbf{b},
\end{equation}
where the integral is now just a regular Gaussian integral centered at $\frac{\mathbf{b}_i + \mathbf{b}_j}{2}$.
The whole integral thus evaluates to 
\begin{equation}
\sqrt{f_i f_j} (2\pi\sigma^2)^{d/2} \exp\left(-\frac{\|\mathbf{b}_i - \mathbf{b}_j\|^2}{8\sigma^2}\right)
\end{equation}

Putting these all together, we establish a closed form approximation (lower bound) of the archive score:
\begin{equation}
\tilde S(\btheta) = (2\pi \sigma^2)^{\frac{d}{2}} \left[\sum_{n=1}^{N} f_n - \sum_{1 \leqslant i < j \leqslant N} \sqrt{f_i f_j} \exp \left(- \frac{\|\mathbf{b}_i - \mathbf{b}_j\|^2}{8\sigma^2} \right) \right]
\end{equation}

\end{proof}

\subsection{Analysis of Approximation Error}
\label{app:approximation_error}
The error between the true Soft QD Score of a population and the second order approximation derived above stems from two sources: (1) the truncation of higher-order interaction in the maximum-minimums equality (Equation~\ref{eq:truncation}), and (2) the replacing of pairwise minimums with their geometric means in the integrals (Equation~\ref{eq:min_approx}). We will analyze each of these errors and discuss how we can control them.

\vbnote{The flow here seems a bit hard to follow (probably doesn't matter though). Lemma 1 just sets up the equations needed for the first error. And in those, $P_2$ just re-proves that $\hat{S}$ is a lower bound. So it is only at Eq. 46 that the actual proof for the first error starts. Maybe just putting them into 3 sub-sections would make it better?}

\textbf{Truncation Error:}
The truncation error for the second order approximation is
\begin{align}
\varepsilon_1 &= |S (\btheta) - \tilde S (\btheta)| \\
&= \left| \int \max_n g_n(\mathbf{b}) \, \mathrm{d}\mathbf{b} - \left[\sum_{i} \int g_i(\mathbf{b}) \, \mathrm{d}\mathbf{b} - \sum_{i<j} \int \min \left(g_i(\mathbf{b}), g_j(\mathbf{b})\right) \, \mathrm{d}\mathbf{b} \right] \right| \\
&= \left| \sum_{i<j<k} \int \min \left(g_i(\mathbf{b}), g_j(\mathbf{b}), g_k(\mathbf{b})\right) \, \mathrm{d}\mathbf{b} - \cdots \right|
\end{align}

\begin{lemma}[Bonferroni Inequalities for the Maximum]
Let $\{x_1, \dots, x_N\}$ be a set of non-negative real numbers and let $S_m = \sum_{|I| = m} \min_{i\in I} x_i$. The partial sums of the maximum-minimums equality, $P_K = \sum_{m=1}^{K} (-1)^{m-1} S_m$, provide alternating bounds on the maximum value:
\begin{itemize}
    \item If $K$ is odd, $P_K \geqslant \max(x_1, \dots, x_N)$.
    \item If K is even, $P_K \leqslant \max(x_1, \dots, x_N)$.
\end{itemize}
\end{lemma}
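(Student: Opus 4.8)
The plan is to reduce the statement to the classical Bonferroni inequalities for indicator functions via a layer-cake (sublevel-set) representation, and then to verify the indicator version through a single truncated binomial identity.

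First I would recall that every non-negative real admits the representation $x = \int_0^\infty \mathbf{1}[x > t]\,\mathrm{d}t$. Applying this to the quantities of interest, and using that $\mathbf{1}[\min_{i \in I} x_i > t] = \prod_{i \in I}\mathbf{1}[x_i > t]$, I would rewrite both $\max_i x_i$ and each $S_m$ as integrals over $t \in [0,\infty)$. Setting $\mathbf{1}_i(t) = \mathbf{1}[x_i > t]$, we have $\max_i x_i = \int_0^\infty \max_i \mathbf{1}_i(t)\,\mathrm{d}t$ and $S_m = \int_0^\infty \sum_{|I| = m} \prod_{i\in I}\mathbf{1}_i(t)\,\mathrm{d}t$, so that $P_K = \int_0^\infty Q_K(t)\,\mathrm{d}t$ with $Q_K(t) = \sum_{m=1}^K (-1)^{m-1}\sum_{|I|=m}\prod_{i\in I}\mathbf{1}_i(t)$. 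Since integration preserves inequalities, it suffices to prove the pointwise bound $Q_K(t) \geqslant \max_i \mathbf{1}_i(t)$ for odd $K$ and $Q_K(t) \leqslant \max_i \mathbf{1}_i(t)$ for even $K$, at every fixed $t$.

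Next I would reduce the pointwise claim to a purely combinatorial one. At a fixed $t$, let $r = \#\{i : x_i > t\}$ count the active indices. If $r = 0$ then $Q_K(t) = 0 = \max_i \mathbf{1}_i(t)$ and both bounds hold with equality. If $r \geqslant 1$ then $\max_i \mathbf{1}_i(t) = 1$, and since a subset $I$ contributes $\prod_{i\in I}\mathbf{1}_i(t) = 1$ exactly when all its members are active, we get $\sum_{|I|=m}\prod_{i\in I}\mathbf{1}_i(t) = \binom{r}{m}$. Hence $Q_K(t) = T_K$, where $T_K = \sum_{m=1}^{K}(-1)^{m-1}\binom{r}{m}$ with the convention $\binom{r}{m} = 0$ for $m > r$, and the task collapses to showing $T_K \geqslant 1$ for odd $K$ and $T_K \leqslant 1$ for even $K$.

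The key step, which I expect to carry the real content, is the evaluation of this truncated alternating binomial sum. I would invoke the identity $\sum_{m=0}^{K}(-1)^m\binom{r}{m} = (-1)^K\binom{r-1}{K}$, which follows directly from Pascal's rule $\binom{r}{m} = \binom{r-1}{m} + \binom{r-1}{m-1}$ by splitting the sum into two copies and telescoping. Subtracting the $m=0$ term and negating yields $T_K = 1 - (-1)^K\binom{r-1}{K}$. Reading off the sign of the non-negative quantity $\binom{r-1}{K}$ then gives $T_K = 1 + \binom{r-1}{K} \geqslant 1$ when $K$ is odd and $T_K = 1 - \binom{r-1}{K} \leqslant 1$ when $K$ is even, which is exactly the pointwise claim; integrating over $t$ recovers the lemma. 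The only subtlety to flag is the boundary case $K \geqslant r$, where $\binom{r-1}{K} = 0$ forces $T_K = 1$, confirming that both bounds collapse to the exact maximum-minimums identity used in the preceding proof.
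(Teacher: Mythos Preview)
Your proof is correct and in fact proves strictly more than the paper does: the paper explicitly restricts to the cases $K=2$ and $K=3$ (which are all it needs downstream), whereas your argument establishes the inequality for every $K$.

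The routes are genuinely different. The paper sorts $x_1 \geqslant \cdots \geqslant x_N$, observes that $x_k$ contributes with multiplicity $\binom{k-1}{m-1}$ to $S_m$, and hence writes $P_K = \sum_k C(k,K)\,x_k$ with an explicit coefficient $C(k,K)$ obtained from the same truncated-binomial identity you use. It then checks by hand that the coefficients beyond $x_1$ are non-positive for $K=2$ and non-negative for $K=3$. Your layer-cake reduction instead replaces the real values by indicators $\mathbf{1}[x_i>t]$, collapsing the problem to a single integer parameter $r$ and landing directly on the classical Bonferroni inequalities for events; the same binomial identity then finishes the job uniformly in $K$. Your approach is cleaner and more general, and makes transparent the connection to the probabilistic Bonferroni bounds that the lemma's name alludes to; the paper's approach is more concrete and self-contained for the two cases it actually needs, at the cost of not proving the full statement as written.
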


\begin{proof}
For bounding the approximation error, we only need to use this lemma with $K=2, 3$, which we shall prove.
Without loss of generality, assume that the numbers are sorted such that $x_1 \geqslant x_2 \geqslant \dots \geqslant x_N \geqslant 0$.
The contribution of $x_k$ to the sum $S_m$ is $\binom{k-1}{m-1}x_k$, since $x_k$ is the minimum of a subset of $m$ variables iff the other $m-1$ elements are all chosen from the $k-1$ elements smaller than it.

The partial sums $P_K$ can thus be written as
\begin{equation}
P_K = \sum_{m=1}^{K}(-1)^{m-1}S_m = \sum_{k=1}^{N} \left[\sum_{m=1}^{K} (-1)^{m-1} \binom{k-1}{m-1}\right]x_k.
\end{equation}
The inner sum is a partial sum of a binomial expansion. Therefore, using the fact that
\begin{equation}
\sum_{j=0}^{m} (-1)^j \binom{n}{j} = (-1)^m \binom{n-1}{m},
\end{equation}
(which follows from induction on $m$) we can see that the coefficient of $x_k$ in $P_K$ is
\begin{equation}
C(k, K) = \sum_{j=0}^{\min(k-1, K-1)} (-1)^j \binom{k-1}{j} = (-1)^{\min(k-1, K-1)} \binom{k-2}{\min(k-1, K-1)}.
\end{equation}
Let us now consider the cases of $K=2$ and $K=3$ individually.

\textbf{Case $K=2$:} Consider $P_2 = \sum_i x_i - \sum_{i<j} \min(x_i, x_j)$. For all $k\geqslant 2$, the coefficient of $x_k$ in $P_2$ is $2-k$ and $x_1$ is only present once, with coefficient $1$ in $S_1$. Therefore, we have 
\begin{equation}
P_2 = x_1 + \sum_{k=2}^{N} (2-k)x_k.
\end{equation}
Since $x_k$'s are all non-negative and $2-k$ is non-positive, the sum is non-positive. Therefore,
\begin{equation}
\boxed{
P_2 \leqslant x_1 = \max(x_1, \dots, x_N).
}
\end{equation}

\textbf{Case $K=3$:} Consider $P_3 = P_2 + \sum_{i<j<k} \min(x_i, x_j, x_k)$. For all $k\geqslant 3$, the coefficient of $x_k$ in $P_3$ is $\binom{k-2}{2} = \frac{(k-2)(k-3)}{2}$. Furthermore, $x_1$ is only present once, with coefficient $1$ in $S_1$ and $x_2$ is present only twice, once with a $+1$ coefficient in $S_1$ and once with coefficient $-1$ in $S_2$. Therefore, we have
\begin{equation}
P_3 = x_1 + 0\cdot x_2 + \sum_{k=3}^{N} \frac{(k-2)(k-3)}{2}x_k.
\end{equation}
Since $x_k$'s are all non-negative and so are all the coefficients, the sum is non-negative. Therefore,
\begin{equation}
\boxed{
P_3 \geqslant x_1 = \max(x_1, \dots, x_N)
}
\end{equation}
\end{proof}

\begin{lemma}[Bounding the Truncation Error]
The truncation error $\varepsilon_1$ is bounded by the sum of the integrals of the third-order minimums:
\begin{equation}
\varepsilon_1 \leqslant \sum_{i<j<k} \int \min(g_i(\mathbf{b}), g_j(\mathbf{b}), g_k(\mathbf{b})) \, \mathrm{d} \mathbf{b}
\end{equation}
\end{lemma}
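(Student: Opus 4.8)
The plan is to apply the preceding Bonferroni lemma pointwise in $\mathbf{b}$ and then integrate. Fix $\mathbf{b}\in\mathcal{B}$ and set $x_n = g_n(\mathbf{b})\geqslant 0$, writing $P_K(\mathbf{b})$ for the corresponding partial sums of the maximum--minimums identity. The lemma with $K=2$ (even case) gives $P_2(\mathbf{b})\leqslant \max_n g_n(\mathbf{b})$, and with $K=3$ (odd case) gives $\max_n g_n(\mathbf{b})\leqslant P_3(\mathbf{b})$. Since $P_3(\mathbf{b}) = P_2(\mathbf{b}) + \sum_{i<j<k}\min\bigl(g_i(\mathbf{b}),g_j(\mathbf{b}),g_k(\mathbf{b})\bigr)$, subtracting $P_2(\mathbf{b})$ from the chain $P_2(\mathbf{b})\leqslant \max_n g_n(\mathbf{b})\leqslant P_3(\mathbf{b})$ yields the two-sided pointwise bound
\begin{equation}
0 \leqslant \max_n g_n(\mathbf{b}) - P_2(\mathbf{b}) \leqslant \sum_{i<j<k} \min\bigl(g_i(\mathbf{b}), g_j(\mathbf{b}), g_k(\mathbf{b})\bigr).
\end{equation}

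Next I would integrate this inequality over $\mathcal{B}$. Each $g_n$ is a nonnegative Gaussian bump, hence integrable, and every $\min$ of such terms is dominated by $g_i$ and thus integrable as well; so integration preserves the inequalities and distributes over the finite sums by linearity. Recognizing that $\int_{\mathcal{B}} P_2(\mathbf{b})\,\mathrm{d}\mathbf{b} = \sum_i \int g_i\,\mathrm{d}\mathbf{b} - \sum_{i<j}\int\min(g_i,g_j)\,\mathrm{d}\mathbf{b}$ is exactly the truncated score $\tilde S(\btheta)$, while $\int_{\mathcal{B}}\max_n g_n(\mathbf{b})\,\mathrm{d}\mathbf{b} = S(\btheta)$, the lower bound shows $S(\btheta)-\tilde S(\btheta)\geqslant 0$, so the absolute value in the definition of $\varepsilon_1$ may be dropped. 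The integrated upper bound then gives precisely $\varepsilon_1 = S(\btheta)-\tilde S(\btheta) \leqslant \sum_{i<j<k}\int\min(g_i,g_j,g_k)\,\mathrm{d}\mathbf{b}$, which is the claim.

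The only real obstacle is obtaining the pointwise sandwiching, and this is handled entirely by the previous lemma, which supplies both the even ($K=2$) upper bound and the odd ($K=3$) lower bound on the maximum; everything else is bookkeeping. I would take care to state explicitly that $\varepsilon_1$ here denotes the \emph{truncation} error only---comparing $S$ against the exact second-order partial sum with integrals of genuine minima, not against the geometric-mean surrogate---so that the identification $\int_{\mathcal{B}} P_2 = \tilde S(\btheta)$ is the version \textbf{without} the $\min \to \sqrt{\cdot}$ replacement, whose error is analyzed separately.
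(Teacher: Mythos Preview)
Your proposal is correct and follows essentially the same argument as the paper: apply the Bonferroni inequalities from the preceding lemma pointwise at $K=2$ and $K=3$, integrate the resulting sandwich $P_2(\mathbf{b})\leqslant \max_n g_n(\mathbf{b})\leqslant P_3(\mathbf{b})$, and identify the integrated terms with $\tilde S(\btheta)$ and $S(\btheta)$. Your explicit remark that $\tilde S$ here denotes the exact second-order truncation (with genuine $\min$ integrals, not the geometric-mean surrogate) is a useful clarification that the paper leaves implicit.
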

\begin{proof}
The Bonferroni inequality for $K=2$ states that for any point $\mathbf{b}$
\begin{equation}
\max_n g_n(\mathbf{b}) \geqslant \sum_i g_i(\mathbf{b}) - \sum_{i<j} \min(g_i(\mathbf{b}), g_j(\mathbf{b})).
\end{equation}
Integrating this pointwise inequality over the domain of $\mathbf{b}$ yields:
\begin{equation}
S(\btheta) = \int \max_k g_n(\mathbf{b}) \, \mathrm{d} \mathbf{b} \geqslant \int \left[ \sum_i g_i(\mathbf{b}) - \sum_{i<j} \min(g_i(\mathbf{b}), g_j(\mathbf{b}))\right] \, \mathrm{d} \mathbf{b} = \tilde S(\btheta)
\end{equation}
Therefore, $\tilde S(\btheta)$ is indeed a lower bound on $S(\btheta)$ and we can write the error as $\varepsilon_1 = S(\btheta) - \tilde S(\btheta)$.
Similarly, the Bonferroni inequality for $K=3$ states that for any point $\mathbf{b}$
\begin{equation}
\max_n g_n(\mathbf{b}) \leqslant \sum_i g_i(\mathbf{b}) - \sum_{i<j} \min(g_i(\mathbf{b}), g_j(\mathbf{b})) + \sum_{i<j<k} \min(g_i(\mathbf{b}), g_j(\mathbf{b}), g_k(\mathbf{b})).
\end{equation}
Integrating over the domain of $\mathbf{b}$ yields:
\begin{align}
S(\btheta) &= \int \max_n g_n(\mathbf{b}) \, \mathrm{d} \mathbf{b} \\
&\leqslant \int \left[ \sum_i g_i(\mathbf{b}) - \sum_{i<j} \min(g_i(\mathbf{b}), g_j(\mathbf{b})) + \sum_{i<j<k} \min(g_i(\mathbf{b}), g_j(\mathbf{b}), g_k(\mathbf{b}))\right] \, \mathrm{d} \mathbf{b} \\
&= \tilde S(\btheta) + \int \sum_{i<j<k} \min(g_i(\mathbf{b}), g_j(\mathbf{b}), g_k(\mathbf{b})) \, \mathrm{d}\mathbf{b} \\
\implies &S(\btheta) - \tilde S(\btheta) \leqslant \int \sum_{i<j<k} \min(g_i(\mathbf{b}), g_j(\mathbf{b}), g_k(\mathbf{b})) \, \mathrm{d}\mathbf{b} \\
\implies &\varepsilon_1 \leqslant \sum_{i<j<k} \int \min(g_i(\mathbf{b}), g_j(\mathbf{b}), g_k(\mathbf{b})) \, \mathrm{d}\mathbf{b}
\end{align}
\end{proof}

Given the above, we use the same technique that we used before and replace minimum with geometric mean to compute the integral.
We can see that
\begin{align}
	\int \min(g_i(\mathbf{b}), g_j(\mathbf{b}), g_k(\mathbf{b})) \, \mathrm{d}\mathbf{b} &\leqslant \int \left(g_i(\mathbf{b})\cdot g_j(\mathbf{b})\cdot g_k(\mathbf{b})\right)^{\frac{1}{3}} \, \mathrm{d}\mathbf{b} \\
	&= \left(f_if_jf_k\right)^{\frac{1}{3}} \int \exp \left(-\frac{\|\mathbf{b} - \mathbf{b}_i\|^2 + \|\mathbf{b} - \mathbf{b}_j\|^2 + \|\mathbf{b} - \mathbf{b}_k\|^2}{6\sigma^2} \right) \, \mathrm{d}\mathbf{b}
\end{align}
The integral is the product of three Gaussians.
By completing the square, similar to the pairwise case, we can rewrite it as a Gaussian centered around $\frac{\mathbf{b}_i + \mathbf{b}_j + \mathbf{b}_k}{3}$ and a constant term.
The result would be
\begin{equation}
\int \left(g_i(\mathbf{b})\cdot g_j(\mathbf{b})\cdot g_k(\mathbf{b})\right)^{\frac{1}{3}} \, \mathrm{d}\mathbf{b} = (f_if_jf_k)^{\frac{1}{3}} (3\pi\sigma^2)^{\frac{d}{2}} \exp \left(-\frac{\|\mathbf{b}_i-\mathbf{b}_j\|^2 + \|\mathbf{b}_j-\mathbf{b_k}\|^2 + \|\mathbf{b}_i-\mathbf{b}_k\|^2}{18\sigma^2}\right)
\end{equation}
So, we have
\begin{equation}
\label{eq:eps_1}
\varepsilon_1 \leqslant \sum_{i<j<k} (f_if_jf_k)^{\frac{1}{3}} (\frac{2\pi\sigma^2}{3})^{\frac{d}{2}} \exp \left(-\frac{\|\mathbf{b}_i-\mathbf{b}_j\|^2 + \|\mathbf{b}_j-\mathbf{b}_k\|^2 + \|\mathbf{b}_i-\mathbf{b}_k\|^2}{18\sigma^2}\right)
\end{equation}
From this, we conclude that if the solutions are well separated (i.e., the mean squared distance of any triplet is significantly larger than $6\sigma^2$) the error becomes negligible \vbnote{Is the argument that it is more likely for the average distance between 3 solutions to be more than $6\sigma^2$ than distance between 2 solutions to be more than $8\sigma^2$ (since the pairwise term falls off faster than triplet)?}.

\textbf{Pairwise Integral Approximation Error}
The pairwise integral approximation error is due to the estimation of pairwise minimums using geometric means. For every pair of solutions $(i, j)$ this error is 
\begin{equation}
\epsilon_{i,j} = \int_\mathcal{B} \left(\sqrt{g_i(\mathbf{b}) g_j(\mathbf{b})} - \min\left(g_i(\mathbf{b}), g_j(\mathbf{b})\right)\right)\, \mathrm{d}\mathbf{b}.
\end{equation}
It follows from the arithmetic-geometric inequality that for every non-negative numbers $x$ and $y$, $\sqrt{xy} - \min(x,y) \leqslant \frac{1}{2}|x-y|$.
Integrating over $\mathbf{b}$ preserves this inequality, yielding
\begin{equation}
\epsilon_{i,j} \leqslant \frac{1}{2}\int_{\mathcal{B}} |g_i(\mathbf{b}) - g_j(\mathbf{b})| \, \mathrm{d}\mathbf{b}.
\end{equation}
Note that the right hand side is just the $\ell_1$ distance between the functions $g_i, g_j$.
This distance can be further simplified by breaking down $|g_i - g_j|$ using the triangle inequality.
Without loss of generality, assume that $f_j\leqslant f_i$, then add and subtract $f_j \exp\left(-\frac{\|\mathbf{b}-\mathbf{b}_i\|^2}{2\sigma^2}\right)$, we get
\begin{equation}
|g_i(\mathbf{b}) - g_j(\mathbf{b})| \leqslant |f_i - f_j|\exp\left(-\frac{\|\mathbf{b}-\mathbf{b}_i\|^2}{2\sigma^2}\right) + f_j \left|\exp\left(-\frac{\|\mathbf{b}-\mathbf{b}_i\|^2}{2\sigma^2}\right) - \exp\left(-\frac{\|\mathbf{b}-\mathbf{b}_j\|^2}{2\sigma^2}\right)\right|
\end{equation}
Now, if we integrate both sides, the first term in the right hand side is just a multiple of a Gaussian integral.
So we would get
\begin{equation}
\int |g_i(\mathbf{b}) - g_j(\mathbf{b})| \, \mathrm{d}\mathbf{b} \leqslant (2\pi\sigma^2)^{\frac{d}{2}} \left(|f_i - f_j| + f_j \int_\mathcal{B} |p_i(\mathbf{b}) - p_j(\mathbf{b})| \, \mathrm{d} \mathbf{b}\right),
\end{equation}
where $p_i(\mathbf{b})$ is the pdf of a Gaussian centered at $\mathbf{b}_i$ with covariance $\sigma^2\mathbf{I}$.
The remaining integral is twice the Total Variation distance $d_{TV} (p_i, p_j)$.
Using Pinsker's inequality, we can transform it into the KL divergence between two Gaussians, which does have a closed form solution, $\frac{\|\mathbf{b}_i - \mathbf{b}_j\|^2}{2\sigma^2}$.
Replacing the integral and combining everything, we arrive at the following bound for the error:
\begin{equation}
\epsilon_{i, j} \leqslant (2\pi\sigma^2)^{\frac{d}{2}} \left(|f_i - f_j| + \min(f_i, f_j) \frac{\|\mathbf{b}_i - \mathbf{b}_j\|}{\sigma}\right)
\end{equation}
Finally, using the triangle inequality, we see that 
\begin{equation}
\label{eq:eps_2}
\varepsilon_{2} \leqslant (2\pi\sigma^2)^{\frac{d}{2}} \sum_{i < j \leqslant K}\left(|f_i - f_j| + \min(f_i, f_j) \frac{\|\mathbf{b}_i - \mathbf{b}_j\|}{\sigma}\right).
\end{equation}

Putting both of these bounds together, a final application of triangle inequality shows that 
\begin{equation}
|S(\btheta) - \tilde S(\btheta)| \leqslant \varepsilon_1 + \varepsilon_2,
\end{equation}
where $\varepsilon_1$ and $\varepsilon_2$ are themselves bounded by Equation~\ref{eq:eps_1} and Equation~\ref{eq:eps_2}, respectively.

\section{Properties of Soft QD Score}
\label{app:sqd_properties}
Here, we will further discuss some properties of the Soft QD Score that make it a suitable measure of quality and diversity.
Theorems~\ref{thm:monot1}, \ref{thm:monot2}, \ref{thm:submod}, and \ref{thm:qdconnection} formalize and prove the statement of Theorem~\ref{thm:properties_informal} in the main paper and show multiple properties of the Soft QD Score.

Theorems \ref{thm:monot1} and \ref{thm:monot2} show that SoftQD Score is monotonic with respect to population size and qualities of members of the population.
This means that adding new solutions to a population and improving the quality of existing ones will never decrease the SoftQD Score.
We note that this is a desirable, yet non-trivial property of a measure of quality and diversity.
For instance, neither mean objective value (as a measure of quality) nor the mean behavior distance (as a measure of diversity) of a population are monotonic with respect to the population size; that is, adding a new solution can decrease them.

Theorem~\ref{thm:submod} shows that SoftQD Score is also submodular.
Being submodular is particularly convenient as it implies the existence of efficient approximate algorithms for maximizing it under certain conditions.
For instance, it is well known that maximizing a submodular function subject to a cardinality constraint admits a $1-\frac{1}{e}$ approximation algorithm \citep{submodular}.
This is valuable when we want to select a fixed-size subset of a large population that preserves as much of the Soft QD Score as possible (e.g., for evaluation or compression).

Lastly, we establish a connection between the traditional QD Score and the limiting behavior of the Soft QD Score through Theorem~\ref{thm:qdconnection}.

\setcounter{theorem}{2}
\begin{theorem}[Monotonicity with respect to population size]
    \label{thm:monot1}
    Let $\btheta = \{\theta_1, \dots, \theta_N\}$ be a population and let $\theta_{N+1}$ be any new solution.
    If $\btheta^{+} = \btheta \cup \{\theta_{N+1}\}$, then $S(\btheta) \leqslant S(\btheta^{+})$.    
\end{theorem}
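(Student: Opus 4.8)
The plan is to establish the inequality pointwise on the integrand and then integrate, so that essentially no computation is required. The starting observation is that the behavior value is defined as a pointwise maximum of the individual contributions $g_n(\mathbf{b}) = f_n \exp(-\|\mathbf{b}-\mathbf{b}_n\|^2/2\sigma^2)$. For the augmented population $\btheta^{+} = \btheta \cup \{\theta_{N+1}\}$, the maximum at each fixed $\mathbf{b}$ is taken over the strictly larger index set $\{1,\dots,N+1\}$ rather than $\{1,\dots,N\}$. Since the maximum of a finite set of real numbers can only increase when an element is adjoined, I would conclude that $v_{\btheta^{+}}(\mathbf{b}) \geqslant v_{\btheta}(\mathbf{b})$ holds at \emph{every} point $\mathbf{b} \in \mathcal{B}$, with no assumption on the sign or magnitude of $f_{N+1}$.

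The second and final step is to integrate this pointwise inequality over $\mathcal{B}$. Because the integral is monotone --- if $u(\mathbf{b}) \geqslant w(\mathbf{b})$ everywhere then $\int u \geqslant \int w$ --- the pointwise bound transfers directly to the scores:
\begin{equation}
S(\btheta^{+}) = \int_{\mathcal{B}} v_{\btheta^{+}}(\mathbf{b}) \, \mathrm{d}\mathbf{b} \geqslant \int_{\mathcal{B}} v_{\btheta}(\mathbf{b}) \, \mathrm{d}\mathbf{b} = S(\btheta),
\end{equation}
which is exactly the claim.

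There is essentially no hard step here; the result is an immediate consequence of two elementary facts --- monotonicity of $\max$ under set enlargement, and monotonicity of integration. The only points worth checking are bookkeeping rather than genuine obstacles: that each $g_n$ is non-negative and integrable (guaranteed by $f \colon \Theta \to [0,\infty)$ together with the Gaussian factor), so that both integrals are well-defined and the manipulation is legitimate. I would also note that the same argument applies verbatim to the addition of any finite collection of new solutions, since enlarging the index set by more than one element preserves the identical pointwise domination of the maxima.
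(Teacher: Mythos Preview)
Your proposal is correct and follows essentially the same approach as the paper: establish the pointwise inequality $v_{\btheta^{+}}(\mathbf{b}) \geqslant v_{\btheta}(\mathbf{b})$ from monotonicity of the maximum under enlargement of the index set, then integrate. The paper phrases the pointwise step slightly differently, writing $v_{\btheta^{+}}(\mathbf{b}) = \max\bigl(v_{\btheta}(\mathbf{b}),\, f_{N+1}\exp(-\|\mathbf{b}-\mathbf{b}_{N+1}\|^2/2\sigma^2)\bigr)$, but this is the same argument.
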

\begin{proof}
    Let $f_{N+1} = f(\theta_{N+1})$ and $\mathbf{b}_{N+1} = \desc(\theta_{N+1})$.
    The behavior value for the new population $\btheta^{+}$ can be written as
    \begin{equation}
        v_{\btheta^{+}}(\mathbf{b}) = \max \left(v_{\btheta} (\mathbf{b}), f_{N+1}\exp\left(-\frac{\|\mathbf{b}-\mathbf{b}_{N+1}\|^2}{2\sigma^2}\right)\right) \geq v_{\btheta}(\mathbf{b}).
    \end{equation}
    Integrating both sides of the above inequality over the behavior space $\mathcal{B}$ yields the result:
    \begin{equation}
        S(\btheta^{+}) = \int_\mathcal{B} v_{\btheta^{+}}(\mathbf{b}) \, \mathrm{d}\mathbf{b} \geqslant \int_\mathcal{B} v_{\btheta}(\mathbf{b}) \, \mathrm{d}\mathbf{b} = S(\btheta).
    \end{equation}
\end{proof}

\begin{theorem}[Monotonicity with respect to quality]
    \label{thm:monot2}
    Let $\btheta = \{\theta_1, \dots, \theta_N\}$ be a population and $\btheta' = \btheta \cup \{\theta_n'\} \backslash \{\theta_n\}$ be another population that is identical to $\btheta$ except that the $n$-th solution $\theta_n$ is replaced by $\theta_n'$ such that $\desc(\theta) = \desc(\theta_n') = \mathbf{b}_n$ and $f(\theta_n') = f_n' \geqslant f_n = f(\theta_n)$.
    Then, $S(\btheta) \leqslant S(\btheta')$.
\end{theorem}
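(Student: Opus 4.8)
The plan is to mirror the structure of the proof of Theorem~\ref{thm:monot1}: establish a pointwise inequality between the two behavior value fields and then integrate over $\mathcal{B}$. The key observation is that replacing $\theta_n$ by $\theta_n'$ changes only a single term inside the pointwise maximum defining $v_{\btheta}$, and that this term changes monotonically because the two solutions share the same behavior descriptor $\mathbf{b}_n$.

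Concretely, I would first write out the individual contribution of the affected solution under each population, using the shorthand $g_n(\mathbf{b}) = f_n \exp(-\|\mathbf{b}-\mathbf{b}_n\|^2/2\sigma^2)$ and $g_n'(\mathbf{b}) = f_n' \exp(-\|\mathbf{b}-\mathbf{b}_n\|^2/2\sigma^2)$. Since $\desc(\theta_n') = \mathbf{b}_n$ is unchanged, the Gaussian factor is identical in both expressions, so the assumption $f_n' \geqslant f_n$ immediately gives $g_n'(\mathbf{b}) \geqslant g_n(\mathbf{b})$ for every $\mathbf{b} \in \mathcal{B}$. All other contributions $g_m$ with $m \neq n$ are common to both populations and hence unchanged.

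Next I would combine this with the fact that the pointwise maximum is monotone in each of its arguments. Writing $M(\mathbf{b}) = \max_{m \neq n} g_m(\mathbf{b})$ for the contribution of the shared solutions, we have
\begin{equation}
v_{\btheta'}(\mathbf{b}) = \max\left(g_n'(\mathbf{b}),\, M(\mathbf{b})\right) \geqslant \max\left(g_n(\mathbf{b}),\, M(\mathbf{b})\right) = v_{\btheta}(\mathbf{b}).
\end{equation}
Integrating this pointwise inequality over the behavior space yields $S(\btheta') \geqslant S(\btheta)$, completing the argument.

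I do not expect any genuine obstacle here: unlike Theorem~\ref{thm:monot1}, where a new solution enlarges the pointwise max through an added argument, this statement is even simpler because the set of arguments is fixed and only one of them increases. The only point requiring care is to note explicitly that the shared descriptor $\mathbf{b}_n$ is what makes the exponential factors cancel, so that the quality increase translates directly into a pointwise increase of $g_n$; without the equal-descriptor hypothesis the comparison would not be pointwise and the claim could fail.
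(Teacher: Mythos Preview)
Your proposal is correct and follows essentially the same approach as the paper: establish the pointwise inequality $v_{\btheta'}(\mathbf{b}) \geqslant v_{\btheta}(\mathbf{b})$ using the monotonicity of the max and the shared descriptor, then integrate. The only cosmetic difference is that the paper writes $v_{\btheta'}(\mathbf{b}) = \max\bigl(v_{\btheta}(\mathbf{b}),\, f_n'\exp(-\|\mathbf{b}-\mathbf{b}_n\|^2/2\sigma^2)\bigr)$ rather than isolating $M(\mathbf{b}) = \max_{m \neq n} g_m(\mathbf{b})$ as you do, but the two decompositions are equivalent.
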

\begin{proof}
    Let the behavior values for $\btheta$ and $\btheta'$ be $v_{\btheta} (\mathbf{b})$ and $v_{\btheta'}(\mathbf{b})$, respectively.
    Since, $f'_{n} \geqslant f_n$, for every $\mathbf{b}$ we have $f'_{n}\exp(-\frac{\|\mathbf{b}-\mathbf{b}_n\|^2}{2\sigma^2}) \geqslant f_n \exp(-\frac{\|\mathbf{b}-\mathbf{b}_n\|^2}{2\sigma^2})$.
    Therefore, we can write
    \begin{equation}
        v_{\btheta'}(\mathbf{b}) = \max \left(v_{\btheta} (\mathbf{b}), f_{n}'\exp\left(-\frac{\|\mathbf{b}-\mathbf{b}_{n}\|^2}{2\sigma^2}\right)\right) \geq v_{\btheta}(\mathbf{b}).
    \end{equation}
    Integrating both sides of the above inequality yields the result:
    \begin{equation}
        S(\btheta') = \int_\mathcal{B} v_{\btheta'}(\mathbf{b}) \, \mathrm{d}\mathbf{b} \geqslant \int_\mathcal{B} v_{\btheta}(\mathbf{b}) \, \mathrm{d}\mathbf{b} = S(\btheta).
    \end{equation}
\end{proof}

\begin{theorem}[Submodularity]
    \label{thm:submod}
    Let $\btheta = \{\theta_1, \dots, \theta_N\}$ be a population.
    The Soft QD Score $S$ defined on subsets of $\btheta$ is submodular.
    That is, for any $U \subseteq V \subseteq \btheta$ and any new solution $\theta' \notin V$ with quality $f' = f(\theta')$ and behavior vector $\mathbf{b}' = \desc (\theta')$,
    \begin{equation}
        S(U \cup\{\theta'\}) - S(U) \geqslant S(V \cup\{\theta'\}) - S(V).
    \end{equation}
\end{theorem}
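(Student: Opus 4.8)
The plan is to reduce the set-function inequality to a pointwise inequality under the integral sign, exploiting that $S$ is the integral of a pointwise maximum. First I would isolate the single-solution contribution of the candidate, $g'(\mathbf{b}) = f' \exp\!\left(-\frac{\|\mathbf{b}-\mathbf{b}'\|^2}{2\sigma^2}\right)$, and observe that for \emph{any} subpopulation $A \subseteq \btheta$ the behavior value after adding $\theta'$ is simply $v_{A\cup\{\theta'\}}(\mathbf{b}) = \max\!\left(v_A(\mathbf{b}),\, g'(\mathbf{b})\right)$, since the max defining $v$ ranges over one additional function. Consequently, the marginal gain of $\theta'$ at a fixed point $\mathbf{b}$ is $v_{A\cup\{\theta'\}}(\mathbf{b}) - v_A(\mathbf{b}) = \max\!\left(0,\, g'(\mathbf{b}) - v_A(\mathbf{b})\right) = \left(g'(\mathbf{b}) - v_A(\mathbf{b})\right)^{+}$, writing $(x)^{+} = \max(0,x)$ for the positive part.

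Next I would establish pointwise monotonicity of the behavior value in its set argument: because $U \subseteq V$, the maximum defining $v_V(\mathbf{b})$ is taken over a superset of the indices defining $v_U(\mathbf{b})$, so $v_U(\mathbf{b}) \leqslant v_V(\mathbf{b})$ for every $\mathbf{b}$. This is exactly the mechanism already used in Theorem~\ref{thm:monot1}. Since the scalar map $t \mapsto (c - t)^{+}$ is non-increasing in $t$ for any fixed $c$, applying it with $c = g'(\mathbf{b})$ and $t = v_U(\mathbf{b}) \leqslant v_V(\mathbf{b})$ yields the pointwise inequality of marginal gains,
\begin{equation}
\left(g'(\mathbf{b}) - v_U(\mathbf{b})\right)^{+} \;\geqslant\; \left(g'(\mathbf{b}) - v_V(\mathbf{b})\right)^{+}, \qquad \forall\, \mathbf{b} \in \mathcal{B}.
\end{equation}

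Finally I would integrate this inequality over $\mathcal{B}$. By the definition of $S$ as the integral of $v_{\btheta}$, the left-hand side integrates to $S(U\cup\{\theta'\}) - S(U)$ and the right-hand side to $S(V\cup\{\theta'\}) - S(V)$, which is precisely the submodularity inequality. The proof is essentially routine once cast in pointwise form; the only conceptual step is recognizing that the marginal gain has the positive-part (ReLU) structure, which reduces submodularity to the monotonicity of a single scalar function. The one piece requiring minor care is well-definedness of the integrals: each $g_n$ is an integrable Gaussian bump, so $v_A \leqslant \sum_{n} g_n$ is dominated by an integrable function and all four Soft QD Scores are finite, legitimizing the termwise integration of the pointwise bound.
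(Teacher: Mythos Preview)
Your proposal is correct and follows essentially the same approach as the paper: express the marginal gain at each point as $\max\!\left(0,\, g'(\mathbf{b}) - v_A(\mathbf{b})\right)$, invoke the pointwise monotonicity $v_U \leqslant v_V$ from $U \subseteq V$, and integrate the resulting pointwise inequality. Your version is slightly more explicit (naming the positive-part structure and noting that $t \mapsto (c-t)^{+}$ is non-increasing) and adds a brief integrability check, but the argument is the same.
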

\begin{proof}
For brevity, let us denote $U \cup\{\theta'\}$ as $U'$ and $V \cup\{\theta'\}$ as $V'$.
Similar to the argument in Theorem~\ref{thm:monot1}, we have
    \begin{align}
        v_{U'}(\mathbf{b}) - v_U(\mathbf{b}) &= \max \left(v_U (\mathbf{b}), f'\exp\left(-\frac{\|\mathbf{b}-\mathbf{b}'\|^2}{2\sigma^2}\right)\right) - v_U(\mathbf{b}) \\
        &= \max \left(0, f'\exp\left(-\frac{\|\mathbf{b}-\mathbf{b}'\|^2}{2\sigma^2}\right) - v_U(\mathbf{b})\right).
    \end{align}
Similarly, we also have
\begin{align}
        v_{V'}(\mathbf{b}) - v_V(\mathbf{b}) = \max \left(0, f'\exp\left(-\frac{\|\mathbf{b}-\mathbf{b}'\|^2}{2\sigma^2}\right) - v_V(\mathbf{b})\right).
\end{align}
Since $U\subseteq V$, for every $\mathbf{b}$ we have $v_U(\mathbf{b}) \leqslant v_V(\mathbf{b})$. Therefore, 
\begin{align}
    v_{U'}(\mathbf{b}) - v_U(\mathbf{b}) \geqslant v_{V'}(\mathbf{b}) - v_V(\mathbf{b})
\end{align}
Integrating both sides yields the result
\begin{equation}
    S(U') - S(U) \geqslant S(V') - S(V).
\end{equation}
\end{proof}

\begin{theorem}
    \label{thm:qdconnection}
    Let $\btheta = \{\theta_1, \dots, \theta_N\}$ be a population of $N$ solutions with corresponding quality values $f_1, \dots, f_N$ and distinct behaviors $\mathbf{b}_1, \dots, \mathbf{b}_N$ in $\R^d$.
    Let $S(\btheta)$ be the Soft QD Score defines as
    \begin{equation}
        S(\btheta) = \int_{\R^d} \max_{1\leqslant n \leqslant N}\left[f_n \exp\left(-\frac{\|\mathbf{b} - \mathbf{b}_n\|^2}{2\sigma^2}\right)\right]\,\mathrm{d}\mathbf{b}
    \end{equation}
    In the limit as the kernel width $\sigma$ approaches zero, the scaled Soft QD Score converges to the sum of the qualities of all solutions in the population.
    \begin{equation}
        \lim_{\sigma \to 0}\frac{S(\btheta)}{(2\pi\sigma^2)^{\frac{d}{2}}} = \sum_{n=1}^{N}f_n
    \end{equation}
    This limit is equivalent to the traditional QD Score calculated on a grid fine enough to isolate each solution into its own cell.
\end{theorem}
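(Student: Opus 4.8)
The plan is a squeeze (sandwich) argument: I would bound the normalized score $S(\btheta)/(2\pi\sigma^2)^{d/2}$ between two expressions that both converge to $\sum_n f_n$ as $\sigma\to 0$. Throughout I write $g_n(\mathbf{b}) = f_n\exp(-\|\mathbf{b}-\mathbf{b}_n\|^2/2\sigma^2)$ as in the proof of Theorem~\ref{thm_app:1}, so that the integrand is $v_{\btheta}(\mathbf{b})=\max_n g_n(\mathbf{b})$ with each $g_n\geqslant 0$ (using $f_n\geqslant 0$).

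For the upper bound I would use the elementary pointwise inequality $\max_n g_n(\mathbf{b})\leqslant\sum_n g_n(\mathbf{b})$. Integrating term by term with the single-Gaussian integral $\int g_n\,\mathrm{d}\mathbf{b}=f_n(2\pi\sigma^2)^{d/2}$ (already computed in the proof of Theorem~\ref{thm_app:1}) and dividing by $(2\pi\sigma^2)^{d/2}$ gives $S(\btheta)/(2\pi\sigma^2)^{d/2}\leqslant\sum_n f_n$ for every $\sigma>0$, which is already the target value, so nothing more is needed on this side.

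For the lower bound I would invoke Theorem~\ref{thm_app:1} itself, whose proof establishes that $\tilde S(\btheta)\leqslant S(\btheta)$. Dividing that inequality by $(2\pi\sigma^2)^{d/2}$ yields, for every $\sigma>0$,
\begin{equation}
\sum_n f_n-\sum_{i<j}\sqrt{f_i f_j}\,\exp\!\left(-\frac{\|\mathbf{b}_i-\mathbf{b}_j\|^2}{8\sigma^2}\right)\;\leqslant\;\frac{S(\btheta)}{(2\pi\sigma^2)^{d/2}}\;\leqslant\;\sum_n f_n .
\end{equation}
This is where the distinctness hypothesis does the work: since $\mathbf{b}_i\neq\mathbf{b}_j$ for $i\neq j$, each squared distance $\|\mathbf{b}_i-\mathbf{b}_j\|^2$ is strictly positive, so every exponential tends to $0$ as $\sigma\to 0$ and the left-hand side converges to $\sum_n f_n$. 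The squeeze theorem then delivers $\lim_{\sigma\to0}S(\btheta)/(2\pi\sigma^2)^{d/2}=\sum_n f_n$.

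Because all the heavy computation is inherited from Theorem~\ref{thm_app:1}, the proof is short, and the only point demanding care is conceptual rather than technical: the lower bound must come from the second-order (pairwise) approximation rather than a crude estimate such as $\max_n g_n\geqslant\tfrac{1}{N}\sum_n g_n$, precisely so that the gap between the two sides of the sandwich is governed by the pairwise overlap terms, which are exactly the quantities annihilated by the distinctness of the behaviors as $\sigma\to0$. Finally, to justify the closing remark I would observe that on any grid fine enough to place the $N$ distinct behaviors in $N$ separate cells, each such cell is occupied by a single solution and contributes exactly its quality $f_n$, so the traditional $\mathrm{QD\,Score}$ equals $\sum_n f_n$, matching the limit.
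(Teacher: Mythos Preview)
Your proof is correct but takes a different route from the paper for the lower bound. Both proofs share the same upper bound (the pointwise inequality $\max_n g_n\leqslant\sum_n g_n$). For the lower bound, however, the paper argues directly: it sets $r=\tfrac{1}{2}\min_{i\neq j}\|\mathbf{b}_i-\mathbf{b}_j\|$, restricts the integral to the disjoint balls $B_n=\{\mathbf{b}:\|\mathbf{b}-\mathbf{b}_n\|<r\}$, bounds the maximum from below on each $B_n$ by the single term $g_n$, and then uses that the mass of a Gaussian with variance $\sigma^2 I$ inside a fixed ball about its mean tends to $1$ as $\sigma\to 0$. You instead recycle the inequality $\tilde S(\btheta)\leqslant S(\btheta)$ already established in Theorem~\ref{thm_app:1}, so that the entire lower half of the sandwich is the closed-form expression $\sum_n f_n-\sum_{i<j}\sqrt{f_if_j}\,e^{-\|\mathbf{b}_i-\mathbf{b}_j\|^2/8\sigma^2}$, whose overlap terms vanish by distinctness. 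Your argument is shorter and exploits existing machinery; the paper's argument is self-contained and does not rely on having checked that both approximations in Theorem~\ref{thm_app:1} (truncation and the geometric-mean bound on $\min$) go in the right direction to yield a genuine lower bound. Both are valid; yours is the more economical once Theorem~\ref{thm_app:1} is in hand.
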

\begin{proof}
    Let the scaled Soft QD Score be denoted by $L(\sigma)$:
    \begin{equation}
        L(\sigma) = \frac{S(\btheta)}{(2\pi\sigma^2)^{\frac{d}{2}}} = \frac{1}{(2\pi\sigma^2)^{\frac{d}{2}}} \int_{\R^d} \max_{1\leqslant n \leqslant N}\left[f_n \exp\left(-\frac{\|\mathbf{b} - \mathbf{b}_n\|^2}{2\sigma^2}\right)\right]\,\mathrm{d}\mathbf{b}
    \end{equation}
    Since the sum of a set of non-negative number is at least as large as their maximum, we can replace the max in the integral with a summation and write
    \begin{align}
        L(\sigma) &\leqslant \frac{1}{(2\pi\sigma^2)^{\frac{d}{2}}} \int_{\R^d} \sum_{n=1}^{N}\left[f_n \exp\left(-\frac{\|\mathbf{b} - \mathbf{b}_n\|^2}{2\sigma^2}\right)\right]\,\mathrm{d}\mathbf{b} \label{eq:conn_upperb} \\
        &= \frac{1}{(2\pi\sigma^2)^{\frac{d}{2}}} \sum_{n=1}^{N}  f_n\int_{\R^d} \exp\left(-\frac{\|\mathbf{b} - \mathbf{b}_n\|^2}{2\sigma^2}\right)\,\mathrm{d}\mathbf{b} \label{eq:temp1}\\
        &= \sum_{n=1}^{N} f_n. \label{eq:temp2}
    \end{align}
    Note that Eq.~\ref{eq:temp1} holds due to the linearity of integration and Eq.~\ref{eq:temp2} is true since the Gaussian integrals over the entire domain evaluate to $(2\pi \sigma^2)^{\frac{d}{2}}$.

    Next, let $r = \frac{1}{2} \min_{i\neq j} \|\mathbf{b}_i - \mathbf{b}_j\|$.
    Following this definition, we can see that the open balls $B_n = \{\mathbf{b}: \|\mathbf{b}-\mathbf{b}_n\| < r\}$ centered at each behavior point \vbnote{I don't remember if you used "behavior point" for this before} with radius $r$ are disjoint. \\
    The integrand in $S(\btheta)$ is non-negative, so the integral over $\R^d$ is greater than or equal to the integral over the union of these disjoint balls:
    \begin{align}
        S(\btheta) &\geqslant \int_{\cup_n B_n} \max_{i}\left[f_i \exp\left(-\frac{\|\mathbf{b}-\mathbf{b}_i\|^2}{2\sigma^2}\right)\right] \, \mathrm{d} \mathbf{b} \\
        &= \sum_n\int_{B_n} \max_{i}\left[f_i \exp\left(-\frac{\|\mathbf{b}-\mathbf{b}_i\|^2}{2\sigma^2}\right)\right] \, \mathrm{d} \mathbf{b}
    \end{align}
    For any point $\mathbf{b}$ inside a specific ball $B_n$, the maximum value is always greater than or equal to the term for $n$:
    \begin{align}
    \max_{i}\left[f_i \exp\left(-\frac{\|\mathbf{b}-\mathbf{b}_i\|^2}{2\sigma^2}\right)\right] \geqslant f_n \exp\left(-\frac{\|\mathbf{b}-\mathbf{b}_n\|^2}{2\sigma^2}\right)
    \end{align}
    Substituting this, gives a lower bound for $S(\btheta)$
    \begin{align}
        S(\btheta) &\geqslant \sum_n f_n \int_{B_n} \exp\left(-\frac{\|\mathbf{b}-\mathbf{b}_n\|^2}{2\sigma^2}\right) \, \mathrm{d} \mathbf{b}
    \end{align}
    Now, we can analyze the limit of the scaled version of this lower bound:
    \begin{align}
        \lim_{\sigma \to 0} L(\sigma) &\geqslant \lim_{\sigma \to 0} \sum_{n} \frac{f_n}{(2\pi\sigma^2)^{\frac{d}{2}}} \int_{B_n} \exp\left(-\frac{\|\mathbf{b} - \mathbf{b}_n\|^2}{2\sigma^2}\right) \, \mathrm{d}\mathbf{b} \\
        &=
        \sum_n f_n \left(\lim_{\sigma \to 0} \frac{1}{(2\pi\sigma^2)^{\frac{d}{2}}} \int_{B_n} \exp\left(-\frac{\|\mathbf{b} - \mathbf{b}_n\|^2}{2\sigma^2}\right) \, \mathrm{d}\mathbf{b}\right)
    \end{align}
    Similar to the integral of Eq.~\ref{eq:temp1}, The term in the parenthesis here is the integral of the PDF of $\mathcal{N}(\mathbf{b}_n, \sigma^2I)$.
    Since the ball $B_n$ is a fixed region containing the mean $\mathbf{b}_n$, all of the probability mass concentrates inside $B_n$ as $\sigma \to 0$.
    Therefore, the limit of the integral approaches the denominator and the whole limit will evaluate to one and we have
    \begin{equation}
        \label{eq:conn_lowerb}
        \lim_{\sigma \to 0} L(\sigma) \geqslant \sum_n f_n
    \end{equation}
    Together, Eq.~\ref{eq:conn_lowerb} and Eq.~\ref{eq:conn_upperb} sandwich $L(\sigma)$.
    Hence,
    \begin{equation}
        \lim_{\sigma \to 0} L(\sigma) = \sum_{n=1}^{N} f_n.
    \end{equation}
    Lastly, note that if the solutions in $\btheta$ all fall into distinct cells of an archive, as is the case with the populations that conventional QD methods generate or in the limit of having very fine-grained archives, the sum of the objectives coincides with the QD Score and we have
    \begin{equation}
        \lim_{\sigma \to 0}\frac{S(\btheta)}{(2\pi\sigma^2)^{\frac{d}{2}}} = \sum_{n=1}^{N}f_n = \mathrm{QD\,Score} (\btheta)
    \end{equation}
\end{proof}

\section{Additional Ablations}
\subsection{Effect of Batch Size}
\label{app:bsz_ablation}
SQUAD updates the solutions in its population one batch at a time (Section~\ref{par:batch_and_nn}), primarily to reduce the computational cost of simultaneous updates.  
To examine the role of batch size $M$, we performed an ablation in the IC domain, evaluating SQUAD with $M \in \{4, 8, 16, 32, 64\}$.
Each setting was repeated three times with different random seeds.
The results are reported in Table~\ref{tab:bsz_ablation}.

\begin{table}[ht]
    \caption{\textbf{Effect of batch size on SQUAD.} Results report the mean and standard error over three random seeds for five batch sizes. Experiment in the main paper use $M=64$.}
    \centering
    \label{tab:bsz_ablation}
    \begin{tabularx}{\linewidth}{l *{2}{>{\centering\arraybackslash}X}}
        \toprule
        Algorithm & QD Score \small{($\times10^3$)} & QVS \\
        \midrule
        SQUAD ($M=4$) & $\mathbf{5.32\pm0.05}$ & $457.4\pm0.4$ \\
        SQUAD ($M=8$) & $5.12\pm0.04$ & $457.7\pm0.4$ \\
        SQUAD ($M=16$) & $5.04\pm0.16$ & $\mathbf{458.4\pm0.5}$ \\
        SQUAD ($M=32$) & $4.99\pm0.09$ & $457.9\pm0.3$ \\
        SQUAD ($M=64$) & $5.07\pm0.05$ & $457.4\pm0.1$ \\
        \bottomrule
    \end{tabularx}
\end{table}

Overall, the performance of SQUAD is stable across all tested batch sizes.
Smaller batch sizes tend to yield slightly higher QD scores, whereas larger batch sizes achieve marginally better QVS.
However, these differences are minor compared to the variation observed between algorithms, indicating that SQUAD is robust to the choice of batch size.
All experiments were conducted on a machine equipped with an NVIDIA GeForce RTX 4090.
The approximate runtimes for different batch sizes were as follows: batch size $4$ took $5$ hours and $47$ minutes, batch size $8$ took $2$ hours and $55$ minutes, batch size $16$ took $2$ hours and $42$ minutes, batch size $32$ took $3$ hours and $10$ minutes, batch size $64$ took $3$ hours and $10$ minutes.
\subsection{Effect of Number of Neighbors}
\label{app:nn_ablation}
We ablate the sensitivity of SQUAD to the number of nearest neighbors $k$ used in the diversity term of the objective.
Experiments were run in the IC domain with $k \in \{0, 4, 8, 16, 32\}$ and the results over three random seeds are reported in Table~\ref{tab:nn_ablation}.
Note that the case $k=0$ removes the diversity term entirely and therefore corresponds to independently optimizing each solution for quality alone.

\begin{table}[h!]
    \caption{\textbf{Effect of nearest neighbors' count on SQUAD.} Results report the mean and standard error over three random seeds. Experiment in the main paper use $k=16$.}
    \centering
    \label{tab:nn_ablation}
    \begin{tabularx}{\linewidth}{l *{2}{>{\centering\arraybackslash}X}}
        \toprule
        Algorithm & QD Score \small{($\times10^3$)} & QVS \\
        \midrule
        SQUAD ($k=0$) & $0.09\pm0.00$ & $90.8\pm0.0$ \\
        SQUAD ($k=4$) & $\mathbf{5.37\pm0.06}$ & $457.7\pm0.4$ \\
        SQUAD ($k=8$) & $5.02\pm0.03$ & $\mathbf{459.7\pm0.5}$ \\
        SQUAD ($k=16$) & $5.07\pm0.05$ & $457.4\pm0.1$ \\
        SQUAD ($k=32$) & $5.19\pm0.04$ & $448.5\pm0.6$ \\
        \bottomrule
    \end{tabularx}
\end{table}

Two conclusions follow from these results.
First, the diversity term is essential as the $k=0$ baseline achieves significantly worse performance in both QD Score and QVS, suggesting a collapse into a set of nearly identical solutions.
Second, for other values of $k>0$, the performance is largely insensitive to the exact number of neighbors, since neither QD Score nor QVS vary largely across $k=4,8, 16, 32$.
In practice, this means that a small $k$ already provides the repulsive pressure needed to encourage diverse and high quality solutions.

Our hypothesis for this observed insensitivity is that once solutions are separated by distances larger than the kernel bandwidth $\gamma$, which could happen early in the optimization, the kernel contribution from farther solutions rapidly decays towards zero, so only the closest neighbors exert meaningful repulsion.
This hypothesis, combined with our results indicating the important role that $\gamma$ plays, also suggests several promising directions for future work: (1) per-solution or adaptive kernel widths $\gamma$ based on the current spread of solutions in the behavior space to control the repulsive force each solution receives; possibly combined with (2) annealing schedules that dampen the diversity term over training so that exploration is encouraged early on and fine-grained quality optimization dominates later.

\subsection{Effect of Behavior Space Transformation}
\label{app:transformation_ablation}
Recall that the derivation of SQUAD's objective (Equation~\ref{eq:squad}) assumed that the behavior space is unbounded.
To transform the bounded behavior space $[0, 1]^d$ that is used in the tasks to $\mathbb{R}^d$, SQUAD used the logit transformation:
\begin{equation}
    \mathbf{b}' = \log \frac{\mathbf{b}}{1-\mathbf{b}},
\end{equation}
where all operations are performed element-wise.
In our final ablation, we seek to understand the impact of this transformation on SQUAD's performance.
To this end, we run SQUAD without using this transformation \vbnote{Does this mean you still use the same SQUAD lower bound but just use the raw solutions?}\shnote{Yes} with three different random seeds and compare the results with the original experiments that were conducted in Section~\ref{sec:qd_tradeoff} (conducted over ten random seeds).
As the comparison of the results in Table~\ref{tab:01desc_ablation} shows, the logit transformation is critical to the successful performance of SQUAD.
\begin{table}[h!]
    \caption{\textbf{Effect of behavior space transformation on SQUAD.} Results report the mean and standard error over three random seeds for the ablated version and over ten random seeds for the base version.}
    \centering
    \label{tab:01desc_ablation}
    \begin{tabularx}{\linewidth}{l *{2}{>{\centering\arraybackslash}X}}
        \toprule
        Algorithm & QD Score \small{($\times10^3$)} & QVS \\
        \midrule
        SQUAD  & $5.09\pm0.05$ & $457.3\pm0.2$ \\
        SQUAD (w/o behavior space transformation) & $2.96\pm0.04$ & $186.6\pm0.8$ \\
        \bottomrule
    \end{tabularx}
\end{table}

\subsection{Effect of Population Size}
\label{app:popsize_ablation}
We also study the impact that role that population size has on the performance of SQUAD.
To this end, we compare the performance of SQUAD in the IC domain with 4 different population sizes $N = 128, 256, 512, 1024$.
The results, reported in Table~\ref{tab:popsize_ablation}, conform with our expectation that larger populations yield better performance.
Furthermore, we observed that the runtime of SQUAD had a linear relation with the population size, with the mean runtimes being 24, 48, 95, 190 minutes for population sizes of 128, 256, 512, and 1024, respectively (averaged over 3 seeds, with variances less than 10 seconds). 

\begin{table}[h!]
    \caption{\textbf{Effect of population size on SQUAD.} Results report the mean and standard error over three random seeds. Experiment in the main paper use $N=1024$.}
    \centering
    \label{tab:popsize_ablation}
    \begin{tabularx}{\linewidth}{l *{2}{>{\centering\arraybackslash}X}}
        \toprule
        Algorithm & QD Score \small{($\times10^3$)} & QVS \\
        \midrule
        SQUAD ($N=128$) & $3.24\pm0.43$ & $390.6\pm1.6$ \\
        SQUAD ($N=256$) & $4.12\pm0.57$ & $422.7\pm0.8$ \\
        SQUAD ($N=512$) & $4.26\pm0.60$ & $445.2\pm0.3$ \\
        SQUAD ($N=1024$) & $\mathbf{4.50\pm0.62}$ & $\mathbf{457.5\pm0.6}$ \\
        \bottomrule
    \end{tabularx}
\end{table}

\section{Implementation Details}
\label{app:implementation}

\subsection{Benchmark Domains}
\label{app:domains}
\subsubsection{Linear Projection}
\label{app:lp}
We adopt the Linear Projection (LP) domain introduced by \citet{cma_me}, using the Rastrigin objective function \citep{rastrigin}.
The QD search is performed over solution vectors $\mathbf{x} \in \mathbb{R}^n$, with dimensionality set to $n=1024$.
The Rastrigin function is defined as
\begin{equation}
    f_{\mathrm{Rastrigin}}(\mathbf{x}) = 10n + \sum_{i=1}^{n}[x_i^2 - 10\cos(2\pi x_i)].
\end{equation}
Following prior work \citep{heartstone}, we restrict the search space to $[-5.12, 5.12]^n$ and apply an offset so that the global optimum is shifted from the origin to $[\underbrace{2.048, \dots, 2.048}_{n}]^T$.
To transform the problem from minimization to maximization, we normalize the objective via a linear transformation
\begin{equation}
    f(\mathbf{x}) = 100\times \frac{M - f_{Rastrigin}(\mathbf{x})}{M}
\end{equation}
where $M$ denotes the maximum value of the Rastrigin function in the search domain.
This results in objective values scaled to the range $[0, 100]$.
A heatmap of the transformed Rastrigin function in $2$ dimensions is depicted in Figure~\ref{fig:domain_assets}.

The behavior space is defined by partitioning $\mathbf{x} = [x_1, \dots, x_n]^T$ into $d$ equal-sized chunks and computing the mean of clipped values within each chunk.
More formally, the $k$-dimensional behavior descriptor is given by
\begin{equation}
    \desc(\mathbf{x}) = \frac{1}{d}\left(\sum_{i=1}^{\frac{n}{d}} \mathrm{clip}(x_i), \sum_{i=\frac{n}{d}+1}^{\frac{2n}{d}} \mathrm{clip}(x_i), \dots, \sum_{i=\frac{(d-1)n}{d}+1}^{n} \mathrm{clip}(x_i)\right)^T,
\end{equation}
where the clipping function is defined as 
\begin{equation}
    \mathrm{clip}(x_i) = \begin{cases}
    x_i & \text{if } -5.12 \leq x_i \leq 5.12 \\
    \frac{5.12}{x_i} & \text{otherwise}
    \end{cases}
\end{equation}
In our experiments, we evaluated behavior spaces of dimensionality $d \in \{4, 8, 16\}$.
For additional details on this domain, including its challenges in high-dimensional settings, we refer the reader to \citet{dqd,cma_mae}.

\subsubsection{Image Composition}
\label{app:ic}

\begin{figure}[h]
    \centering
    \begin{subfigure}[b]{0.45\textwidth}
        \centering
        \includegraphics[height=5cm]{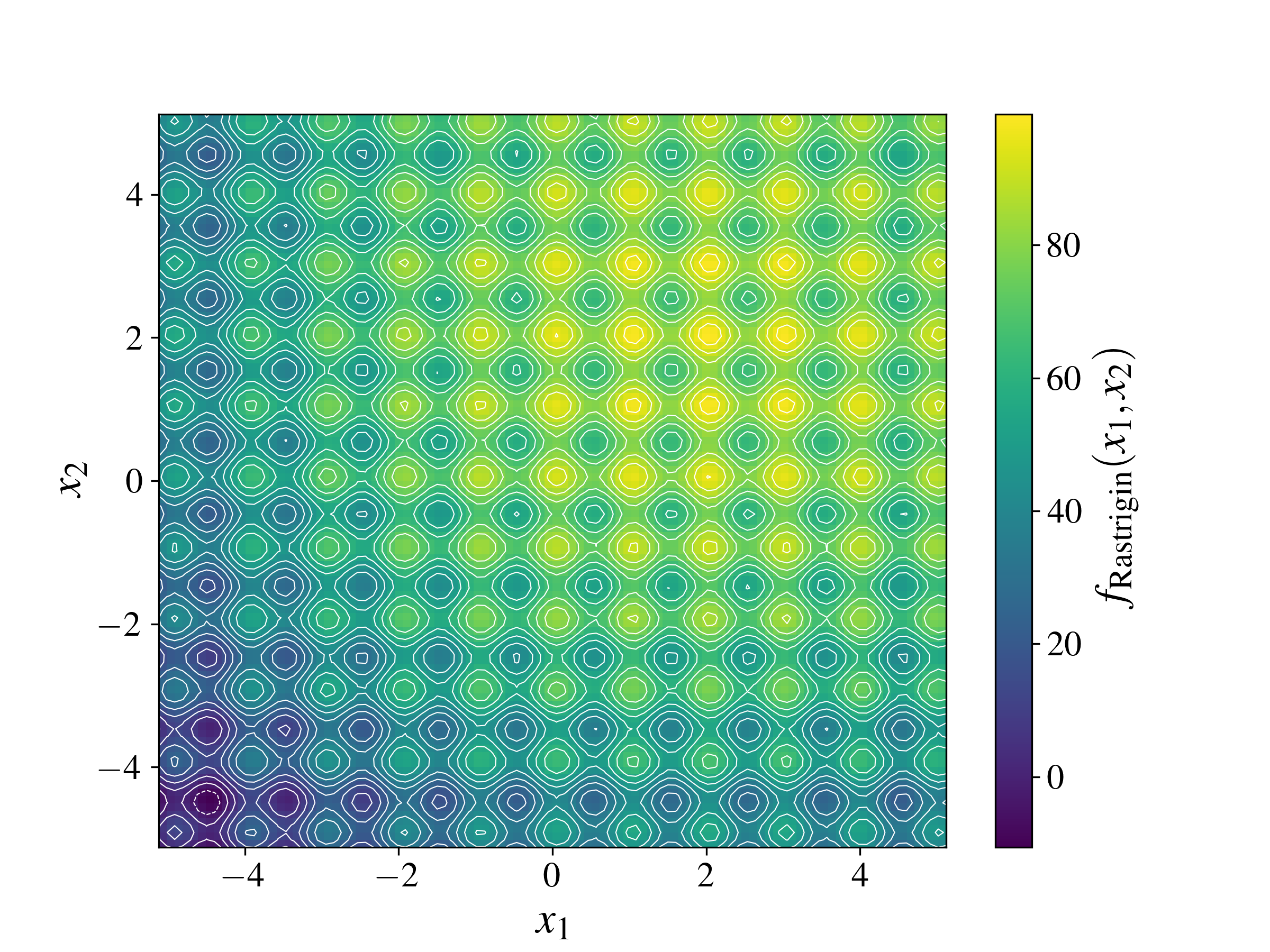}
        \caption{\textbf{(Transformed) $2$-d Rastrigin function}}
    \end{subfigure}
    \hfill
    \begin{subfigure}[b]{0.45\textwidth}
        \centering
        \includegraphics[height=4.8cm,width=4.8cm]{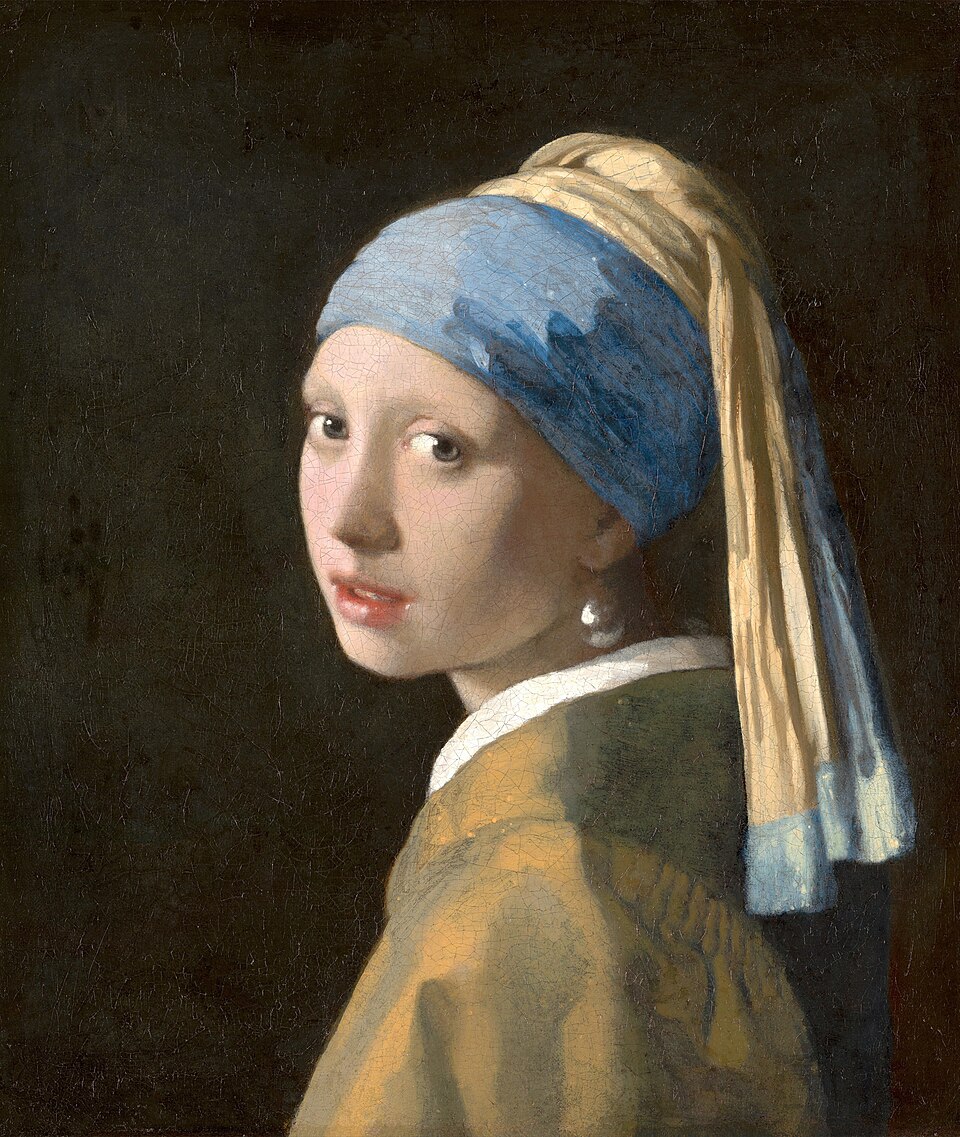}
        \caption{\textbf{Target image for the IC domain}}
    \end{subfigure}
    \caption{}
    \label{fig:domain_assets}
\end{figure}

Image Composition (IC) is a new differentiable QD benchmark we introduce, inspired by related works \citep{triangle_art,dqd_art}.
The IC task aims to reconstruct a target image by composing a large number of simple primitives (circles) on a canvas.
The objective is to match the target as closely as possible while exploring diverse visual effects, which are captured through five behavioral descriptors.
In this task, a solution is a vector of size $n \times 7$, where $n$ denotes the number of circles (here $n = 1024$).
Each row parameterizes a circle with $7$ values: the $(x, y)$ center coordinates, radius, RGB color values, and an opacity coefficient.
All parameters are represented as unconstrained logits, which are passed through sigmoidal transformations and rescaled to the appropriate range.

A differentiable renderer composites these circles in sequence onto a canvas of resolution $64 \times 64$.
Each circle is drawn with smooth edges, controlled by a softness hyperparameter (here set to $10.0$).
Rendering is performed by alpha-compositing onto a black canvas.

The objective is defined as the similarity between the rendered image and a fixed target image.
We structural similarity (SSIM) \citep{ssim}, normalized to the range $[0,100]$ to measure this.
The behavioral descriptors are five statistics computed from the circles:
\begin{itemize}
    \item mean radius of circles,
    \item variance of radii,
    \item variety of RGB values in the palette (color spread),
    \item coherence of circle hues in HSV space (color harmony),
    \item degree of spatial clustering based on average $5$-nearest-neighbor distances.
\end{itemize}

These descriptors are each normalized to lie in $[0,1]$, with higher values representing larger radii, greater diversity, more harmony, or tighter clustering, respectively.
Together with the objective, they define a continuous and differentiable QD landscape.
As the target image in our experiments, we use Johannes Vermeer’s painting Girl with a Pearl Earring (Figure~\ref{fig:domain_assets}), obtained from the freely available reproduction on Wikimedia Commons.

\subsubsection{Latent Space Illumination}
\label{app:lsi}
Latent Space Illumination (LSI) \citep{lsi_mario, dqd} is a challenging QD benchmark designed to illuminate the latent space of a generative model by discovering diverse and high-quality solutions.
Following the experimental setup of \citet{cma_mae, ribs}, we employ StyleGAN2~\cite{stylegan2} as the generative model and use CLIP~\citep{clip} to define both the objective and behavior descriptor functions.

Each solution in LSI is represented as a $9216$-dimensional vector corresponding to a point in the latent space of StyleGAN2.
To evaluate solution quality, we pass the vector through StyleGAN2 to generate an image, which is then compared to a target text prompt using CLIP embeddings.
Behavior descriptors are similarly computed by comparing the generated image against pairs of descriptor sentences, one positive and one negative.
Our implementation is based on JAX~\citep{jax}, and we rely on publicly available JAX-based implementations of both StyleGAN2 and CLIP.

We define two task variants:
\begin{itemize}
    \item \textbf{Base version.} This setup follows \citet{cma_mae} and uses the prompt ``A photo of Tom Cruise'' as the objective.
    Behavior descriptors are specified by two sentence pairs:
    \begin{itemize}
        \item (``Photo of Tom Cruise as a small child'', ``Photo of Tom Cruise as an elderly person'')
        \item (``Photo of Tom Cruise with long hair'', ``Photo of Tom Cruise with short hair'')
    \end{itemize}
    \item \textbf{Hard version.} To increase task difficulty, we use the objective prompt ``A photo of a detective from a noir film'' and define seven behavior descriptor pairs:
    \begin{itemize}
        \item (``Photo of a young kid'', ``Photo of an elderly person'')
        \item (``Photo of a person with long hair'', ``Photo of a person with short hair'')
        \item (``Photo of a person with dark hair'', ``Photo of a person with white hair''
        \item (``Photo of a person smiling'', ``Photo of a person frowning'')
        \item (``Photo of a person with a round face'', ``Photo of a person with an oval face'')
        \item (``Photo of a person with thin, sparse hair'', ``Photo of a person with thick, full hair'')
        \item (``Photo of a person looking directly into the camera'', ``Photo of a person looking sideways'')
    \end{itemize}
    
\end{itemize}

\subsection{Evaluation Metrics}
\label{app:metrics}
We used multiple evaluation metrics in our experiments, which we shall explain here in more detail.

\textbf{Vendi Score} \citep{vendi} is a widely applicable metric of diversity in machine learning.
Given a set of samples and a pairwise similarity function, Vendi Score can be interpreted as the effective number of unique elements in the set.
Formally, it is defined as
\begin{equation}
    \mathrm{VS}(\mathbf{K}) = \exp\left(-\mathrm{tr}(\frac{1}{n}\mathbf{K}\log \frac{1}{n}\mathbf{K})\right) = \exp\left(-\sum_{i=1}^{n} \lambda_i \log \lambda_i\right),
\end{equation}
where $\mathbf{K} \in \mathbb{R}^{n\times n}$ is a positive semi-definite similarity matrix and $\lambda_i$ are the eigenvalues of $\frac{1}{n}\mathbf{K}$.
In our case, the similarity matrix is derived by applying the Gaussian kernel on the distance between the behavior descriptors of solutions.
That is, for solutions $i$ and $j$ with behavior descriptors $\mathbf{b}_i, \mathbf{b}_j$ we define:
$\mathbf{K}_{ij} = \exp\left(-\frac{\|\mathbf{b}_i - \mathbf{b}_j\|^2}{\sigma_v^2}\right)$
where $\sigma_v$ is a kernel bandwidth.
For a $d$ dimensional behavior space, we choose $\sigma_v^2 = \frac{d}{6}$ in our evaluations.
This (heuristic) choice is motivated by the fact that the mean squared distance between two uniformly selected vectors in $[0, 1]^d$ is $\frac{d}{6}$.
Since in all of our experiments the behavior space is defined as $[0, 1]^d$, this ensures that the similarity between two random vectors in the behavior space will be $e^{-1} \approx 0.37$.

\textbf{Quality-weighted Vendi Score (QVS)} \citep{quality_vendi} extends Vendi Score to also incorporate the quality of solutions.
It is computed by multiplying the Vendi Score by the mean quality of the solutions:
\begin{equation}
    \mathrm{QVS}(\mathbf{K}, (f_1, \dots, f_n)) = \frac{1}{n}\sum_{i=1}^{n} f_i \;\mathrm{VS}(\mathbf{K}),
\end{equation}
where $f_i$ is the quality of the $i$-th solutions.
We use QVS to capture the joint effect of quality and diversity in a population.
Importantly, the mean quality of the solutions must be non-negative for the metric to be meaningful.
While in our domains the objectives are normalized such that sensible solutions have objectives in the $[0, 100]$ range, it is still possible for out-of-bound solutions in the LP and LSI domains to obtain negative values, since, theoretically, their objective functions are unbounded from below.
In cases where the mean objective of the solutions were negative (which only happened with Sep-CMA-MAE and GA-ME in LSI), we report the QVS as $0.0$ and report the fine-grained statistics, including the mean objective and Vendi Score, in Appendix~\ref{app:additional_results}.

We also leverage a discretization of the behavior space using CVT \citep{cvt} and report traditional QD metrics such as \textbf{QD Score} \citep{qdscore} and \textbf{Coverage}.
Even though the exponentially growing volume of the cells hinders the performance of optimization algorithms that leverage such archives (as noted in the paper), we can still use them for evaluation.
To compute these metrics, we discretize the behavior space into a fixed number of cells ($1024$ for IC and $512$ for LP and LSI) and insert the solutions from a population into the resulting archive, keeping only the best solution that lands in a cell.
The \textbf{Coverage} is then defined as the fraction of cells that are filled with solutions and captures the diversity of the population.
In a same manner \textbf{QD Score} is defined as the sum of the qualities (objectives) of all the solutions in the resulting archive, and captures both quality and diversity.

Lastly, we also use \textbf{Mean Objective} and \textbf{Max Objective} to measure the quality of populations.
These are simply defined as the mean (and respectively maximum) of the qualities (objectives) of the solutions in a population.

\subsection{Hyperparameters}
\label{app:hparams}
The full set of hyperparameters that we used for the experiments can be found in the accompanying code. Here, we will go over the most important choices.
\subsubsection{Baselines}
All baselines use a CVT archive with $10^4$ cells in LP and IC domains and a finer archive with $4\times10^4$ cells in the LSI domain.
In LP and IC domains we ran a grid search for each algorithm on the most important hyperparameters and selected the configuration that yielded the highest QD Score.
\begin{itemize}
    \item For CMA-MEGA, we searched over initial step size of the ES ($\sigma_0$) and the optimizer learning rate.
    \item For CMA-MAEGA we searched over initial step size of the ES ($\sigma_0$), optimizer learning rate, and archive learning rate.
    \item For Sep-CMA-MAE we searched over initial step size of the ES ($\sigma_0$) and archive learning rate.
    \item For GA-ME we tuned iso and line sigma parameters and the gradient step size.
\end{itemize}

For LSI, we used the default hyperparameters from \citet{cma_mae} used in the pyribs \citep{ribs} implementation, with the only difference being the batch size, where we use $16$ instead of $32$ due to computational constraints.
Furthermore, for DNS, we ran a similar grid search over the iso and line sigma parameters, the number of nearest neighbors ($k$), as well as the learning rate (for the DNS-G variant) in the IC domain to determine appropriate hyperparameters.
We also chose the number of iterations such that all algorithms use (roughly) the same number of solution evaluations.

\subsubsection{SQUAD}
By default, SQUAD uses the values presented in Table~\ref{tab:squad_hparams} for its hyperparameters and uses Adam \citep{adam} to optimize its objective.
Below, we will discuss the exceptions to these default values.
\begin{enumerate}
    \item \textbf{LSI domain:} Due to computational constraints, we use a population size of $256$ and a batch size of $8$.
    We use $\gamma^2=0.01$ for the base version of the task and $\gamma^2 = 0.1$ for the hard version. We also use a larger learning rate of $0.1$ for the Adam optimizer.
    \item \textbf{LP domain:} We use $\gamma^2=0.1$ for the easy version, $\gamma^2=0.5$ for the medium version, and $\gamma^2=1.0$ for the hard version.
\end{enumerate}

\begin{table}[h]
\centering
\caption{Default SQUAD parameters}
\label{tab:squad_hparams}
\begin{tabular}{lc}
\toprule
\textbf{Parameter} & \textbf{Value} \\
\midrule
Population Size ($N$) & $1024$ \\
Batch Size ($M$) & $64$ \\
No. Neighbors ($K$) & $16$ \\ 
Learning Rate & $0.05$ \\
\bottomrule
\end{tabular}
\end{table}

We train SQUAD for $1000$ iterations in LP and IC and for $175$ iterations in LSI.
The number of training iterations of baselines were set such that they use at least as many evaluations as SQUAD in all domains.

\section{Additional experimental results}
\label{app:additional_results}
Here we provide more fine-grained statistics from the main experiments in the paper.
Table~\ref{tab:app_lp_stats} and \ref{tab:app_lsi_stats} report the mean and max objectives, Vendi Score, and Coverage statistics of each algorithm in the LP and LSI domains, respectively. 
Table~\ref{tab:app_ic_stats} reports the QD Score and QVS from the IC experiments.
As noted in the paper, LP and IC results are averaged over $10$ seeds and LSI results are averaged over $5$ seeds.

We also provide hand-picked samples of the solutions found by SQUAD as well as the two best baselines, CMA-MEGA and CMA-MAEGA, in both LSI tasks (Figure~\ref{fig:lsi_images} and Figure~\ref{fig:lsi_hard_images}) and the IC domain (Figure~\ref{fig:ic_images}).
Lastly, Figure~\ref{fig:archives} compares the solutions found by SQUAD and CMA-M(A)EGA in the LSI domain when they are put in a traditional CVT archive. Since this domain has a $2$-d behavior space, we can provide CVT archive visualizations for it.
\begin{table}[h]
\centering
\caption{Additional statistics from LP experiments.}
\label{tab:app_lp_stats}
\begin{tabular}{lcccc}
\toprule
\textbf{Algorithm} & \textbf{Mean Objective} & \textbf{Max Objective} & \textbf{Vendi Score} & \textbf{Coverage} \\
\midrule
\multicolumn{5}{l}{\textit{easy ($d=4$)}} \\
\midrule
SQUAD         & $68.36\pm0.02$ & $89.28\pm0.15$ & $6.55\pm0.01$ & $86.4\pm0.3$ \\
CMA-MAEGA        & $66.02\pm0.28$ & $91.00\pm0.43$ & $6.93\pm0.05$ & $98.7\pm0.2$ \\
CMA-MEGA        & $66.40\pm0.26$ & $94.54\pm0.38$ & $7.61\pm0.10$ & $99.5\pm0.2$ \\
DNS        & $68.07\pm0.06$ & $78.06\pm0.04$ & $1.63\pm0.00$ & $8.1\pm0.2$ \\
DNS-G        & $78.23\pm0.12$ & $92.51\pm0.09$ & $1.35\pm0.01$ & $4.0\pm0.1$ \\
Sep-CMA-MAE        & $69.81\pm0.28$ & $78.71\pm0.29$ & $1.25\pm0.01$ & $1.8\pm0.0$ \\
GA-ME        & $69.76\pm0.95$ & $79.42\pm0.21$ & $1.07\pm0.01$ & $0.9\pm0.0$ \\
\midrule
\multicolumn{5}{l}{\textit{medium ($d=8$)}} \\
\midrule
SQUAD         & $69.41\pm0.02$ & $87.70\pm0.36$ & $9.17\pm0.02$ & $93.1\pm0.2$ \\
CMA-MAEGA        & $62.12\pm0.20$ & $84.41\pm0.33$ & $9.27\pm0.05$ & $100.0\pm0.0$ \\
CMA-MEGA        & $62.13\pm0.67$ & $86.76\pm0.70$ & $7.58\pm0.19$ & $99.9\pm0.1$ \\
DNS        & $66.87\pm0.08$ & $77.61\pm0.07$ & $1.67\pm0.00$ & $13.4\pm0.2$ \\
DNS-G        & $75.96\pm0.07$ & $91.41\pm0.22$ & $1.43\pm0.00$ & $7.5\pm0.2$ \\
Sep-CMA-MAE        & $66.49\pm1.42$ & $77.04\pm0.13$ & $1.25\pm0.02$ & $1.1\pm0.15$ \\
GA-ME        & $69.48\pm1.08$ & $78.99\pm0.16$ & $1.07\pm0.01$ & $0.6\pm0.0$ \\
\midrule
\multicolumn{5}{l}{\textit{hard ($d=16$)}} \\
\midrule
SQUAD         & $72.86\pm0.01$ & $83.92\pm0.21$ & $6.61\pm0.01$ & $91.1\pm0.2$ \\
CMA-MAEGA        & $64.76\pm2.04$ & $81.27\pm0.66$ & $4.59\pm0.73$ & $71.8\pm13.3$ \\
CMA-MEGA        & $58.11\pm0.39$ & $76.19\pm0.38$ & $3.73\pm0.08$ & $99.8\pm0.1$ \\
DNS        & $66.19\pm0.04$ & $77.13\pm0.07$ & $1.69\pm0.00$ & $60.5\pm0.8$ \\
DNS-G        & $74.29\pm0.09$ & $90.54\pm0.15$ & $1.45\pm0.00$ & $56.2\pm0.7$ \\
Sep-CMA-MAE        & $65.04\pm0.84$ & $78.87\pm0.11$ & $1.33\pm0.01$ & $4.7\pm0.6$ \\
GA-ME        & $78.83\pm0.66$ & $89.76\pm0.17$ & $1.08\pm0.00$ & $3.8\pm0.5$ \\
\bottomrule
\end{tabular}
\end{table}

\begin{table}[h]
\centering
\caption{Additional statistics from LSI experiments.}
\label{tab:app_lsi_stats}
\begin{tabular}{lcccc}
\toprule
\textbf{Algorithm} & \textbf{Mean Objective} & \textbf{Max Objective} & \textbf{Vendi Score} & \textbf{Coverage} \\
\midrule
\multicolumn{5}{l}{\textit{LSI}} \\
\midrule
SQUAD         & $79.56\pm0.45$ & $83.81\pm0.09$ & $2.22\pm0.03$ & $32.4\pm0.5$ \\
CMA-MAEGA        & $77.18\pm3.96$ & $86.98\pm0.45$ & $1.57\pm0.07$ & $15.9\pm2.0$ \\
CMA-MEGA        & $83.38\pm0.89$ & $87.46\pm0.06$ & $1.68\pm0.01$ & $19.7\pm0.2$ \\
DNS        & $-221.24\pm28.01$ & $84.10\pm0.11$ & $1.39\pm0.00$ & $10.2\pm0.3$ \\
DNS-G        & $-288.33\pm26.12$ & $85.22\pm0.05$ & $1.36\pm0.00$ & $9.3\pm0.0$ \\
Sep-CMA-MAE        & $-476.62\pm241.08$ & $-139.05\pm135.77$ & $1.01\pm0.01$ & $0.4\pm0.1$ \\
GA-ME        & $-558.40\pm68.50$ & $83.72\pm0.10$ & $1.25\pm0.01$ & $6.8\pm0.3$ \\
\midrule
\multicolumn{5}{l}{\textit{LSI (hard)}} \\
\midrule
SQUAD         & $82.51\pm0.01$ & $84.26\pm0.09$ & $1.83\pm0.00$ & $6.0\pm0.2$ \\
CMA-MAEGA        & $81.55\pm1.25$ & $87.05\pm0.12$ & $1.22\pm0.02$ & $0.9\pm0.2$ \\
CMA-MEGA        & $84.24\pm0.52$ & $85.98\pm0.2$ & $1.10\pm0.02$ & $0.6\pm0.1$ \\
DNS        & $-222.68\pm12.24$ & $84.02\pm0.04$ & $1.38\pm0.00$ & $10.2\pm0.2$ \\
DNS-G        & $-214.70\pm35.08$ & $85.17\pm0.03$ & $1.35\pm0.01$ & $9.1\pm0.2$ \\
Sep-CMA-MAE        & $-37.6\pm94.16$ & $16.74\pm42.55$ & $1.00\pm0.00$ & $0.2\pm0.0$ \\
GA-ME        & $-168.09\pm217.38$ & $83.46\pm0.14$ & $1.04\pm0.01$ & $0.2\pm0.0$ \\
\bottomrule
\end{tabular}
\end{table}

\begin{table}[t]
\centering
\caption{Additional statistics from IC experiments.}
\label{tab:app_ic_stats}
\begin{tabular}{lcc}
\toprule
\textbf{Algorithm} & \textbf{QD Score} & \textbf{QVS} \\
\midrule
SQUAD         & $5086.2\pm54.7$ & $457.35\pm0.23$  \\
CMA-MAEGA        & $4605.8\pm40.5$ & $294.07\pm1.96$ \\
CMA-MEGA        & $3565.7\pm386.2$ & $246.89\pm17.7$ \\
DNS        & $1128.4\pm15.7$ & $115.17\pm0.65$  \\
DNS-G        & $1148.0\pm24.2$ & $124.72\pm0.33$  \\
Sep-CMA-MAE        & $348.9\pm17.6$ & $94.735\pm0.8$ \\
GA-ME        & $140.9\pm23.2$ & $83.43\pm2.97$  \\
\bottomrule
\end{tabular}
\end{table}

\begin{figure}[t]
    \centering
    \setlength{\tabcolsep}{2pt} %
    \renewcommand{\arraystretch}{1} %
    \begin{tabular}{c c c c c c}
        & \multicolumn{5}{c}{\textbf{Generated Samples}} \\ \\
        SQUAD & 
        \includegraphics[width=0.16\linewidth]{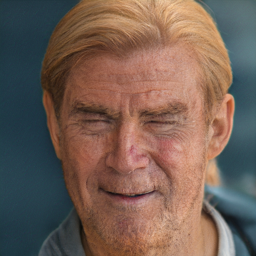} &
        \includegraphics[width=0.16\linewidth]{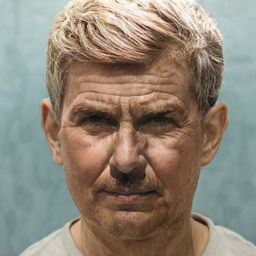} &
        \includegraphics[width=0.16\linewidth]{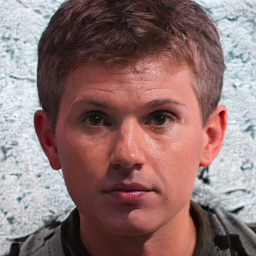} &
        \includegraphics[width=0.16\linewidth]{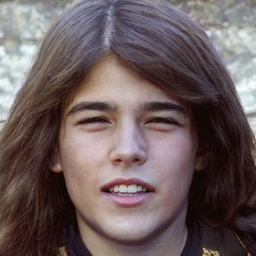} &
        \includegraphics[width=0.16\linewidth]{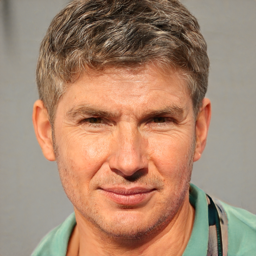} \\

        CMA-MEGA & 
        \includegraphics[width=0.16\linewidth]{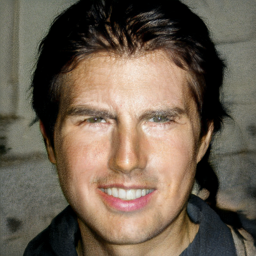} &
        \includegraphics[width=0.16\linewidth]{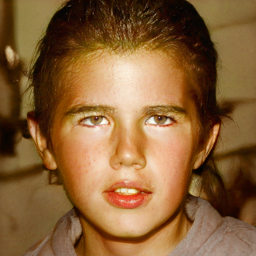} &
        \includegraphics[width=0.16\linewidth]{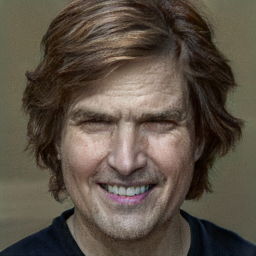} &
        \includegraphics[width=0.16\linewidth]{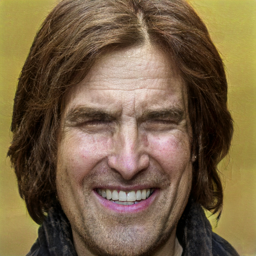} &
        \includegraphics[width=0.16\linewidth]{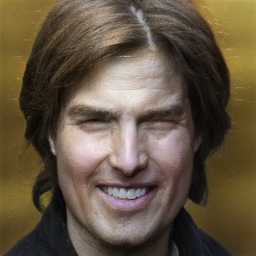} \\

        CMA-MAEGA & 
        \includegraphics[width=0.16\linewidth]{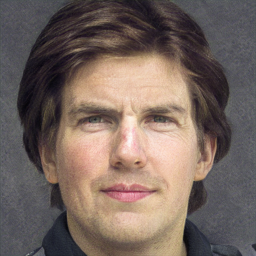} &
        \includegraphics[width=0.16\linewidth]{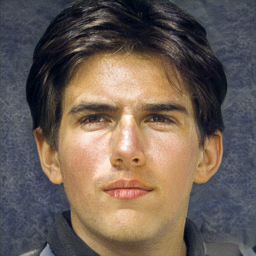} &
        \includegraphics[width=0.16\linewidth]{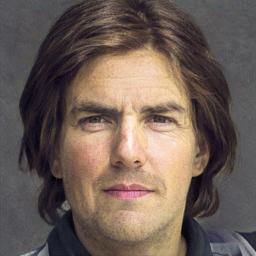} &
        \includegraphics[width=0.16\linewidth]{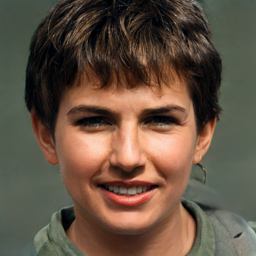} &
        \includegraphics[width=0.16\linewidth]{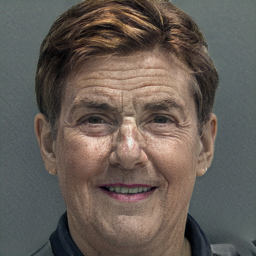} \\
    \end{tabular}
    \caption{Qualitative comparison of SQUAD against two baselines in LSI. Each row corresponds to one algorithm, with five representative samples handpicked from the populations.}
    \label{fig:lsi_images}
\end{figure}

\begin{figure}[h]
    \centering
    \setlength{\tabcolsep}{2pt} %
    \renewcommand{\arraystretch}{1} %
    \begin{tabular}{c c c c c c}
        & \multicolumn{5}{c}{\textbf{Generated Samples}} \\ \\[-0.8em]
        SQUAD &
        \includegraphics[width=0.16\linewidth]{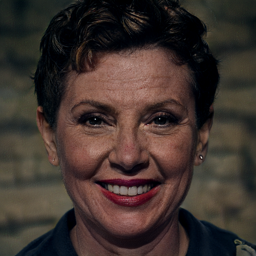} &
        \includegraphics[width=0.16\linewidth]{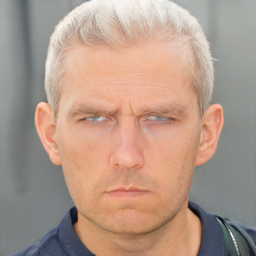} &
        \includegraphics[width=0.16\linewidth]{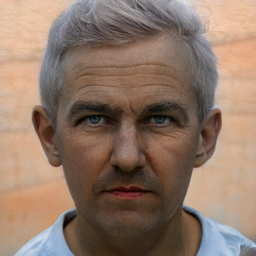} &
        \includegraphics[width=0.16\linewidth]{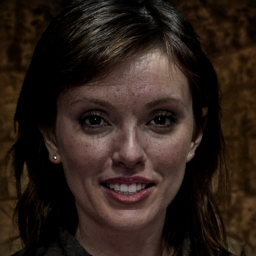} &
        \includegraphics[width=0.16\linewidth]{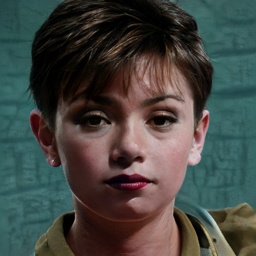} \\
        CMA-MEGA &
        \includegraphics[width=0.16\linewidth]{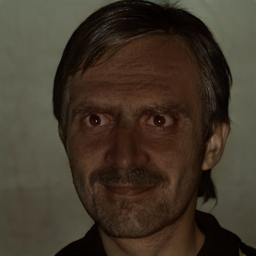} &
        \includegraphics[width=0.16\linewidth]{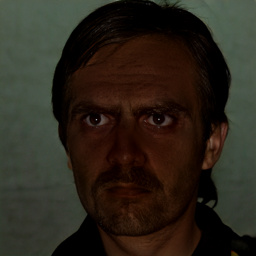} &
        \includegraphics[width=0.16\linewidth]{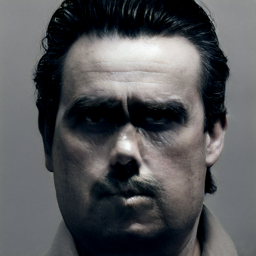} &
        \includegraphics[width=0.16\linewidth]{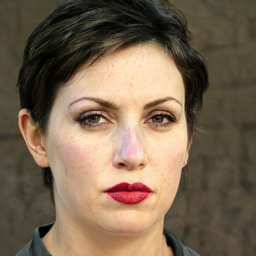} &
        \includegraphics[width=0.16\linewidth]{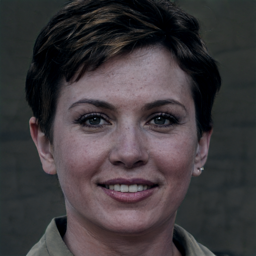} \\

        CMA-MAEGA &
        \includegraphics[width=0.16\linewidth]{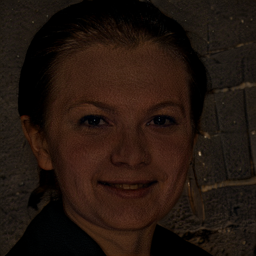} &
        \includegraphics[width=0.16\linewidth]{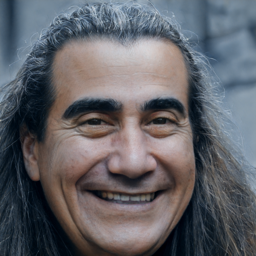} &
        \includegraphics[width=0.16\linewidth]{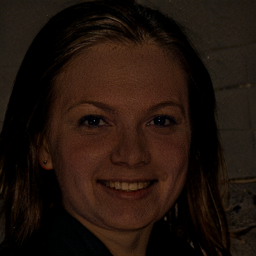} &
        \includegraphics[width=0.16\linewidth]{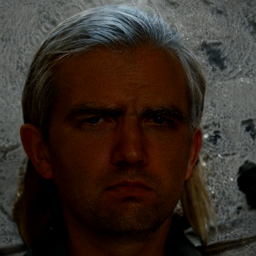} &
        \includegraphics[width=0.16\linewidth]{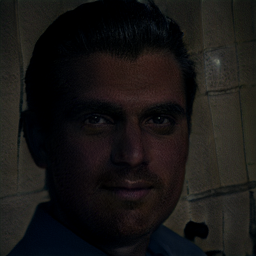} \\
    \end{tabular}
    \caption{Qualitative comparison of SQUAD against two baselines in LSI (hard). Each row corresponds to one algorithm, with five representative samples handpicked from the populations.}
    \label{fig:lsi_hard_images}
\end{figure}

\begin{figure}[h]
    \centering
    \setlength{\tabcolsep}{2pt}
    \renewcommand{\arraystretch}{1}
    \begin{tabular}{>{\centering\arraybackslash}m{1.5cm} *{5}{>{\centering\arraybackslash}m{0.18\linewidth}}}
        & \multicolumn{5}{c}{\textbf{Generated Samples}} \\ \\[-0.8em]
        \rotatebox[origin=t]{90}{SQUAD} & 
        \includegraphics[width=\linewidth]{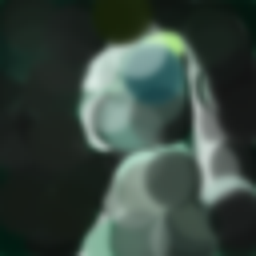} &
        \includegraphics[width=\linewidth]{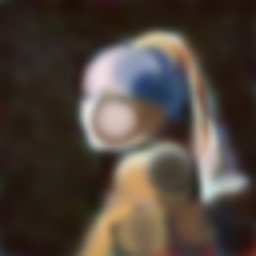} &
        \includegraphics[width=\linewidth]{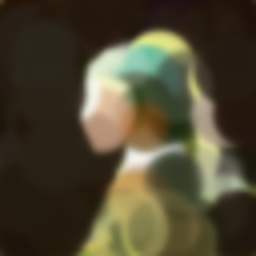} &
        \includegraphics[width=\linewidth]{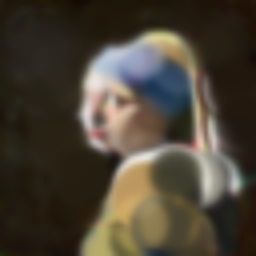} &
        \includegraphics[width=\linewidth]{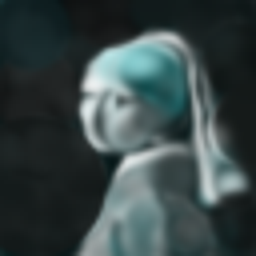} \\

        \rotatebox[origin=t]{90}{CMA-MEGA} & 
        \includegraphics[width=\linewidth]{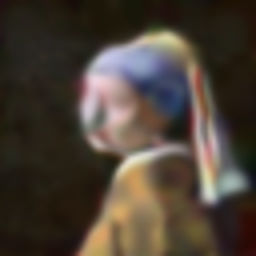} &
        \includegraphics[width=\linewidth]{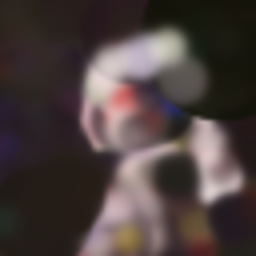} &
        \includegraphics[width=\linewidth]{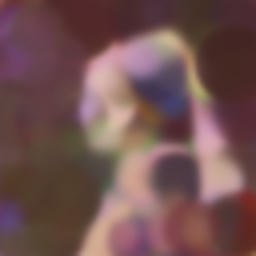} &
        \includegraphics[width=\linewidth]{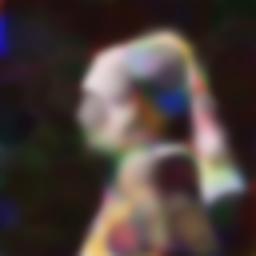} &
        \includegraphics[width=\linewidth]{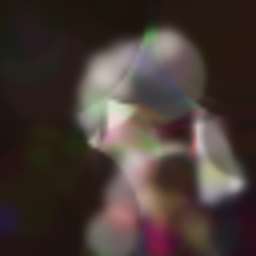} \\

        \rotatebox[origin=t]{90}{CMA-MAEGA} & 
        \includegraphics[width=\linewidth]{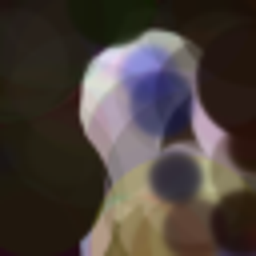} &
        \includegraphics[width=\linewidth]{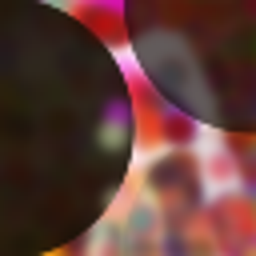} &
        \includegraphics[width=\linewidth]{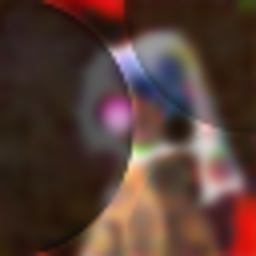} &
        \includegraphics[width=\linewidth]{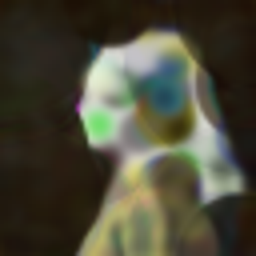} &
        \includegraphics[width=\linewidth]{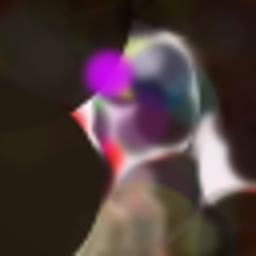} \\
    \end{tabular}
    \caption{Qualitative comparison of SQUAD against two baselines in IC. Each row corresponds to one algorithm, with five representative samples handpicked from the populations.}
    \label{fig:ic_images}
\end{figure}

\begin{figure}[h]
    \centering
    \begin{subfigure}[b]{0.33\textwidth}
        \centering
        \includegraphics[height=7cm]{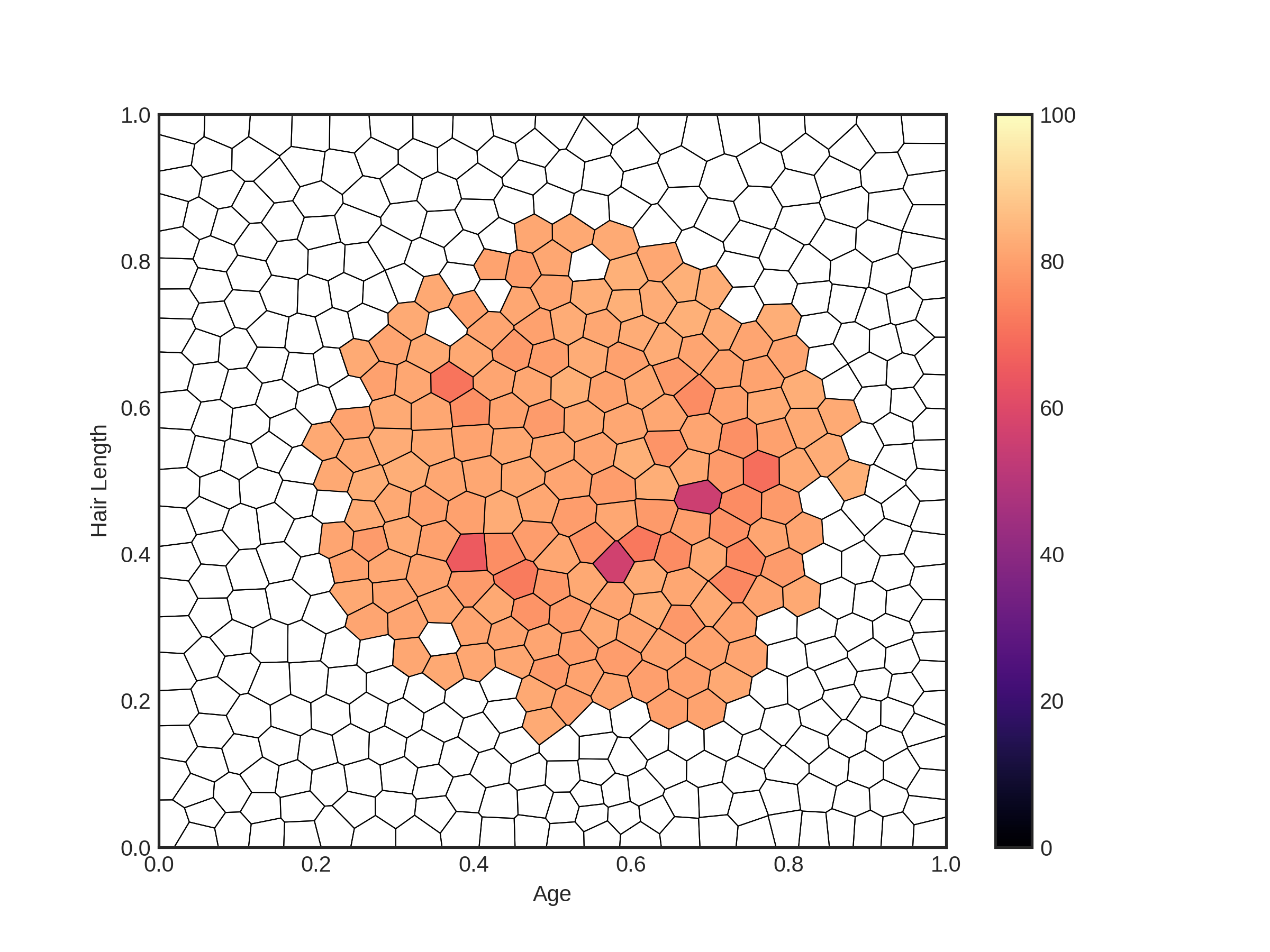}
        \caption{\textbf{SQUAD}}
    \end{subfigure}
    \hfill
    \begin{subfigure}[b]{0.33\textwidth}
        \centering
        \includegraphics[height=7cm]{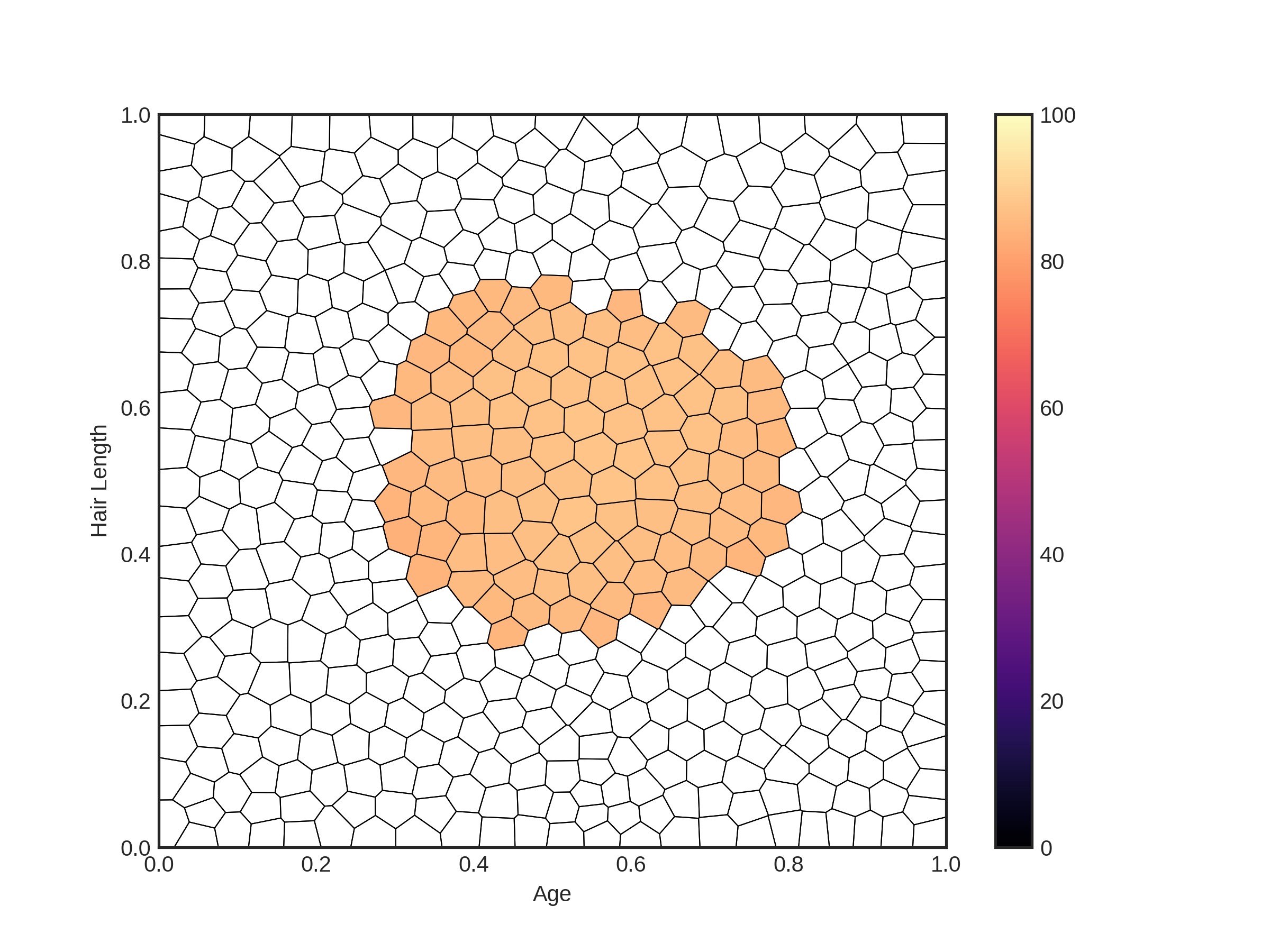}
        \caption{\textbf{CMA-MEGA}}
    \end{subfigure}
    \hfill
    \begin{subfigure}[b]{0.33\textwidth}
        \centering
        \includegraphics[height=7cm]{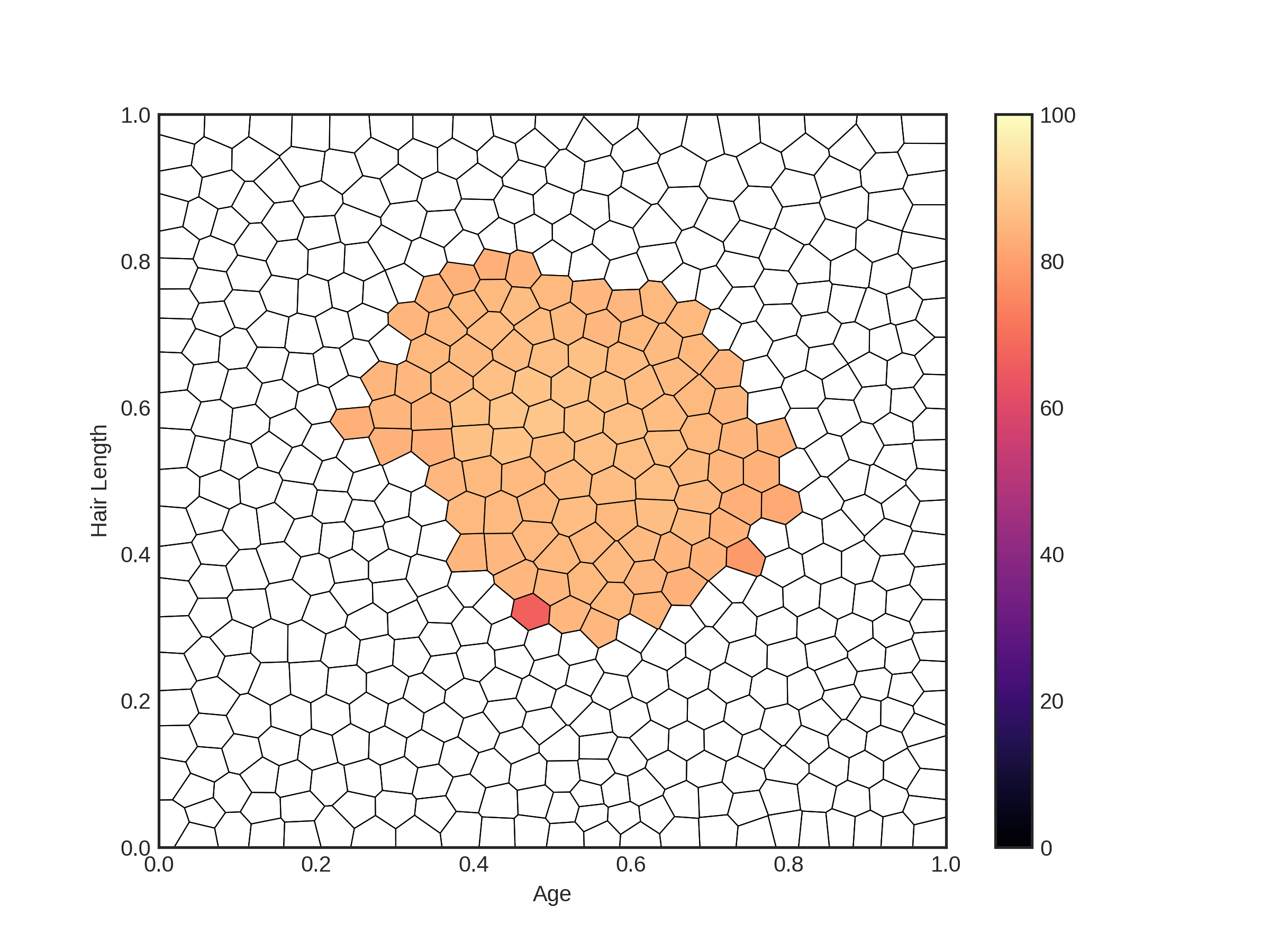}
        \caption{\textbf{CMA-MAEGA}}
    \end{subfigure}
    \caption{Final CVT archives of SQUAD, CMA-MEGA, and CMA-MEGA in the LSI domain.}
    \label{fig:archives}
\end{figure}

\section{Statement on Generative AI Usage}
Generative AI tools were used as an aid to improve clarity and style in the writing of this paper.

\section{Runtime Analysis}
\label{appendix:runtime}

\begin{table}[t]
\centering
\caption{Wall clock runtime (in minutes) of SQUAD and all baselines across the three domains. For Rastrigin we report runtimes for the easy, medium, and hard settings. For LSI we report the base and hard settings.}
\vspace{0.5em}
\begin{tabular}{lccccccc}
\toprule
& \multicolumn{3}{c}{Rastrigin} & IC & \multicolumn{2}{c}{LSI} \\
\cmidrule(lr){2-4} \cmidrule(lr){5-5} \cmidrule(lr){6-7}
Method & Easy & Medium & Hard &  & Base & Hard \\
\midrule
SQUAD & $<$1 & $<$1 & $<$1 & 190 & 732 & 1316 \\
GA-ME & 1 & 3 & 17 & 66 & 403 & 707 \\
CMA-MAEGA & 45 & 78 & 66 & 8 & 222 & 339 \\
CMA-MEGA & 64 & 66 & 106 & 8 & 233 & 337 \\
DNS & 5 & 8 & 10 & 6 & 15 & 25 \\
DNS-G & 3 & 4 & 5 & 50 & 207 & 355 \\
Sep-CMA-MAE & 1 & 3 & 33 & 5 & 8 & 10 \\
\bottomrule
\label{tab:wallclock}
\end{tabular}
\end{table}

Table~\ref{tab:wallclock} summarizes the wall clock runtimes for SQUAD and all baselines on the three domains we consider.
The most influential factor in SQUAD's runtime is the cost of gradient computation.
In the Rastrigin domains this cost is relatively small.
Therefore, even in the hard setting where the behavior space has dimension 16, SQUAD completes all 1000 iterations in under one minute.
In these domains the computational structure is simple and backpropagation is inexpensive, which leads to very fast overall runtime.

In contrast, the IC and LSI domains require gradients that must be backpropagated through significantly more complex computational pipelines.
In IC, each gradient step involves differentiation through a differentiable renderer.
In LSI, gradients pass through both StyleGAN and CLIP, which are large networks and therefore incur substantial computational overhead.
As a result, the wall clock times for SQUAD in these two domains are noticeably higher.

It is important to emphasize that these higher runtimes do not indicate inefficiency of SQUAD.
In fact, SQUAD converges very quickly to high-quality solutions.
Figure~\ref{fig:qd_qvs_over_training} shows the training curves of SQUAD together with the final performance of all baselines in terms of QD Score and QVS in the IC domain.
SQUAD surpasses all baselines in both metrics in fewer than 200 iterations.
Nevertheless, we ran SQUAD for 1000 iterations primarily to ensure an equal evaluation budget across algorithms.
In practice, one can use SQUAD with a far smaller number of iterations and still obtain superior results, which directly reduces the wall clock time below the values reported in the table above.

\begin{figure}[h]
    \centering
    \includegraphics[width=\linewidth]{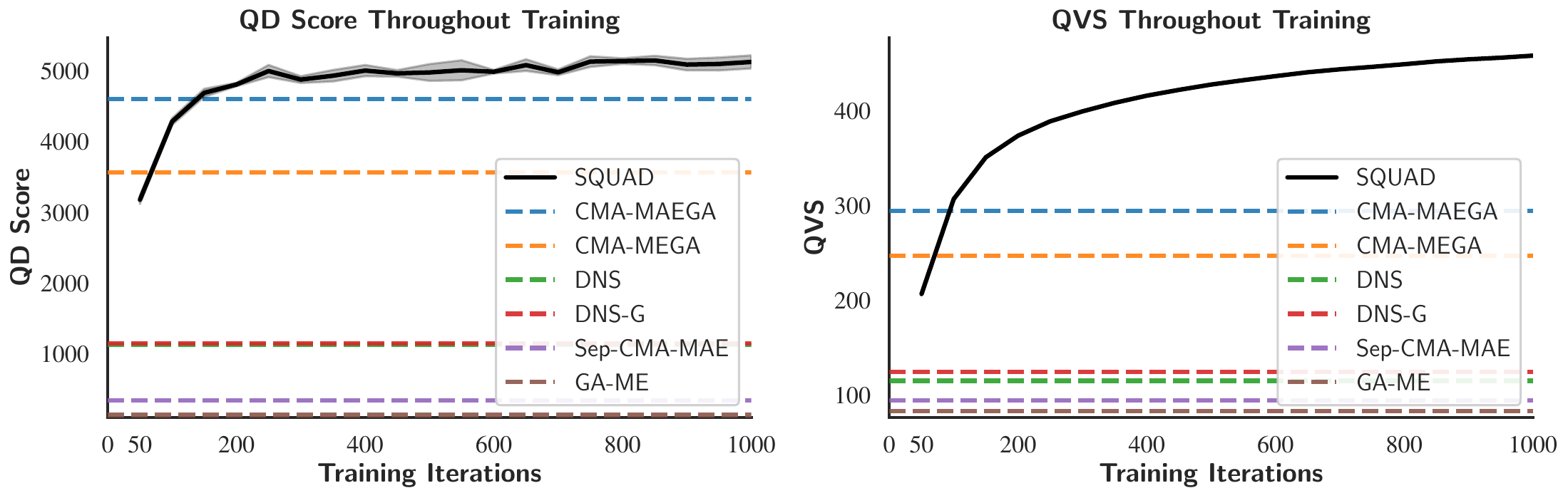}
    \caption{Training curves for SQUAD compared with the final QD Score and QVS values of all baselines in the IC domain. SQUAD exceeds all baselines on both metrics in fewer than 200 iterations.}
    \label{fig:qd_qvs_over_training}
\end{figure}

\end{document}